\documentclass{article}

\usepackage[preprint]{neurips_2026}

\usepackage[utf8]{inputenc} %
\usepackage[T1]{fontenc}    %
\usepackage{hyperref}       %
\usepackage{url}            %
\usepackage{booktabs}       %
\usepackage{amsfonts}       %
\usepackage{nicefrac}       %
\usepackage{microtype}      %
\usepackage{xcolor}         %

\definecolor{darkblue}{rgb}{0, 0, 0.5}
\hypersetup{colorlinks=true, citecolor=darkblue, linkcolor=darkblue, urlcolor=darkblue}

\usepackage{amsmath}
\usepackage{amssymb}
\usepackage{amsthm}
\usepackage{mathrsfs}
\usepackage{times}
\usepackage{latexsym}
\usepackage{amsfonts}
\usepackage{mathrsfs}
\usepackage{mathtools}
\usepackage{mleftright}
\usepackage{xfrac}

\usepackage{tikz}
\usetikzlibrary{matrix,positioning,calc,decorations.pathreplacing}
\usepackage{xcolor}
\usepackage{url}

\newcommand{\LineComment}[1]{\hfill$\triangleright$~#1}

\theoremstyle{plain}
\newtheorem{example}{Example}

\usepackage[capitalize,noabbrev]{cleveref}
\crefformat{section}{\S#2#1#3}
\Crefformat{section}{\S#2#1#3}

\crefformat{subsection}{\S#2#1#3}
\Crefformat{subsection}{\S#2#1#3}

\crefname{equation}{Eq.}{Eqs.}

\usepackage{thm-restate}
\usepackage[capitalize,noabbrev]{cleveref}

\theoremstyle{plain}
\newtheorem{theorem}{Theorem}[section]

\newtheorem{lemma}[theorem]{Lemma}
\newtheorem{corollary}[theorem]{Corollary}

\newtheorem{definition}[theorem]{Definition}

\theoremstyle{remark}

\newenvironment{proofsketch}{\begin{proof}[Proof sketch]}{\end{proof}}

\usepackage{algorithm}
\usepackage[textsize=tiny]{todonotes}
\newcommand*\iftodonotes{\if@todonotes@disabled\expandafter\@secondoftwo\else\expandafter\@firstoftwo\fi}  %

\renewcommand{\cite}[1]{\citep{#1}}
\newcommand{\citeposs}[1]{\citeauthor{#1}'s~(\citeyear{#1})}

\RequirePackage{algorithm}
\RequirePackage{algorithmic}

\colorlet{MacroColor}{black}

\newcommand{\mymacro}[1]{{\color{MacroColor} #1}}

\newcommand{\defn}[1]{\textbf{#1}}

\newcommand{\paroutline}[3][false]{%
    \ifnum\pdfstrcmp{#1}{true}=0
        #3%
    \else
        [\textit{\textcolor{DiverseMagenta}{#2}}] \textcolor{AccentBlue}{#3}%
    \fi
}

\newcommand{\func}{{\mymacro{ f}}}

\newcommand{\alphabet}{{\mymacro{ \Sigma}}}

\newcommand{\kleene}[1]{{\mymacro{#1^*}}}

\newcommand{\str}{{\mymacro{\boldsymbol{y}}}}

\newcommand{\strx}{{\mymacro{\boldsymbol{x}}}}
\newcommand{\stry}{{\mymacro{\boldsymbol{y}}}}

\newcommand{\syma}{{\mymacro{a}}}
\newcommand{\symb}{{\mymacro{b}}}

\newcommand{\precision}[2]{\mymacro{\pi(#1,#2)}}
\newcommand{\upperprecision}[2]{\mymacro{\pi^\ast(#1,#2)}}
\newcommand{\lowerprecision}[2]{\mymacro{\pi_\ast(#1,#2)}}
\newcommand{\setupperprecision}[2]{\mymacro{\widehat{\pi}^\ast(#1,#2)}}
\newcommand{\setuppertailprecision}[2]{\mymacro{\widehat{\tau}^\ast(#1,#2)}}
\newcommand{\setlowerprecision}[2]{\mymacro{\widehat{\pi}_\ast(#1,#2)}}
\newcommand{\setlowertailprecision}[2]{\mymacro{\widehat{\tau}_\ast(#1,#2)}}
\newcommand{\precisionratio}{\mymacro{p_n}}

\newcommand{\intersect}{\mymacro{\mathcal{I}}}
\newcommand{\Pod}{\mymacro{P}}
\newcommand{\podpool}{\mymacro{\mathcal{P}}}
\newcommand{\chain}{\mymacro{C}}
\newcommand{\unavailableset}{\mymacro{U}}

\newcommand{\recall}[2]{\mymacro{\rho(#1,#2)}}
\newcommand{\upperrecall}[2]{\mymacro{\rho^\ast(#1,#2)}}
\newcommand{\lowerrecall}[2]{\mymacro{\rho_\ast(#1,#2)}}
\newcommand{\setupperrecall}[2]{\mymacro{\widehat{\rho}^\ast(#1,#2)}}
\newcommand{\setlowerrecall}[2]{\mymacro{\widehat{\rho}_\ast(#1,#2)}}

\newcommand{\exploration}{\mymacro{R}}
\newcommand{\explorationidx}{\mymacro{r}}
\newcommand{\langs}{\mymacro{S}}
\newcommand{\targetlang}{\mymacro{T}}
\newcommand{\advlang}{\mymacro{A}}
\newcommand{\guesslang}{\mymacro{G}}

\newcommand{\busybeaver}[1]{\mymacro{\mathrm{BB}(#1)}}

\newcommand{\exhaustion}[1]{\mymacro{\boldsymbol{#1}}}

\newcommand{\exhaustionset}[1]{\mymacro{\mathcal{E}(#1)}}
\newcommand{\exhaustioncompset}[1]{\mymacro{\mathcal{E}_{\mathrm{C}}(#1)}}
\newcommand{\exhaustionsetbounded}[1]{\mymacro{\mathcal{E}_{f}(#1)}}
\newcommand{\exhaustionsetenum}[1]{\mymacro{\mathcal{E}_{e}(#1)}}
\newcommand{\lambdarecall}{\mymacro{\lambda_r}}
\newcommand{\lambdaprecision}{\mymacro{\lambda_p}}
\newcommand{\lambdatail}{\mymacro{\lambda_t}}

\newcommand{\coll}{\mymacro{\sT}}

\newcommand{\gen}{\mymacro{\sG}}

\newcommand{\defeq}{\mathrel{\stackrel{\textnormal{\tiny def}}{=}}}

\newcommand{\bigo}{{\mymacro{ \mathcal{O}}}}

\newcommand{\negterm}[1]{{\mymacro{ {\raise.17ex\hbox{$\scriptstyle\sim$}} #1}}}

\newcommand{\ignore}[1]{}
\newcommand{\expandLater}[1]{}

\def\1{\mathbf{1}}

\def\sG{{{\mymacro{ \mathcal{G}}}}}

\def\sT{{{\mymacro{ \mathcal{T}}}}}

\newcommand{\N}{{\mymacro{ \mathbb{N}}}}

\title{Generating in the Limit with Infinitely Many Hallucinations}

\author{%
  Irene Strauss \qquad Alexandra Butoi \qquad Ryan Cotterell\\
  ETH Zürich\\
  \texttt{\{\href{mailto:istrauss@ethz.ch}{istrauss},\href{mailto:abutoi@ethz.ch}{abutoi},\href{mailto:rcotterell@ethz.ch}{rcotterell}\}@ethz.ch}
}

\begin{document}

\maketitle

\begin{abstract}
The classic paradigm of language identification in the limit models learning as a game between an adversary, who reveals strings from an unknown target language, and a learner tasked with identifying that language. The recently introduced framework of language generation in the limit shifted the objective to better reflect modern language modeling, requiring the learner to produce valid, unseen strings from the target language. Related work highlighted a fundamental tension: a broad coverage of the target often comes at the cost of validity.
We introduce a new notion of precision and recast this problem as the classic recall--precision trade-off. We analyze generation in the limit under varying constraints on enumeration, novelty, and validity, aimed at reflecting settings closer to those encountered by large language models. A key contribution is our analysis of learners that are not eventually valid: we allow infinitely many mistakes, provided their frequency tends to zero so that precision remains one. We show that this relaxation can strictly increase recall when the adversary permanently withholds a large portion of the target language. We also study a continuous relaxation of the novelty constraint that requires only a fixed fraction of outputs to be novel. Taken together, our results move toward a more realistic model of language generation where occasional errors and repetitions are unavoidable, but their rates are controlled.\looseness=-1
\end{abstract}

\section{Introduction}

\begin{figure*}[!ht]
\centering
\newcommand{\boundcell}[4]{%
  $\begin{aligned}
    #1\\[-1pt]
    #2
  \end{aligned}$\\[6pt]
  {\normalfont\scriptsize\selectfont #3\strut}%
  \\[-1pt]
  {\normalfont\scriptsize\selectfont #4\strut}%
}

\resizebox{0.9\textwidth}{!}{%
\begin{tikzpicture}[
  font=\small,
  dataw/.style={text width=0.19\linewidth}, %
  datah/.style={minimum height=2.6cm, anchor=center},     %
  lab/.style={
    draw=none,
    align=left,
    font=\bfseries,
    text width=0.10\linewidth,
  },
  head/.style={
    draw=none,
    align=center,
    font=\bfseries,
    text width=0.19\linewidth,
  },
  cell/.style={
    dataw,
    datah,
    draw=white,
    line width=1pt,
    inner sep=5pt,
    align=center,
  },
  existing/.style={fill=blue!10},
  ours/.style={fill=green!10},
]

\matrix (M) [matrix of nodes,
  nodes in empty cells,
  ampersand replacement=\amp,
  row sep=0pt,
  column sep=0pt,
  column 1/.style={nodes={lab}},
  row 1/.style={nodes={head}},
  row 1 column 1/.style={nodes={text width=0.1\linewidth}},
  row 2 column 1/.style={nodes={lab, rotate=90, align=center}},
  row 3 column 1/.style={nodes={lab, rotate=90, align=center, text width=2cm}},
]{
  \amp Partial\amp Full\amp Partial \amp Full\\

  Novelty
  \amp |[cell,existing]| {\boundcell
      {\footnotesize\setlowerprecision{\targetlang}{\exhaustion{\guesslang}} &= 1}
      {\footnotesize\setlowerrecall{\exhaustion{\targetlang}}{\guesslang} &\geq \alpha/2}
      {\Cref{thm:half-lower-bound}}
      {\cite{kleinberg_language_2025}}
    }
  \amp |[cell,existing]| {\boundcell
      {\footnotesize\setlowerprecision{\targetlang}{\exhaustion{\guesslang}} &= 1}
      {\footnotesize\setlowerrecall{\exhaustion{\targetlang}}{\guesslang} &\geq 1/2}
      {\Cref{thm:half-lower-bound}}
      {\cite{kleinberg_language_2025}}
    }
  \amp |[cell,ours]| {\boundcell
      {\footnotesize\setlowerprecision{\targetlang}{\exhaustion{\guesslang}} &= 1}
      {\footnotesize\setlowerrecall{\exhaustion{\targetlang}}{\guesslang} &\ge \hspace{-2pt}
  \left.\begin{smallmatrix}
    {\scriptstyle\max(1{-}\beta,}\\
    {\scriptstyle\alpha/3)}
  \end{smallmatrix}\right.}
      {\Cref{thm:noveltybound}}
      {(ours)}
    }
  \amp |[cell,existing]| {\boundcell
      {\footnotesize\setlowerprecision{\targetlang}{\exhaustion{\guesslang}} &= 1}
      {\footnotesize\setlowerrecall{\exhaustion{\targetlang}}{\guesslang} &\geq 1/2}
      {\Cref{thm:half-lower-bound}}
      {\cite{kleinberg_language_2025}}
    } \\

  $\boldsymbol{\gamma}$-Novelty
  \amp |[cell,ours]| {\boundcell
      {\footnotesize\setlowerprecision{\targetlang}{\exhaustion{\guesslang}} &= 1}
      {\footnotesize\setlowerrecall{\exhaustion{\targetlang}}{\guesslang} &\geq \hspace{-1pt}\alpha}
      {\Cref{thm:bounds-alpha-nov-tail-prec}}
      {(ours)}
    }
  \amp |[cell,ours]| {\boundcell
      {\footnotesize\setlowerprecision{\targetlang}{\exhaustion{\guesslang}} &= 1}
      {\footnotesize\setlowerrecall{\exhaustion{\targetlang}}{\guesslang} &= 1}
      {\Cref{corr:bounds-alpha-nov-perf-tail-prec}}
      {(ours)}
    }
  \amp |[cell,ours]| {\boundcell
      {\footnotesize\setlowerprecision{\targetlang}{\exhaustion{\guesslang}} &= 1}
      {\footnotesize\setlowerrecall{\exhaustion{\targetlang}}{\guesslang} &= 1}
      {\Cref{thm:gamma-noveltybound}}
      {(ours)}
    }
  \amp |[cell,ours]| {\boundcell
      {\footnotesize\setlowerprecision{\targetlang}{\exhaustion{\guesslang}} &= 1}
      {\footnotesize\setlowerrecall{\exhaustion{\targetlang}}{\guesslang} &=1}
      {\Cref{corr:full-alpha-nov-zero-tail-prec}}
      {(ours)}
    } \\
};

\draw[-, line width=0.8pt]
  ($(M-1-2.north west)+(0,0.5ex)$) -- ($(M-1-3.north east)+(0,0.5ex)$)
  node[midway, yshift=2ex, font=\bfseries]%
  {Perfect tail precision};

\draw[-, line width=0.8pt]
  ($(M-1-4.north west)+(0,0.5ex)$) -- ($(M-1-5.north east)+(0,0.5ex)$)
  node[midway, yshift=2ex, font=\bfseries]
  {Relaxed tail precision};
\end{tikzpicture}%
}

\caption{Overview of precision ($\widehat{\pi}_\ast$) and recall ($\widehat{\rho}_\ast$) bounds across settings defined by three constraints: novelty fraction with $\gamma\in [0,1)$ vs. novelty, partial vs. full adversarial exhaustion, and perfect tail precision ($\setlowertailprecision{\targetlang}{\exhaustion{\guesslang}} = 1$) vs. relaxed tail precision ($\setlowertailprecision{\targetlang}{\exhaustion{\guesslang}}$ can be $0$). Existing results are in blue and new results in green. $\targetlang$ is the target language and $\exhaustion{\targetlang}$ is an enumeration of $\targetlang$. The adversary reveals a language $\advlang \subseteq \targetlang$ via an exhaustion $\exhaustion{\advlang}$. The learner outputs a guess language $\guesslang$ via the exhaustion $\exhaustion{\guesslang}$. In the full exhaustion setting, the adversary is restricted to revealing full exhaustions; in the partial setting, it may reveal a partial exhaustion. $\exhaustion{\advlang}$ is characterized by lower and upper recall bounds $\setlowerrecall{\exhaustion{\targetlang}}{\advlang} \geq \alpha$ and $\setupperrecall{\exhaustion{\targetlang}}{\advlang} \leq \beta$.
}
\label{fig:bounds-summary}
\vspace{-1em}
\end{figure*}
Language identification in the limit is an influential learning paradigm \citep{GOLD1967447,angluin1980inductive}.
In the classic Gold--Angluin model, a learner observes an infinite stream of strings from a target language $\targetlang$, where $\targetlang$ is assumed to belong to a countable collection of languages $\coll$.
At each timestep, the learner outputs a hypothesis in $\coll$; it succeeds if there exists some finite (but unknown) time after which its hypotheses stabilize to the correct language $\targetlang$.
In full generality, identification in this setting is impossible for most interesting collections of languages, i.e., even for regular languages. \citet{angluin1980inductive} gave a characterization of the countable collections $\coll$ that allow identification.
In recent work, \citet{kleinberg2024language} introduced the paradigm of language \emph{generation} in the limit, inspired by modern language models.
Again, the learner is given access to a countable collection of languages $\coll$. Instead of identifying the target language $\targetlang$, it must produce, after some finite timestep, only \emph{valid} (belonging to $\targetlang$), previously unseen strings.
The authors showed that generation in the limit is tractable for any countable collection of languages $\coll$.\looseness=-1

While the algorithm of \citet{kleinberg2024language} ensures validity, it leaves the question of \emph{coverage} unexplored, i.e., how much of $\targetlang$ is generated.
A learner can satisfy the validity requirement by generating from some arbitrarily small, albeit still infinite, subset of $\targetlang$.
In doing so, the learner avoids errors but fails to cover most of $\targetlang$.
Subsequent work formalized this tension, showing a trade-off between coverage and validity \cite{charikar_characterization_2025} and establishing that achieving full coverage is as hard as language identification \cite{kalavasis_characterizations_2025}.
Recently, \citet{kleinberg_density_2025} introduced a notion of \emph{density}. In follow-up work \citep{kleinberg_language_2025}, they prove that, in a partial enumeration setting, i.e., when the learner is shown strings from a subset of the target language $\targetlang$ with density $\alpha$, a language can be generated with density $\sfrac{\alpha}{2}$.

We recast this problem as a recall--precision problem, starting from the observation that the coverage from previous work is equivalent to recall. We introduce a complementary notion of precision that quantifies the learner's error rate, better reflecting modern language-modeling settings where occasional errors are unavoidable and the key question is their rate. Because the standard definitions of precision and recall are ill-defined for infinite sets, we define them via exhaustions, i.e., sequences of finite sets that approximate an infinite set. We also formalize the validity requirement by introducing a notion that we call tail precision.\looseness=-1

We derive precision and recall bounds across a range of settings (summarized in \cref{fig:bounds-summary}), obtained by varying:
(i) the fraction of strings that must be novel,
(ii) whether the adversary reveals a full or only a partial exhaustion of the target language, and
(iii) whether the learner is required to be eventually valid (tail precision of one).
Several of these settings have not been studied previously. 
Our main contributions and improvements over prior work are as follows. First, in the partial enumeration setting with novelty and perfect tail precision, the best known recall bound is due to \citet{kleinberg_language_2025}. We show that relaxing tail precision---allowing hallucinations at a vanishing rate---strictly improves recall whenever the adversary is sufficiently sparse. In the most extreme case, where the adversary reveals almost nothing, our algorithm achieves coverage approaching one, while the no-hallucination bound approaches zero. Second, we introduce a continuous novelty parameter that interpolates between the strict novelty constraint and the no-novelty setting, and give an algorithm with precision one and recall guarantees across the full spectrum. This yields a new setting not studied in prior work: any fixed allowance of non-novel outputs ($\gamma < 1$) allows generation in the limit with recall one and precision one. As a technical contribution enabling these results, we generalize the algorithm of \citet{kleinberg_language_2025} to a batched setting where the adversary can reveal multiple strings per round.

\section{Approximate language learning}

\subsection{Preliminaries}

An \defn{alphabet} is a finite, non-empty set $\alphabet$. A \defn{string}, denoted by $\str$ or $\strx$, is a finite sequence of symbols. We write $\N \defeq \{1,2,\ldots\}$ for the positive integers and $\N_0 \defeq \{0,1,2,\ldots\}$ for the non-negative integers.
The Kleene closure of an alphabet $\kleene{\alphabet}$ is the set of all strings over $\alphabet$.
A \defn{language} $\langs$ is a subset of $\kleene{\alphabet}$. We write $(\kleene{\alphabet})^{<\omega}$ for the set of all finite subsets of $\kleene{\alphabet}$.
We call a language \defn{recursively enumerable} if there exists an algorithm (Turing machine) that lists exactly its elements (possibly with repeats), possibly never halting. An \defn{exhaustion} is a sequence of finite sets
$\exhaustion{\langs} \defeq \{\langs_n\}_{n=0}^\infty$ such that
$\langs_0=\emptyset$ and $\langs_n \subseteq \langs_{n+1}$ for all $n$. We define $\Delta \langs_n \defeq \langs_n \setminus \langs_{n-1}$. We write $\langs_n \uparrow \langs \defeq \bigcup_{n=0}^\infty \langs_n$ to denote the limit of an exhaustion. 
When the limit $\langs$ is of relevance, we call $\exhaustion{\langs}$ an exhaustion \emph{of} $\langs$. 
An \defn{enumeration} of a language $\langs$ is an exhaustion $\exhaustion{\langs}=\{\langs_n\}_{n=0}^\infty$ of $\langs$ with the property that $|\Delta \langs_{n}| = 1$ for all $n \in \N$. Given a function $\func\colon \N \to \N_0$ with $\lim_{N\to\infty} \sum_{n=1}^N \func(n) = \infty$, an \defn{$\func$-bounded} exhaustion is an exhaustion $\exhaustion{\langs} = \{\langs_n\}_{n=0}^\infty$ of $\langs$ with the property that $|\Delta \langs_{n}| \leq \func(n)$ for all $n \in \N$. If $\func(n) = 1$, we call the exhaustion a \defn{single-step} exhaustion. If $\func(n) = c$, for a constant $c$, we call the exhaustion a \defn{$c$-step} exhaustion. 
Given a language $\langs$, we write $\exhaustionset{\langs}$ to denote the set of all exhaustions of $\langs$, $\exhaustionsetenum{\langs}$ to denote the set of all enumerations of $\langs$, and $\exhaustionsetbounded{\langs}$ to denote the set of all $\func$-bounded exhaustions of $\langs$.

\subsection{Approximate language learning}
In this paper, we are interested in understanding how close one language is to another. 
However, languages are, in general, \emph{infinite} sets.
Consequently, cardinalities do not distinguish between small and large numbers of disagreements. Moreover, since $\kleene{\alphabet}$ is infinite, it admits no uniform distribution. Thus, the error rate $\Pr_{\stry \sim D}[\stry \in \langs \setminus \langs']$, for two languages $\langs, \langs' \subseteq \kleene{\alphabet}$, is not well defined without choosing a particular distribution $D$ over $\kleene{\alphabet}$.\looseness=-1
 
\paragraph{A warm-up.} We first consider \emph{finite} languages as a special case.
Let $\targetlang\subset\kleene{\alphabet}$ be a finite \underline{t}arget language and let $\guesslang\subset \kleene{\alphabet}$ be a finite \underline{g}uess language. Under the standard definitions \citep{manning2008introduction}, the \defn{precision} of $\guesslang$
with respect to $\targetlang$ is the fraction of predicted strings that are correct, and the
\defn{recall} is the fraction of ground truth positives (strings in $\targetlang$) that are recovered, 
\vspace{-7pt}
\par\noindent
\begin{minipage}[t]{0.48\linewidth}
\begin{equation}\label{eq:fin-precision}
\precision{\targetlang}{\guesslang} \defeq \frac{|\targetlang \cap \guesslang|}{|\guesslang|},
\end{equation}
\end{minipage}\hfill
\begin{minipage}[t]{0.48\linewidth}
\begin{equation}\label{eq:fin-recall}
\recall{\targetlang}{\guesslang} \defeq \frac{|\targetlang \cap \guesslang|}{|\targetlang|}.
\end{equation}
\end{minipage}
\par\smallskip
The ratios used in precision and recall are not directly applicable to infinite sets since $\frac{\infty}{\infty}$ is not well-defined in our setting.
Hence, we will make use of exhaustions to evaluate performance on increasing finite sets.
Intuitively, an exhaustion of $\guesslang$ models a learner's successive predictions, while an exhaustion of $\targetlang$ corresponds to the finite subsets of the target language on which we assess the learner. For the rest of the paper, we assume $\guesslang$ and $\targetlang$ to be infinite.\looseness=-1

\paragraph{Exhaustion-level precision.}
Let $\exhaustion{\targetlang} = \{\targetlang_n\}_{n=0}^\infty $ be an exhaustion of the target language $\targetlang$, called a \defn{target exhaustion}, and let $\exhaustion{\guesslang} = \{\guesslang_n\}_{n=0}^\infty$ be an exhaustion of the guess language $\guesslang$, called \defn{guess exhaustion}. 
Then, define the \defn{upper precision} $\pi^\ast$ and the \defn{lower precision} $\pi_\ast$ as\looseness=-1
\vspace{-7pt}
\par\noindent
\begin{minipage}[t]{0.48\linewidth}
\begin{equation}
\upperprecision{\exhaustion{\targetlang}}{\exhaustion{\guesslang}}
\defeq \limsup_{n\to\infty} \frac{|\targetlang_n \cap \guesslang_n|}{|\guesslang_n|},
\end{equation}
\end{minipage}\hfill
\begin{minipage}[t]{0.48\linewidth}
\begin{equation}
\lowerprecision{\exhaustion{\targetlang}}{\exhaustion{\guesslang}}
\defeq  \liminf_{n\to\infty} \frac{|\targetlang_n \cap \guesslang_n|}{|\guesslang_n|}.
\end{equation}
\end{minipage}
\par\smallskip
When the upper and lower precision coincide, i.e.,
$\lowerprecision{\exhaustion{\targetlang}}{\exhaustion{\guesslang}}=\upperprecision{\exhaustion{\targetlang}}{\exhaustion{\guesslang}}$, we define the \defn{precision} simply as $\precision{\exhaustion{\targetlang}}{\exhaustion{\guesslang}}
\defeq \lim_{n\to\infty} \frac{|\targetlang_n \cap \guesslang_n|}{|\guesslang_n|}$.
We show that precision does not always exist in \cref{ex:precision-no-limit}.

\paragraph{Exhaustion-level recall.}
Analogously to precision, we define the \defn{upper recall} $\rho^\ast$ and \defn{lower recall} $\rho_\ast$ as\looseness=-1
\vspace{-7pt}
\par\noindent
\begin{minipage}[t]{0.48\linewidth}
\begin{equation}
\upperrecall{\exhaustion{\targetlang}}{\exhaustion{\guesslang}}
\defeq \limsup_{n\to\infty}\frac{|\targetlang_n\cap \guesslang_n|}{|\targetlang_n|},
\end{equation}
\end{minipage}\hfill
\begin{minipage}[t]{0.48\linewidth}
\begin{equation}
\lowerrecall{\exhaustion{\targetlang}}{\exhaustion{\guesslang}}
\defeq \liminf_{n\to\infty}\frac{|\targetlang_n\cap \guesslang_n|}{|\targetlang_n|}.
\end{equation}
\end{minipage}
\par\smallskip
When these coincide, we define the \defn{recall} as $\recall{\exhaustion{\targetlang}}{\exhaustion{\guesslang}}
\defeq \lim_{n\to\infty}\frac{|\targetlang_n\cap \guesslang_n|}{|\targetlang_n|}$.
Just like precision, recall does not always exist (see \Cref{ex:recall-no-limit}).

\paragraph{Membership-based precision.} The precision notions above compare two \emph{exhaustions}. Validity, however, depends on membership in $\targetlang$ rather than membership in a particular finite prefix $\targetlang_n$ of some exhaustion. To eliminate this arbitrary dependence on how $\targetlang$ is exhausted, we define precision based on membership in the target language by taking
the supremum over all exhaustions of $\targetlang$. Since the limit need not exist, we use the lower and upper variants:
\vspace{-7pt}
\par\noindent
\begin{minipage}[t]{0.48\linewidth}
\begin{equation}
\setlowerprecision{\targetlang}{\exhaustion{\guesslang}}
\defeq
\sup_{\exhaustion{\targetlang}\in\exhaustionset{\targetlang}}
\lowerprecision{\exhaustion{\targetlang}}{\exhaustion{\guesslang}},
\end{equation}
\end{minipage}\hfill
\begin{minipage}[t]{0.48\linewidth}
\begin{equation}
\setupperprecision{\targetlang}{\exhaustion{\guesslang}}
\defeq
\sup_{\exhaustion{\targetlang}\in\exhaustionset{\targetlang}}
\upperprecision{\exhaustion{\targetlang}}{\exhaustion{\guesslang}}.
\end{equation}
\end{minipage}
\par\smallskip
We primarily work with $\setlowerprecision{\targetlang}{\exhaustion{\guesslang}}$ as a
robust guarantee; all statements have direct analogues for the upper precision.
As we show next, this definition coincides with the intuitive membership-based notion: it equals the lower
asymptotic fraction of guess strings that lie in $\targetlang$.
\begin{restatable}{theorem}{TheoremLowerPrecisionExhaustion}\label{th:lower-precision-bounded-exhaustions} Let $\func(n) \colon \N \to \N$ be a function with $\lim_{N\to \infty}\sum_{n=1}^N \func(n) = \infty$. Let $\targetlang\subseteq \kleene{\alphabet}$ be the target language and let $\exhaustion{\guesslang} = \{\guesslang_n\}_{n=0}^\infty$ be an $\func$-bounded exhaustion of the guess $\guesslang$. Then, 
\begin{equation}\label{eq:best-lower-precision-exhaustion}
\sup_{\exhaustion{\targetlang}\in\exhaustionsetbounded{\targetlang}} \lowerprecision{\exhaustion{\targetlang}}{\exhaustion{\guesslang}} = \liminf_{n\to\infty}\frac{|\targetlang\cap \guesslang_n|}{|\guesslang_n|}.
\end{equation}
\end{restatable}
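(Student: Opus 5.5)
We prove the two inequalities separately. Write $L \defeq \liminf_{n\to\infty}\frac{|\targetlang\cap \guesslang_n|}{|\guesslang_n|}$.

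\emph{Upper bound ($\le$).} For every exhaustion $\exhaustion{\targetlang}$ of $\targetlang$ and every $n$, we have $\targetlang_n\subseteq\targetlang$. Hence
\begin{equation*}
|\targetlang_n\cap\guesslang_n|\le|\targetlang\cap\guesslang_n|,
\end{equation*}
so $\frac{|\targetlang_n\cap \guesslang_n|}{|\guesslang_n|}\le \frac{|\targetlang\cap \guesslang_n|}{|\guesslang_n|}$ termwise. Taking $\liminf$ gives $\lowerprecision{\exhaustion{\targetlang}}{\exhaustion{\guesslang}}\le L$. The supremum over $\exhaustionsetbounded{\targetlang}$ is therefore at most $L$. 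Terms with $|\guesslang_n|=0$ occur only at finitely many initial indices, because $\guesslang$ is infinite, so they do not affect the $\liminf$.

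\emph{Lower bound ($\ge$).} Here we construct a single $\func$-bounded exhaustion $\exhaustion{\targetlang}$ of $\targetlang$ with $\lowerprecision{\exhaustion{\targetlang}}{\exhaustion{\guesslang}}=L$. The idea is to let $\targetlang_n$ absorb the elements of $\targetlang\cap\guesslang_n$ as quickly as the budget $\func$ allows, while still eventually covering all of $\targetlang$.

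Fix an enumeration $t_1,t_2,\dots$ of $\targetlang$ (countable as a subset of $\kleene{\alphabet}$). Maintain a priority queue. At step $n$, first append the newly available elements of $(\targetlang\cap\guesslang_n)\setminus\targetlang_{n-1}$ to the queue, then append $t_n$ if it is not already present. Set $\targetlang_n$ to be $\targetlang_{n-1}$ plus the first $\min(\func(n),\text{queue length})$ queued elements, in FIFO order. Since $\func(n)\ge1$, every queued element is eventually added, so $\targetlang_n\uparrow\targetlang$, and $|\Delta\targetlang_n|\le\func(n)$ by construction.

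\emph{Main obstacle: the lag.} We must show that the backlog $|(\targetlang\cap\guesslang_n)\setminus\targetlang_n|$ is $o(|\guesslang_n|)$. Two features help:
\begin{itemize}
\item[(i)] $|\Delta\guesslang_n|\le\func(n)$, so the new target elements arriving from $\guesslang$ fit exactly within the per-step budget;
\item[(ii)] the filler elements $t_n$ must not crowd them out.
\end{itemize}
To handle (ii), I would give strict priority to elements of $\guesslang_n\cap\targetlang$. At each step, first add all of $\Delta\guesslang_n\cap\targetlang$ not yet present; this costs at most $|\Delta\guesslang_n|\le\func(n)$. Only if budget remains, add the next filler $t_k$ not yet present.

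With this rule, $\targetlang\cap\guesslang_n\subseteq\targetlang_n$ holds exactly for every $n$, so the lag is zero. Consequently
\begin{equation*}
|\targetlang_n\cap\guesslang_n|=|\targetlang\cap\guesslang_n|\quad\text{for all } n,
\end{equation*}
and the $\liminf$ equals $L$.

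It remains to check that every filler is eventually added. If infinitely many steps have leftover budget, each $t_k$ eventually enters. Otherwise, from some point on every step adds exactly $\func(n)$ new elements from $\guesslang\cap\targetlang$. In that case I would interleave: reserve a leftover slot at sparse steps, for instance step $2^j$, and replace one guess element there, deferring it by one step. This creates a backlog of at most $O(\log n)$ elements. Since $|\guesslang_n|\to\infty$, that backlog is $o(|\guesslang_n|)$, which does not affect the $\liminf$.

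Ensuring this coverage while keeping the lag sublinear is the step I expect to need the most care. The simplest fallback is to allow a slowly growing deferred set, of size at most $\log n$, and absorb it in the $\liminf$.
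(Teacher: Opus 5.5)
Your argument is correct in outline and uses the same two ingredients as the paper, but you split the cases differently. The paper splits on whether the error count $H_n=\sum_{i\le n}|\Delta\guesslang_i\setminus\targetlang|$ diverges.
\begin{itemize}
\item If $H_n\to\infty$, it sets $\targetlang^\star_n=(\targetlang\cap\guesslang_n)\cup Z_{H_n}$. This spends exactly the $h_n$ error slots on fillers from $\targetlang\setminus\guesslang$, which is the same exact-equality idea as your strict-priority rule.
\item If $H_n$ stays bounded, it first notes that the right-hand side equals $1$. It then writes down the explicit exhaustion $Y_{m_n-\explorationidx(m_n)}\cup Z_{\explorationidx(m_n)}$, with the sparse set indexed by $m_n=|\guesslang_n|$.
\end{itemize}
You split instead on whether leftover budget occurs infinitely often. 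Your good case contains the paper's Case~1 and gives exact equality there. Your saturated case is a sub-case of the paper's Case~2, since there $\Delta\guesslang_n\subseteq\targetlang$ for $n\ge n_0$. What your route buys is that you never need the observation that the limit inferior is $1$. What the paper's route buys is that indexing by $|\guesslang_n|$ needs no growth property of $|\guesslang_n|$.

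Two steps in your saturated case need tightening before the proof is complete.

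\textbf{Draining the backlog.} A deferred element cannot literally be placed ``one step later'', because every subsequent step is again exactly full, so the backlog never drains. You must process it oldest-first (FIFO). Then the backlog $Q_n$ satisfies $|Q_n|\le|Q_{n-1}|+\mathbf{1}[n\text{ reserved}]$. Every unreserved step pops at least one element, since $\func(n)\ge1$, so every deferred string eventually enters. Without such a rule, for instance with newest-first, a deferred string can stay out forever and $\targetlang_n\uparrow\targetlang$ fails.

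\textbf{Why the backlog is negligible.} The inference ``$|\guesslang_n|\to\infty$, hence $O(\log n)=o(|\guesslang_n|)$'' is invalid, since $|\guesslang_n|$ may grow slower than $\log n$. For the same reason your proposed general fallback of a $\log n$-size deferred set would fail. What actually saves you is that in the saturated regime $|\Delta\guesslang_n|=\func(n)\ge1$ for all $n\ge n_0$, so $|\guesslang_n|\ge n-n_0$. Say this explicitly, or index the reserved steps by $|\guesslang_n|$ as the paper does.
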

The proof is given in \cref{app:membership-precision}.

\paragraph{Coverage-based recall.}
For recall, we are interested in understanding how many strings in the target language $\targetlang$ the learner actually recovers. 
Hence, we check for each string in $\targetlang$ if it is included in $\guesslang$. This is achieved by taking the supremum over $\guesslang$. 
We define
\vspace{-7pt}
\par\noindent
\begin{minipage}[t]{0.48\linewidth}
\begin{equation}\label{eq:recall1}
\setlowerrecall{\exhaustion{\targetlang}}{\guesslang} = \sup_{\exhaustion{\guesslang}\in\exhaustionset{\guesslang}} \lowerrecall{\exhaustion{\targetlang}}{\exhaustion{\guesslang}},
\end{equation}
\end{minipage}\hfill
\begin{minipage}[t]{0.48\linewidth}
\begin{equation}\label{eq:recall2}
\setupperrecall{\exhaustion{\targetlang}}{\guesslang} = \sup_{\exhaustion{\guesslang}\in\exhaustionset{\guesslang}} \upperrecall{\exhaustion{\targetlang}}{\exhaustion{\guesslang}}.
\end{equation}
\end{minipage}
\par\smallskip
This definition assumes that we are given a fixed target exhaustion $\exhaustion{\targetlang}$. Again, we are mostly concerned with the lower recall as a robust guarantee; the following lemma holds analogously for upper recall.
\begin{restatable}{lemma}{LemmaLowerRecall}\label{lem:lower-recall}
Let $\targetlang\subseteq \kleene{\alphabet}$ be the target language with an exhaustion $\exhaustion{\targetlang} = \{\targetlang_n\}_{n=0}^\infty$ and let $\guesslang$ be the guess language. Then, 
\begin{equation}\label{eq:best-lower-recall}
\sup_{\exhaustion{\guesslang}\in\exhaustionset{\guesslang}} \lowerrecall{\exhaustion{\targetlang}}{\exhaustion{\guesslang}} = \liminf_{n\to\infty}\frac{|\targetlang_n\cap \guesslang|}{|\targetlang_n|}.
\end{equation}
\end{restatable}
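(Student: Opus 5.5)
The proof splits into the two inequalities $\le$ and $\ge$. Write $L \defeq \liminf_{n\to\infty}\frac{|\targetlang_n\cap \guesslang|}{|\targetlang_n|}$. Throughout, $\targetlang_n$ is nonempty for large $n$ because $\targetlang$ is infinite, so the ratios are well defined eventually.

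\emph{Upper bound.} Take any exhaustion $\exhaustion{\guesslang}=\{\guesslang_n\}\in\exhaustionset{\guesslang}$. Since $\guesslang_n\subseteq\guesslang$, we have $|\targetlang_n\cap\guesslang_n|\le|\targetlang_n\cap\guesslang|$ termwise. Taking $\liminf$ gives $\lowerrecall{\exhaustion{\targetlang}}{\exhaustion{\guesslang}}\le L$, and taking the supremum over $\exhaustion{\guesslang}$ yields ``$\le$''.

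\emph{Lower bound.} It suffices to exhibit a single exhaustion of $\guesslang$ that attains $L$. The main obstacle is that $\guesslang_n$ must be finite, whereas $\targetlang_n\cap\guesslang$ is finite but in general not contained in any prescribed finite piece of $\guesslang$. I would use the following construction. Fix an enumeration $g_1,g_2,\ldots$ of $\guesslang$, which is countable as a subset of $\kleene{\alphabet}$, and set
\begin{equation*}
\guesslang_0\defeq\emptyset,\qquad \guesslang_n \defeq (\targetlang_n\cap\guesslang)\cup\{g_1,\ldots,g_n\}\quad (n\ge 1).
\end{equation*}
Each $\guesslang_n$ is finite, because $\targetlang_n$ is finite. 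The sequence is nested, since both $\targetlang_n$ and $\{g_1,\ldots,g_n\}$ increase with $n$. Its union is $\guesslang$: every $\guesslang_n\subseteq\guesslang$, and every $g_k$ lies in $\guesslang_k$. Hence $\exhaustion{\guesslang}\in\exhaustionset{\guesslang}$.

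For this exhaustion, $\targetlang_n\cap\guesslang_n\supseteq\targetlang_n\cap\guesslang$, and combined with the reverse inclusion from the upper-bound step we get equality. Therefore the recall ratios coincide termwise with those defining $L$, so $\lowerrecall{\exhaustion{\targetlang}}{\exhaustion{\guesslang}}=L$, which gives ``$\ge$''.

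The same construction attains the value termwise, so the identical argument with $\limsup$ proves the upper-recall analogue.
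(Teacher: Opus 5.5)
Your proof is correct and takes essentially the same approach as the paper. The upper bound uses the same termwise comparison. For the lower bound, your exhaustion $(\targetlang_n\cap\guesslang)\cup\{g_1,\dots,g_n\}$ is the paper's $(\guesslang\cap\targetlang_n)\cup Y_n$, with $\{g_1,\dots,g_n\}$ as the particular choice of the exhaustion $Y_n$ of $\guesslang$.
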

The proof is given in \cref{app:coverage-recall}.\looseness=-1

\paragraph{Tail precision.}
The precision notions above are global: they measure the fraction of correct strings among
\emph{all} strings in $\guesslang_n$. Now we define \defn{tail precision}, which considers only the increment $\Delta \guesslang_n$. 

Let $\exhaustion{\guesslang}=\{\guesslang_n\}_{n= 0}^\infty$ be an exhaustion.
For each $n$, define the step-wise tail precision $t_n$ as
\begin{equation}
t_n \defeq 
\begin{cases}
1, & \text{if } |\Delta \guesslang_n|=0,\\
\frac{|\targetlang\cap \Delta\guesslang_n|}{|\Delta\guesslang_n|}, & \text{otherwise.}
\end{cases}
\end{equation}

\begin{definition}
    Given a target language $\targetlang$ and a guess language $\guesslang$ with exhaustion $\exhaustion{\guesslang}$, we define the \defn{lower tail precision} and \defn{upper tail precision} as
    \vspace{-7pt}
    \par\noindent
\begin{minipage}[t]{0.48\linewidth}
\begin{equation}
\setlowertailprecision{\targetlang}{\exhaustion{\guesslang}}
\defeq \liminf_{n\to\infty} t_n,
\end{equation}
\end{minipage}\hfill
\begin{minipage}[t]{0.48\linewidth}
\begin{equation}
\setuppertailprecision{\targetlang}{\exhaustion{\guesslang}}
\defeq \limsup_{n\to\infty} t_n.
\end{equation}
\end{minipage}
\end{definition}
Tail precision measures the asymptotic fraction of newly added strings that are in $\targetlang$.

\begin{restatable}{proposition}{PropositionConvergenceTailPrecision}\label{prop:tail-prec-finite-time-conv}
    Let $\targetlang$ be the target language and $\exhaustion{\guesslang}$ a single-step exhaustion of $\guesslang$. Then, $\setlowertailprecision{\targetlang}{\exhaustion{\guesslang}}$ is either $0$ or $1$. Moreover, for any $c$-step exhaustion $\exhaustion{\guesslang}$, $\setlowertailprecision{\targetlang}{\exhaustion{\guesslang}}$ is attained after finitely many steps.\footnote{This means that there exists a timestep $n < \infty$, such that the step-wise tail precision $t_n = \setlowertailprecision{\targetlang}{\exhaustion{\guesslang}}$ and $t_N \geq t_n$ for $N\geq n$.}
\end{restatable}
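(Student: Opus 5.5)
The plan rests on one observation: for a $c$-step exhaustion, each step-wise tail precision $t_n$ takes values in a fixed finite set. If $|\Delta \guesslang_n| = 0$ then $t_n = 1$. Otherwise $|\Delta\guesslang_n| = k$ for some $k \in \{1,\dots,c\}$, and $t_n = j/k$ with $0 \le j \le k$. So every $t_n$ lies in
\begin{equation*}
Q_c \defeq \left\{ \tfrac{j}{k} : 1 \le k \le c,\ 0 \le j \le k \right\},
\end{equation*}
which is a finite subset of $[0,1]$. Taking $c = 1$ gives $Q_1 = \{0,1\}$. In a single-step exhaustion each increment is empty or a single string, so each $t_n$ is either $0$ or $1$.

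Next I would use an elementary fact about sequences in a finite set $Q$. For such a sequence, $\liminf_{n} t_n$ is the smallest value $q \in Q$ that is attained infinitely often. To prove it, let $q$ be that smallest value; it exists because some value must recur, since $Q$ is finite and the sequence is infinite. Each of the finitely many values $q' < q$ in $Q$ occurs only finitely often. Hence there is an index $n_0$ such that $t_N \ge q$ for all $N \ge n_0$. On the other hand, $t_n = q$ for infinitely many $n$. Together these give $\liminf_n t_n = q$.

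For the single-step case, $q \in Q_1 = \{0,1\}$, so $\setlowertailprecision{\targetlang}{\exhaustion{\guesslang}} \in \{0,1\}$. For a general $c$-step exhaustion, take any $n \ge n_0$ with $t_n = q$; such an $n$ exists because $q$ recurs infinitely often. This $n$ satisfies $t_n = \setlowertailprecision{\targetlang}{\exhaustion{\guesslang}}$ and $t_N \ge t_n$ for all $N \ge n$, which is exactly the finite-time attainment described in the footnote.

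No step here is a genuine obstacle. The only thing to handle carefully is the convention $t_n = 1$ for empty increments, which is why $1$ is included in $Q_c$.
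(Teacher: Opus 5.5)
Your proposal is correct and follows essentially the same route as the paper: restrict $t_n$ to the finite set $\{j/k : 1\le k\le c,\ 0\le j\le k\}$, identify the $\liminf$ as the smallest value attained infinitely often, and note that smaller values occur only finitely often, so $t_N$ stays at or above that value from some finite time on. The only cosmetic difference is that the paper handles the single-step case separately, splitting on whether the set of error steps is finite or infinite, whereas you obtain it as the specialization $c=1$ of the general argument.
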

We give a proof in \cref{app:tail-precision}. For a function $\func$ with non-constant growth, one can distinguish between finite-time stabilization to some constant and asymptotic convergence; the following example illustrates this. \begin{example} Let $|\Delta \guesslang_n|\le n$, i.e., at step $n$ the exhaustion $\exhaustion{\guesslang}$ may grow by up to $n$ new strings. Suppose that at every step the increment contains exactly $n-1$ strings from $\targetlang$ and one string not in $\targetlang$. Then, \begin{equation} \setlowertailprecision{\targetlang}{\exhaustion{\guesslang}} = \liminf_{n\to\infty} \frac{|\targetlang\cap \Delta\guesslang_n|}{|\Delta\guesslang_n|} = \liminf_{n\to\infty} \frac{n-1}{n} = 1, \end{equation} i.e., the lower tail precision converges asymptotically to $1$, even though an error occurs at every step. However, this ratio is never exactly $1$ at any finite step. Finite-time
stabilization to $1$ would require that, from some step onward, every newly
generated string lies in $\targetlang$, i.e.,
$\Delta\guesslang_n\subseteq\targetlang$ for all sufficiently large $n$.\end{example}

\paragraph{Relation to \citet{kleinberg_density_2025}.} Our notion of coverage-based recall is nearly identical to the density measure of \citet{kleinberg_density_2025}.
In their setup, density is evaluated along a fixed enumeration $\exhaustion{\targetlang}=\{\targetlang_n\}_{n=0}^\infty$. In \cref{lem:lower-recall}, we derive \citeposs{kleinberg_density_2025} density measure of one language $\guesslang$ in another $\targetlang$, corresponding to our definitions of lower (\Cref{eq:recall1}) and upper (\Cref{eq:recall2}) coverage-based recall.
We extend their framework in two ways: (i) we allow \emph{any} exhaustion rather than committing
to a single fixed enumeration, and (ii) we introduce a complementary
\emph{precision} notion by exhausting the guessed language instead of the target. In the next section, we instantiate these notions in the generation-in-the-limit framework of \citet{kleinberg2024language}. In that setting, a generator's output stream induces increasing prefix sets $\guesslang_n$ of generated strings, and hence naturally defines an exhaustion of $\guesslang$. We will also formalize the eventual-validity requirement from prior work by showing in \Cref{prop:validity-tail-precision} that it corresponds to \emph{finite-time stabilization} of tail precision to $1$.
\section{Generation in the limit}\label{sec:lgitl}
We now introduce \citeposs{kleinberg2024language} generation in the limit framework. Let $\coll$ be a countable collection of languages $\coll \subseteq \mathcal{P}(\kleene{\alphabet})$. We assume that each language $\langs \in \coll$ is infinite.\footnote{This is in line with \citeauthor{kleinberg2024language}'s work and makes the generation of infinitely many unseen strings possible.} 

An adversary chooses a target language $\targetlang\in\coll$ and produces an $\func$-bounded exhaustion $\exhaustion{\advlang}=\{\advlang_n\}_{n= 0}^\infty$ such that $\advlang_n\subseteq \targetlang$  for all $n$, and $\advlang_n \uparrow \advlang$ with $\advlang \subseteq \targetlang$. \citet{kleinberg2024language} assume a single-step exhaustion; we extend their framework to $\func$-bounded exhaustions. The adversary can repeat strings, and is not required to reveal a new string at each timestep; therefore, $|\Delta \advlang_n|$ can be $0$. We call $\exhaustion{\advlang}$ a \defn{full} exhaustion of $\targetlang$ if $\advlang=\targetlang$, and a \defn{partial} exhaustion otherwise ($\advlang\subsetneq \targetlang$). In line with \citet{kleinberg_language_2025}, we assume $\advlang$ to be infinite.
A \defn{generator} is a function $\gen \colon (\kleene{\alphabet})^{<\omega}\times(\kleene{\alphabet})^{<\omega}\to(\kleene{\alphabet})^{<\omega}$.
Given an adversarial exhaustion  $\exhaustion{\advlang}=\{\advlang_n\}_{n=0}^\infty$,
the generator induces an $\func$-bounded \defn{generated exhaustion} $\exhaustion{\guesslang}=\{\guesslang_n\}_{n=0}^\infty$ via 
\begin{equation}\label{eq:guess-exhaustion}
\guesslang_0 \defeq \emptyset,
\quad
\guesslang_n \defeq \guesslang_{n-1}\cup \gen(\advlang_n,\guesslang_{n-1})\quad (n\in \N).
\end{equation}
At time $n$ the generator observes the currently revealed set $\advlang_n$ and the previously generated set $\guesslang_{n-1}$,
and appends one or multiple new strings. \citet{kleinberg2024language} demand that one new string is generated that is not in $\advlang_n$. Furthermore, their proposed algorithms also do not repeat strings in $\guesslang_{n-1}$.
We write $\guesslang_n \uparrow \guesslang$ for the guess language.

\begin{definition}[Generation in the limit]\label{def:lgitl}
A generator $\gen$ \defn{generates in the limit} with precision $\lambdaprecision$, recall $\lambdarecall$ and tail precision $\lambdatail$ for a collection of languages $\coll$ if for every $\targetlang\in\coll$, every adversarial exhaustion $\exhaustion{\advlang} = \{\advlang_n\}_{n=0}^\infty$ of $\advlang \subseteq \targetlang$, and  $\exhaustion{\guesslang}$ defined as in~\eqref{eq:guess-exhaustion}, it holds that
\vspace{-10pt}
\par\noindent
\begin{minipage}[t]{0.25\linewidth}
\begin{equation*}
\setlowerprecision{\targetlang}{\exhaustion{\guesslang}} \geq \lambdaprecision,
\end{equation*}
\end{minipage}\hfill
\begin{minipage}[t]{0.25\linewidth}
\begin{equation*}
\setlowerrecall{\exhaustion{\targetlang}}{\guesslang} \geq \lambdarecall,
\end{equation*}
\end{minipage}\hfill
\begin{minipage}[t]{0.25\linewidth}
\begin{equation}\label{eq:lgitl-novel-output}
\setlowertailprecision{\targetlang}{\exhaustion{\guesslang}} \geq \lambdatail.
\end{equation}
\end{minipage}
\end{definition}
\subsection{Perfect tail precision}

\citet{kleinberg2024language} require that after some finite timestep, all generated strings are valid (i.e., belong to $\targetlang$). 
We refer to this property as \emph{eventual validity}.

\begin{definition}[Eventual Validity]
\label{def:eventual-validity}
A generator $\gen$ is \defn{eventually valid} if there exists an $n^\star$ such that for all $n\ge n^\star$,
\begin{equation}
\gen(\advlang_n,\guesslang_{n-1}) \subseteq \targetlang.
\end{equation}
\end{definition} 
Equivalently, $|\guesslang\setminus\targetlang| < \infty$. We show that eventual validity corresponds to finite-time stabilization of tail precision to $1$.

\begin{restatable}{proposition}{PropositionValidityTailPrecision}\label{prop:validity-tail-precision}
Let $\gen$ be a generator and let $\exhaustion{\guesslang}$ be the exhaustion it produces against the adversarial exhaustion $\exhaustion{\advlang}$ of $\advlang \subseteq \targetlang$.
Then $\gen$ is eventually valid (Def.~\ref{def:eventual-validity}) if and only if
$\setlowertailprecision{\targetlang}{\exhaustion{\guesslang}} = 1$ is attained in finite time.
\end{restatable}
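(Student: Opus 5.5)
The plan is to unfold both notions into statements about the step-wise tail precisions $t_n$ and compare them directly. By \eqref{eq:guess-exhaustion}, $\Delta\guesslang_n = \gen(\advlang_n,\guesslang_{n-1})\setminus\guesslang_{n-1}$. I read ``$\setlowertailprecision{\targetlang}{\exhaustion{\guesslang}}=1$ is attained in finite time'' as in the footnote to \Cref{prop:tail-prec-finite-time-conv}: there is an $n$ with $t_N = 1$ for all $N\ge n$. This also forces the liminf to equal $1$, since $t_N\le 1$ always. Every $t_n$ lies in $[0,1]$, and by definition $t_n=1$ holds exactly when $\Delta\guesslang_n\subseteq\targetlang$. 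When $\Delta\guesslang_n=\emptyset$ this is the convention in the definition; otherwise it holds because $|\targetlang\cap\Delta\guesslang_n| = |\Delta\guesslang_n|$. Isolating this pointwise equivalence $t_n = 1 \iff \Delta\guesslang_n\subseteq\targetlang$ is the first step.

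For the forward direction, assume $\gen$ is eventually valid with threshold $n^\star$. Then for $n\ge n^\star$ we have $\Delta\guesslang_n \subseteq \gen(\advlang_n,\guesslang_{n-1})\subseteq\targetlang$, so $t_n=1$ for all $n\ge n^\star$. Hence the liminf equals $1$ and is attained from time $n^\star$ on.

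For the converse, suppose $t_N=1$ for all $N\ge n$. The pointwise equivalence gives $\Delta\guesslang_N\subseteq\targetlang$ for all $N\ge n$. This yields $\guesslang\setminus\targetlang\subseteq\guesslang_{n-1}$, which is finite, so $|\guesslang\setminus\targetlang|<\infty$. The paper states this is equivalent to eventual validity.

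The main obstacle is this last equivalence, because $\gen(\advlang_n,\guesslang_{n-1})$ may re-emit an invalid string already in $\guesslang_{n-1}$. Such a repeat leaves $\Delta\guesslang_n$ unchanged but literally violates Definition~\ref{def:eventual-validity}. I would handle it in one of two ways. The first is to interpret eventual validity via the stated equivalent form $|\guesslang\setminus\targetlang|<\infty$, i.e., in terms of new outputs. The second is to note the standing convention, following \citet{kleinberg2024language}, that generators do not repeat strings of $\guesslang_{n-1}$, so that $\gen(\advlang_n,\guesslang_{n-1})=\Delta\guesslang_n$. Under either reading, the converse becomes immediate for $n^\star=n$. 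In the proof I would state explicitly which convention is used.
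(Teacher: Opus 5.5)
Your proposal is correct and follows the paper's proof essentially step for step: both reduce the claim to the pointwise equivalence $t_n = 1 \iff \Delta\guesslang_n \subseteq \targetlang$ and read off each direction from it. Your explicit treatment of re-emitted invalid strings improves on the paper, which silently identifies $\Delta\guesslang_n$ with $\gen(\advlang_n,\guesslang_{n-1})$ (via ``Equivalently'' and ``i.e.''); that identification is exactly the no-repetition convention you name.
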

\begin{proofsketch}
    Intuitively, if the step-wise tail precision is $1$ from some finite time
onward, then every newly generated string after that time lies in the target
language $\targetlang$. Hence, only finitely many mistakes can occur, and the
generator is eventually valid. Conversely, if only finitely many mistakes occur,
then after the last mistake every newly generated string is valid, so the
step-wise tail precision is $1$ from that point onward.
\end{proofsketch}
The full proof is in \Cref{app:tail-precision}.  In general, tail precision $1$ automatically implies precision $1$ (see \Cref{lem:tail-precision-implies-precision}). On the other hand, one may have precision $1$ without convergence of tail precision to $1$: for example, if the generator outputs exactly one invalid string at times $n=2^m$ for $m \in \N$ and a valid string otherwise,
then the number of mistakes up to time $n$ is $\Theta(\log n)$, so precision tends to $1$ even though mistakes occur infinitely often, which yields tail precision $0$.
\subsection{Generation with novelty}
\begin{definition}[Novelty constraint] 
\label{def:novelty} A generator $\gen$ is \defn{novel} if for every $n\ge 1$,
\begin{equation}
    \gen(\advlang_n,\guesslang_{n-1})
\cap (\advlang_n\cup \guesslang_{n-1})=\emptyset.
\end{equation}
\end{definition}
In words, once a string has been seen---meaning that it has been either revealed in some $\advlang_n$  or it has been generated by some $\guesslang_n$---it becomes permanently unavailable to the generator. This modeling choice makes achieving high recall non-trivial. Without it, a generator could simply copy the adversary and achieve perfect recall and perfect precision whenever the adversary fully enumerates $\targetlang$ (see \Cref{thm:gen-valid-without-novelty}). Under a novel generator, however, the adversary can \emph{steal} strings by revealing them first, which imposes an inherent ceiling on recall. We restate, in our notation, the following recall bounds from \citet{kleinberg_language_2025}.

\begin{restatable}{theorem}{TheoremHalfDensityLower}[\citealt[Thm.~3.5]{kleinberg_language_2025}]
\label{thm:half-lower-bound}
Let $\coll$ be a countable collection of languages and $\targetlang\in\coll$ a target language.
Fix an enumeration $\exhaustion{\targetlang}=\{\targetlang_n\}_{n=0}^\infty$ of $\targetlang$.
There exists a novel generator $\gen$ that generates in the limit for $\targetlang$ with $\setlowertailprecision{\targetlang}{\exhaustion{\guesslang}} = 1$, such that the following holds: For every single-step adversarial exhaustion $\exhaustion{\advlang}=\{\advlang_n\}_{n=0}^\infty$ with $\advlang_n\uparrow \advlang\subseteq \targetlang$, the generator
$\gen$ induces a single-step exhaustion $\exhaustion{\guesslang}=\{\guesslang_n\}_{n= 0}^\infty$ with $\guesslang_n\uparrow \guesslang$ satisfying
\begin{equation}\label{eq:half-density:recall}
\setlowerrecall{\exhaustion{\targetlang}}{\guesslang}
\ge
\frac{1}{2}\,\setlowerrecall{\exhaustion{\targetlang}}{\advlang}.
\end{equation}
\end{restatable}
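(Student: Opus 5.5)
This theorem is a restatement of \citet[Thm.~3.5]{kleinberg_language_2025}, so the plan is to transfer their result into the present notation rather than reprove it from scratch. The transfer has three parts.

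\textbf{Recall equals lower density.} By \cref{lem:lower-recall}, the coverage-based recall satisfies
\begin{equation*}
\setlowerrecall{\exhaustion{\targetlang}}{\guesslang} = \liminf_{n\to\infty}\frac{|\targetlang_n\cap\guesslang|}{|\targetlang_n|}.
\end{equation*}
Since $\exhaustion{\targetlang}$ is an enumeration, $|\targetlang_n| = n$. The right-hand side is therefore exactly the lower density of $\guesslang$ in $\targetlang$ with respect to the fixed ordering of $\targetlang$ in the sense of \citet{kleinberg_density_2025}. Applying the same lemma to $\advlang$ identifies $\setlowerrecall{\exhaustion{\targetlang}}{\advlang}$ with the lower density of the adversary's set. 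Consequently, \eqref{eq:half-density:recall} is literally their inequality $\underline{d}(O,K)\ge \tfrac12\,\underline{d}(A,K)$, where $O$ is the output set.

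\textbf{Matching the protocols.} A single-step adversarial exhaustion with $\advlang_n\subseteq\targetlang$ corresponds to their partial-enumeration adversary. The convention $|\Delta\advlang_n|\in\{0,1\}$ amounts to allowing repeats, which their model also permits. Their generator outputs one string per step that is neither previously revealed nor previously output; this is our novelty constraint (\cref{def:novelty}), and it induces a single-step $\exhaustion{\guesslang}$ via \eqref{eq:guess-exhaustion}.

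\textbf{Tail precision.} Their algorithm is eventually valid: it errs only finitely often before locking onto a valid infinite subset. By \cref{prop:validity-tail-precision}, this yields $\setlowertailprecision{\targetlang}{\exhaustion{\guesslang}}=1$ attained in finite time.

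\textbf{Main obstacle.} The delicate point is the quantifier order. Their generator may depend on the ordering of $\targetlang$ used to measure density, and the theorem fixes $\exhaustion{\targetlang}$ before choosing $\gen$, which matches their setting. I would check carefully that their guarantee holds uniformly over all adversaries, including those with finite-time idle steps and repeats.

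If a self-contained argument were needed, I would reproduce their core mechanism. The generator alternates between two kinds of steps:
\begin{itemize}
\item \emph{Safe steps}, which output the least unused element of the current eventually-correct hypothesis, as in \citet{kleinberg2024language}.
\item \emph{Density steps}, which claim the smallest unseen element of $\targetlang$ under the fixed order, guessed through the hypothesis $L_{i_n}$.
\end{itemize}
A charging argument then bounds what the adversary can steal. Each string the adversary reveals that the generator "wanted" is paired with at most one generator output, which gives the factor $\tfrac12$. I would expect this counting, the liminf bookkeeping across the finitely many hypothesis switches, to be the hardest step.
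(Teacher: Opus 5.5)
Your main argument, transferring \citet[Thm.~3.5]{kleinberg_language_2025} into the paper's notation, is correct and is essentially what the paper does. The paper gives no independent proof of this statement. It restates their theorem, identifies coverage-based recall with their lower density via \Cref{lem:lower-recall} (using $|\targetlang_n|=n$), and obtains $\setlowertailprecision{\targetlang}{\exhaustion{\guesslang}}=1$ from the eventual validity of their generator (cf.\ \Cref{prop:tail-prec-finite-time-conv,prop:validity-tail-precision}).

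Your quantifier worry resolves as follows. Their generator depends only on the fixed order $\prec$ on $\kleene{\alphabet}$, not on $\targetlang$. Their density is measured along the $\prec$-induced enumeration of $\targetlang$, which the paper tacitly uses, e.g.\ through $\mathrm{succ}_\targetlang$ in $(\targetlang,\prec)$. Because $\exhaustion{\targetlang}$ is fixed before $\gen$, an arbitrary enumeration can also be handled: choose an $\omega$-type order $\prec$ whose restriction to $\targetlang$ is $\exhaustion{\targetlang}$.

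Your fallback ``self-contained'' sketch, however, is not their mechanism and would fail as written. Suppose density steps draw, at a constant rate, from a guessed hypothesis $L_{i_n}$. If $L_{i_n}$ is not eventually contained in $\targetlang$, nothing prevents infinitely many invalid outputs at positive frequency, so neither tail precision $1$ nor precision $1$ is guaranteed. If instead you draw from an eventually valid set, such as the highest critical language of \citet{kleinberg2024language}, that set only needs to contain the finite prefix $\advlang_n$. It can be arbitrarily sparse in $\targetlang$, so no density bound follows. Since the generator cannot identify $\targetlang$, no single hypothesis gives both properties.

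The missing ingredients are the following:
\begin{itemize}
\item the identified intersection $\intersect^{(n)}$, which is eventually valid, eventually comparable, and full ($\advlang\subseteq\intersect^{(n)}$) infinitely often;
\item the aggressive set $\widetilde{\intersect}^{(n)}$;
\item pods of growing size $s_n$, which retain $\prec$-small candidates when $\intersect^{(n)}$ temporarily shrinks;
\item every round outputting the $\prec$-least pool element, with no separate safe and density steps.
\end{itemize}
The factor $\tfrac12$ then comes from a two-stage charging argument. First, strings of $\advlang$ never placed in a pod are charged, with bounded multiplicity, to earlier pods of size at least $s$ whose elements all come essentially before them; this costs $O(1/s)$ relative to the pool. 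Second, pooled strings stolen by the adversary are charged one-to-one to outputs. Letting $s\to\infty$ gives the factor $\tfrac12$. The paper's proof of \Cref{lem:alpha-over-3} (with \Cref{lem:batched-rho}), read with $k=1$, is precisely this argument and is the route to follow if you want a self-contained proof.
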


By \Cref{prop:tail-prec-finite-time-conv}, $\setlowertailprecision{\targetlang}{\exhaustion{\guesslang}}$ converges in finite time to $1$ in this setting. As a special case, \Cref{thm:half-lower-bound} implies that for a full, single-step exhaustion of the target language, an algorithm can generate with recall $\tfrac{1}{2}$.

\begin{restatable}{theorem}{TheoremHalfDensityUpper}[\citealt[Thm.~3.3]{kleinberg_language_2025}]
\label{thm:half-tight}
The constant $\tfrac12$ in \Cref{thm:half-lower-bound} is optimal:
for every novel generator $\gen$ there exist a countable collection $\coll$, a target language
$\targetlang\in\coll$ with enumeration $\exhaustion{\targetlang}$, and a single-step adversarial
exhaustion $\exhaustion{\advlang}=\{\advlang_n\}_{n=0}^\infty$ with $\advlang_n\uparrow \advlang\subseteq \targetlang$
such that, if $\gen$ induces a single-step exhaustion $\exhaustion{\guesslang}$, then
\begin{equation}
\setlowerrecall{\exhaustion{\targetlang}}{\guesslang}\le
\frac12\,\setlowerrecall{\exhaustion{\targetlang}}{\advlang}.
\end{equation}
\end{restatable}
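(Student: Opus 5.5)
Since the learner's strings must be novel, the adversary can build the target language and the enumeration around whatever the learner does. The proof of \Cref{thm:half-tight} is therefore an adaptive diagonal construction in the spirit of \citet[Thm.~3.3]{kleinberg_language_2025}. The plan below rebuilds it in our exhaustion notation.

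Fix a novel generator $\gen$. We simulate $\gen$ against an adversary that reveals one string per step and is built in phases. We will construct $\targetlang$, its enumeration $\exhaustion{\targetlang}$, and $\advlang$ together, so that $\advlang=\targetlang$ and $\setlowerrecall{\exhaustion{\targetlang}}{\advlang}=1$. The key step is a stealing mechanism. Whenever $\gen$ outputs a string $\str$ we have not yet committed to $\targetlang$, we decide its membership, and the collection $\coll$ will contain candidate languages consistent with each choice. Concretely, I would take $\coll$ to consist of a countable family of languages that agree on a common infinite ``backbone''. The family is rich enough that for every finite prefix of play, both ``$\str\in\targetlang$'' and ``$\str\notin\targetlang$'' remain consistent. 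The adversary, meanwhile, reveals the next backbone string not yet output by $\gen$.

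The counting argument then goes as follows. Order $\exhaustion{\targetlang}$ so that, at each phase boundary $n_k$, the prefix $\targetlang_{n_k}$ consists of the strings revealed so far by the adversary plus the strings of $\targetlang$ that $\gen$ has produced. Every string $\gen$ outputs is novel, so it cannot be one already revealed. The adversary reveals a fresh string of $\targetlang$ at each step. By interleaving, the enumeration prefix at each step therefore contains at least as many adversary-revealed strings as generator-found strings. Since $\guesslang\cap\targetlang_{n}$ contains only generator-produced strings of the prefix, this gives $|\targetlang_{n_k}\cap\guesslang|\le\tfrac12|\targetlang_{n_k}|+o(|\targetlang_{n_k}|)$. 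We need to guarantee that the revealed strings are not in $\guesslang$ at all, not merely not yet in it. Novelty guarantees exactly this: once revealed, a string is permanently unavailable. By \Cref{lem:lower-recall}, the $\liminf$ of $|\targetlang_n\cap\guesslang|/|\targetlang_n|$ is then at most $\tfrac12=\tfrac12\setlowerrecall{\exhaustion{\targetlang}}{\advlang}$.

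The main obstacle is making the enumeration legitimate. $\exhaustion{\targetlang}$ must be a single fixed enumeration of $\targetlang$ covering all of $\targetlang$, including those valid strings $\gen$ never produces. These must be slotted in without inflating the ratio above $\tfrac12$. They are not in $\guesslang$, so inserting them only lowers the ratio; the only care needed is ensuring every string of $\targetlang$ eventually appears. I would handle this with a standard dovetailing schedule, inserting the $k$-th pending string at phase $k$.

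A subtle point is the generator's validity. If $\gen$ outputs strings outside $\targetlang$, those cost it nothing in recall, so the bound is unaffected. If instead $\gen$ hedges by producing few strings, its recall is smaller still. Hence the construction works for every novel $\gen$.
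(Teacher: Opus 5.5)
Your argument has a genuine gap at the claim $\advlang=\targetlang$. Your adversary only ever reveals ``the next backbone string not yet output by $\gen$,'' so it never reveals a string that $\gen$ has already produced. Together with novelty, this gives $\advlang\cap\guesslang=\emptyset$, which is exactly what your ``permanently unavailable'' remark relies on. But your counting treats $\gen$'s outputs as members of $\targetlang$, making up to half of each prefix $\targetlang_{n_k}$. Every such string lies in $\targetlang\setminus\advlang$, so $\advlang\subsetneq\targetlang$, and $\setlowerrecall{\exhaustion{\targetlang}}{\advlang}$ is roughly one minus the generator's share, not $1$.

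Concretely, let $\targetlang$ be the backbone in its natural order and let $\gen$ always output the smallest unused backbone string. The adversary gets the odd positions and $\gen$ the even ones, so $\setlowerrecall{\exhaustion{\targetlang}}{\guesslang}=\setlowerrecall{\exhaustion{\targetlang}}{\advlang}=\tfrac12$, and the required inequality would read $\tfrac12\le\tfrac14$. Your final step ``$\tfrac12=\tfrac12\setlowerrecall{\exhaustion{\targetlang}}{\advlang}$'' is therefore unsupported.

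If you instead resolve the tension by keeping every generator output out of $\targetlang$, then $\advlang=\targetlang$ does hold. But then $\guesslang\cap\targetlang=\emptyset$ and the stealing count is vacuous. That satisfies the literal statement, because $\gen$ is fixed before $\coll$, but it is not the argument you give. The ``rich'' collection plays no role in either reading.

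The missing ingredient, which the paper uses in the case $k=1$ of the proof of \Cref{lem:batched_adversary}, is a sparse set of \emph{repetition steps}, e.g.\ $m\in\{2^i : i\in\N\}$. On all other steps, the adversary reveals the first unused string of a \emph{fixed} enumeration of $\targetlang$. On a repetition step, it instead reveals the earliest string of $\targetlang$ that $\gen$ has already output but that has not yet been revealed.

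Each string has finitely many predecessors, so every generated string of $\targetlang$ is eventually revealed. This gives $\advlang=\targetlang$ and $\setlowerrecall{\exhaustion{\targetlang}}{\advlang}=1$. These late reveals cost nothing: the strings are already in $\guesslang$, and novelty only forbids $\gen$ from outputting a string \emph{after} it has been revealed.

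Greedy stealing along a fixed order also removes the need for your adaptive enumeration and dovetailing. Every prefix $\targetlang_M$ is completely used by roughly time $M$, so $\gen$ cannot later claim positions in an earlier prefix. Your ``pending'' strings, by contrast, cannot be known to avoid $\guesslang$ at the time you insert them. The greedy schedule yields $|\guesslang\cap\targetlang_{2n}|\le n+\explorationidx(n)$ with $\explorationidx(n)=\bigo(\log n)$, hence $\setlowerrecall{\exhaustion{\targetlang}}{\guesslang}\le\tfrac12$. No validity assumption and no special collection are needed; any $\coll\ni\targetlang$ works.
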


In practice, forbidding the generator from outputting previously revealed strings is not always desirable:
repeating training data can be necessary (e.g., when quoting a source). Motivated by this observation, we introduce the following continuous relaxation of the novelty constraint.

\begin{definition}[$\gamma$-novelty]\label{def:gamma-novelty}
Let $\gamma\in[0,1]$. Given an adversarial exhaustion
$\exhaustion{\advlang}=\{\advlang_n\}_{n=0}^\infty$ and the generated exhaustion $\exhaustion{\guesslang}=\{\guesslang_n\}_{n=0}^\infty$, define
\begin{equation}
N_0 := \emptyset,
\qquad
N_n := N_{n-1}\cup
\bigl(\Delta\guesslang_n\setminus\advlang_n\bigr).
\end{equation}
We say that the generator is $\gamma$-novel if, for every $n$ with
$|\guesslang_n|>0$,
\begin{equation}
\frac{|N_n|}{|\guesslang_n|}\ge \gamma.
\end{equation}
\end{definition}
In words, $\gamma$-novelty requires that every $\guesslang_n$ contains at least a $\gamma$-fraction of strings that
were novel at the time they were first generated, i.e., strings that had not
already been revealed by the adversary and had not already been generated.
Equivalently, at most a $(1-\gamma)$-fraction of the strings in any $\guesslang_n$ may be non-novel. We define $\gamma$-novelty for $\gamma\in[0,1]$. For $\gamma=0$, this imposes no novelty restriction, while the endpoint $\gamma=1$ coincides with the strict novelty constraint of \Cref{def:novelty}. We treat novelty ($\gamma = 1$) separately from the relaxed regime $\gamma<1$ (see \Cref{fig:bounds-summary}).

We study $\gamma$-novel generation in the limit under varying tail-precision assumptions in \Cref{sec:gen-with-infinite-mistakes}. We show that the relaxation $\gamma < 1$ is already sufficient to recover all adversarially revealed strings, and, when perfect tail precision is not required, to achieve both precision and recall equal to $1$.

\section{Generating with infinitely many hallucinations}\label{sec:gen-with-infinite-mistakes}
Existing results aim to maximize recall for novel generators with perfect tail precision, yielding guarantees such as the $\tfrac12$ bound of \citet{kleinberg_language_2025} (restated in \Cref{thm:half-lower-bound,thm:half-tight}).
In this section, we ask how these limits change once we relax the tail precision constraint: rather than requiring that only finitely many hallucinations occur, we allow infinitely many hallucinations provided they occur at a vanishing rate. In this regime, tail precision can even be $0$, but precision is $1$ in the limit.

In line with \citet{kleinberg_language_2025}, we assume a fixed enumeration $\exhaustion{\targetlang}=\{\targetlang_n\}_{n= 0}^\infty$ and single-step exhaustions $\exhaustion{\advlang}=\{\advlang_n\}_{n=0}^\infty$ with $\advlang_n\uparrow\advlang\subseteq\targetlang$ and $\exhaustion{\guesslang}=\{\guesslang_n\}_{n=0}^\infty$ with $\guesslang_n\uparrow\guesslang$, i.e., both the adversary and the generator are allowed to produce at most one string per round. In contrast, an enumeration adds exactly one new string at every round. Our results for generation without novelty ($\gamma = 0$) hold for a more general setting where both $\exhaustion{\advlang}$ and $\exhaustion{\guesslang}$ are $\func$-bounded. We characterize the adversary's exhaustion by the lower and upper recall:
\begin{equation}
    \setlowerrecall{\exhaustion{\targetlang}}{\advlang} \geq \alpha, \qquad \setupperrecall{\exhaustion{\targetlang}}{\advlang} \leq \beta.
\end{equation}
Intuitively, the lower recall measures how much of the target language the adversary eventually reveals, while the upper recall captures how large the revealed set can appear along subsequences. We also fix a total order $\prec$ on $\kleene{\alphabet}$ and write $\strx\prec\str$ if $\strx$ is $\prec$-smaller than $\str$.

We first analyze this setting without and with the novelty constraint ($\gamma = 0$ vs. $\gamma = 1$), and then give a $\gamma$-novel algorithm and guarantees for $\gamma \in [0, 1)$. Our constructions share the same template: We run a \defn{safe} generator (outputting from $\targetlang$) most of the time, and interleave it with rare \textbf{exploration} rounds that enumerate $\kleene{\alphabet}$ to recover strings the adversary never reveals. Exploration rounds may output invalid strings, but we schedule them sparsely, yielding precision~$1$. To achieve this, we will make use of \defn{exploration sets}. 
\begin{definition}[Exploration set]\label{def:exploration-set}
An \defn{exploration set} is an infinite set $\exploration \subset \N$ such that $1\notin \exploration$ and no two consecutive integers
belong to $\exploration$. For $m\in\N$, define
\(
\explorationidx(m)\defeq \bigl|\exploration\cap\{1,\dots,m\}\bigr|.
\)
We require that $\explorationidx(m)/m \to 0$ as $m\to\infty$.
\end{definition}

An example is $\exploration=\{2^i \colon i\in\N\}$.

For the $\gamma$-novel algorithm, we need exploration rounds to be sparse not only asymptotically, but in every finite prefix. We therefore introduce the following strengthened notion.

\begin{definition}[$\gamma$-admissible exploration set]
\label{def:gamma-admissible-exploration}
Let $\gamma\in[0,1)$.
An infinite exploration set $\exploration\subseteq\N$ is
\defn{$\gamma$-admissible} if, for every $m\in\N$, $\explorationidx(m) \leq (1-\gamma) m$.
\end{definition} 
We show the existence of such sets for every $\gamma < 1$ in \cref{app:gamma-exploration-existence}.
\subsection{Generation without novelty (\texorpdfstring{$\gamma = 0$}{gamma = 0})}\label{subsec:genwithoutnovelty}
We first consider $\gamma = 0$, dropping the novelty constraint entirely, which serves as a useful baseline. In this regime, repeating the adversary's strings is allowed at any time, and
the generator can combine (i) parroting the adversary to keep precision high and (ii) sparse exploration to achieve perfect recall, even for partial exhaustions $\exhaustion{\advlang}$.
\begin{restatable}{theorem}{TheoremGenerationWithoutNovelty}\label{thm:nonoveltybound}
    Let $\coll$ be a countable collection of languages and $\targetlang\in\coll$ a target language.
    Fix an enumeration $\exhaustion{\targetlang}=\{\targetlang_n\}_{n=0}^\infty$ of $\targetlang$. There exists a generator $\gen$, such that, for any $\func$-bounded adversarial exhaustion $\exhaustion{\advlang} = \{\advlang_n\}_{n=0}^\infty$ with $\advlang_n \uparrow \advlang \subseteq \targetlang$, the generated exhaustion $\exhaustion{\guesslang}$ is $\func$-bounded and satisfies: 
    \begin{equation}
        \setlowerrecall{\exhaustion{\targetlang}}{\guesslang} = 1, \qquad \setlowerprecision{\targetlang}{\exhaustion{\guesslang}} = 1.
    \end{equation}
\end{restatable}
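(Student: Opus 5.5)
The plan is to build a ``parrot-and-explore'' generator: it mostly copies strings the adversary has revealed, and on rare exploration slots it emits the next string of $\kleene{\alphabet}$. Copying keeps precision high. Exploration puts every string of $\kleene{\alphabet}$ into $\guesslang$, which forces recall one. To handle the fact that the adversary may stall ($|\Delta\advlang_n|=0$ for long stretches), I will index exploration by the \emph{number of strings generated so far} rather than by the round $n$. Fix an exploration set $\exploration$ (\Cref{def:exploration-set}), e.g.\ $\{2^i\}$.

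\textbf{Construction.} The generator keeps two pieces of state. The first is a backlog $Q_n \defeq \advlang_n\setminus\guesslang_{n-1}$ of revealed but not yet copied strings. The second is a pointer into $\kleene{\alphabet}$ in $\prec$-order; since $\kleene{\alphabet}$ is countable, I fix an enumeration of it compatible with $\prec$. At round $n$ the generator emits up to $\func(n)$ strings one at a time, so $\exhaustion{\guesslang}$ is $\func$-bounded by construction. For each slot, let $m$ be the index the emitted string will have in the output sequence. If $m\in\exploration$, it emits the $\prec$-least string of $\kleene{\alphabet}$ not yet in $\guesslang$ (an exploration output). Otherwise it emits the $\prec$-least element of the current backlog, if the backlog is nonempty. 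If the backlog is empty and $m\notin\exploration$, it stops for this round. This is a well-defined function of $(\advlang_n,\guesslang_{n-1})$: the index $m=|\guesslang_{n-1}|+1+\dots$ can be read off $\guesslang_{n-1}$, and the exploration pointer can be recomputed from $\guesslang_{n-1}$.

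\textbf{Key steps.} First I show that $|\guesslang_n|\to\infty$ and that infinitely many exploration slots are used. Since $\advlang$ is infinite, the backlog is nonempty at infinitely many rounds. Since $\sum_n\func(n)=\infty$, infinitely many output slots are available. Each backlog string is eventually copied, because the backlog is served in $\prec$-order and, between exploration indices, slots go to the backlog. So the output count $m$ passes every element of $\exploration$.

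Recall then follows. Every string of $\kleene{\alphabet}$ is eventually emitted in an exploration slot, so $\guesslang\supseteq\targetlang$. By \Cref{lem:lower-recall}, $\setlowerrecall{\exhaustion{\targetlang}}{\guesslang}=\liminf_n|\targetlang_n\cap\guesslang|/|\targetlang_n|=1$.

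For precision, every non-exploration output is copied from $\advlang_n\subseteq\targetlang$. Hence the invalid strings in $\guesslang_n$ number at most $\explorationidx(|\guesslang_n|)$, and
\begin{equation*}
\frac{|\targetlang\cap\guesslang_n|}{|\guesslang_n|}\ \ge\ 1-\frac{\explorationidx(|\guesslang_n|)}{|\guesslang_n|}\ \longrightarrow\ 1,
\end{equation*}
using $\explorationidx(m)/m\to0$ together with $|\guesslang_n|\to\infty$. By \Cref{th:lower-precision-bounded-exhaustions} (applied with $\func$-bounded target exhaustions, which bounds $\setlowerprecision{\targetlang}{\exhaustion{\guesslang}}$ from below since $\exhaustionsetbounded{\targetlang}\subseteq\exhaustionset{\targetlang}$; the supremum is also at most $1$), $\setlowerprecision{\targetlang}{\exhaustion{\guesslang}}=1$.

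\textbf{Main obstacle.} The delicate point is decoupling the sparsity of exploration from the round counter. If exploration were scheduled by round, an adversary that reveals almost nothing could make exploration outputs dominate $\guesslang_n$. Indexing by output count, and only emitting non-exploration strings when the backlog supplies a valid one, handles this. What remains is to check carefully that the output count truly diverges, i.e.\ that the backlog is fully drained over time, and that the exploration enumeration revisits no string already in $\guesslang$. Both reduce to $\advlang$ being infinite and $\sum\func(n)=\infty$. A minor detail is treating rounds with $\func(n)=0$ and exploration strings that coincide with earlier copies; both cases just skip to the next available string.
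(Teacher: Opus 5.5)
Your proposal is correct and follows the paper's own proof (Algorithm~1): copy the adversary, trigger sparse exploration by the output count $|\guesslang_n|$ rather than by the round, bound hallucinations by $\explorationidx(|\guesslang_n|)$ to get precision, and get $\kleene{\alphabet}\subseteq\guesslang$ to get recall. The only difference is bookkeeping. The paper copies $\Delta\advlang_{n+1}$ directly and explores only in rounds with $\Delta\advlang_{n+1}\neq\emptyset$, which certifies spare capacity without ever consulting $\func$. Your backlog-and-slots scheme instead needs to know $\func(n)$ and the round index, which is harmless as long as $\func$ is a fixed, known parameter of the game.
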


The proof is in \cref{app:gen-no-novelty-no-tail-precision}. Note that this theorem holds for both full and partial exhaustion of the target $\targetlang$.

\subsection{Generation with novelty
(\texorpdfstring{$\gamma = 1$}{gamma = 1})}
We now reinstate novelty (\Cref{def:novelty}), considering the other extreme of the spectrum, where it is not allowed to repeat any string that has been revealed before. Hence, the adversary can \emph{steal} strings by revealing them first, making perfect recall unattainable. Our goal is to understand how far recall can be pushed when we drop perfect tail precision, but aim to generate with precision $1$. Our generator combines two complementary mechanisms: 
\begin{enumerate}
\item \textbf{Exploration rounds} recover the non-stolen portion.
If a string is never revealed by the adversary (i.e., it lies in $\targetlang\setminus \advlang$),
then it is never blocked by novelty. Hence, an exploration routine that enumerates $\kleene{\alphabet}$ while skipping already-seen strings will eventually output it.

\item \textbf{Safe rounds} recover recall proportional to $\alpha$, when $\setlowerrecall{\exhaustion{\targetlang}}{\advlang}\geq \alpha$. We run a
\citet{kleinberg_language_2025}-style generator whose tail precision converges to $1$, which will be the key ingredient for achieving precision~$1$.
\end{enumerate}

In this setting, there can be up to two adversarial revelations
between two safe outputs. Therefore, we first need to adapt \citeposs{kleinberg_language_2025} algorithm to a so-called $k$-batched adversarial
setting (with $k=2$ here).

\begin{definition}
    Consider a setting in which the adversary is $k$-step bounded, i.e., it can reveal at most $k$ new strings per step $n$, and the generator is single-step bounded. We call this setting a $k$-batched adversarial setting.
\end{definition}
\Cref{alg:pods-generator} shows \citeposs{kleinberg_language_2025} algorithm for generation in the limit.\footnote{Note that they require $\exhaustion{\advlang}$ and $\exhaustion{\guesslang}$ to be single-step exhaustions.} At time $n$, consider all hypotheses that are consistent with the strings revealed by the adversary, i.e., all languages $\langs \in \coll$ with $\advlang_n \subseteq \langs$. Intersecting these consistent hypotheses in a global order, given by a fixed listing $(\langs'_i)_{i\ge1}$ of all languages in $\coll$, yields a decreasing \emph{intersection chain} $\chain_n$ (see \Cref{alg:update-chain}). Early sets in the chain are larger (better for recall), but riskier, while later sets are smaller (safer), but might miss many valid strings. The \emph{identified intersection} procedure (see \cref{alg:update-identified-intersection}) selects a particular member
$\intersect^{(n)}\in\chain_n$ that behaves stably over time: after some finite time, it is a subset of the true target $\targetlang$ (eventual validity), it is comparable across rounds, and it is \emph{full} (contains $\advlang$) infinitely often. Intuitively, $\intersect^{(n)}$ is designed to be a tractable candidate set of strings deemed safe given $\advlang_n$. However, outputting only from $\intersect^{(n)}$ can hurt recall, because $\intersect^{(n)}$ may shrink temporarily when the adversary reveals new strings, making the generator too conservative in those rounds. To mitigate this, the algorithm chooses an \emph{aggressive set} $\smash{\widetilde{\intersect}}^{(n)}$, which is either the current identified intersection or a carefully chosen larger fallback from recent rounds (see \Cref{alg:aggressive-set}). After some finite time—once the identified intersections are valid and comparable—this fallback simplifies to
\(
\smash{\widetilde{\intersect}}^{(n)} \in \{\intersect^{(n)},\,\intersect^{(n-1)}\}\subseteq \targetlang.
\)
Thus $\smash{\widetilde{\intersect}}^{(n)}$ is also eventually valid, but can be temporarily larger than $\intersect^{(n)}$, allowing the generator to improve recall. However, $\smash{\widetilde{\intersect}}^{(n)}$ itself evolves, so a string that appears safe at round $n$ might no longer be among preferred candidates at round $n+1$ if $\smash{\widetilde{\intersect}}^{(n)}$ shrinks. This problem is addressed by \defn{pods}. In each round $n$, the algorithm creates a pod $\Pod_n$ of size $s_n$ with the $s_n$ $\prec$-smallest \emph{unused} strings of $\smash{\widetilde{\intersect}}^{(n)}$ (excluding strings already revealed, output, or in previous pods) and adds those strings into a global \emph{pod pool}. In each round, the generator outputs the $\prec$-smallest element from the pool. This pool is crucial for achieving high recall: Even if $\smash{\widetilde{\intersect}}^{(n)}$ shrinks later, strings that were previously added
to pods are not lost and can eventually be output. 

We first generalize \citeposs{kleinberg_language_2025} pods algorithm (see \cref{alg:pods-generator}) and the corresponding recall bound for a $k$-batched setting, assuming that $\exhaustion{\advlang}$ is a $k$-step exhaustion.

\begin{restatable}{lemma}{LemmaBatchedLower}
\label{lem:alpha-over-3}
Assume a $k$-batched adversarial setting, with $k\geq 2$. For any $\targetlang\in\coll$ and any $k$-batched exhaustion $\exhaustion{\advlang}$ of  $\advlang\subseteq \targetlang$ with lower recall $\setlowerrecall{\exhaustion{\targetlang}}{\advlang}$, the exhaustion $\exhaustion{\guesslang}$ generated by applying \Cref{alg:pods-generator} satisfies the novelty constraint and achieves
\begin{subequations}
\begin{equation}
\setlowerrecall{\exhaustion{\targetlang}}{\guesslang} \ge\ \frac{1}{k+1} \setlowerrecall{\exhaustion{\targetlang}}{\advlang},
\end{equation}
\begin{equation}
\setlowerprecision{\targetlang}{\exhaustion{\guesslang}} = 1, \qquad \setlowertailprecision{\targetlang}{\exhaustion{\guesslang}} = 1.
\end{equation}
\end{subequations}
\end{restatable}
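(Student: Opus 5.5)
The plan is to reproduce the structure of \citeposs{kleinberg_language_2025} analysis of the pods algorithm and track how the number of adversarial revelations per round enters the counting. The novelty constraint and the precision claims are the easy part. By construction every pod contains only strings that were neither revealed nor output nor already pooled, and at output time we additionally discard pool elements that were revealed in the meantime, so every output is novel.

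For tail precision, we first invoke the stability properties of the identified intersection. After some finite time $n^\star$, we have $\intersect^{(n)}\subseteq\targetlang$ and $\smash{\widetilde{\intersect}}^{(n)}\in\{\intersect^{(n)},\intersect^{(n-1)}\}\subseteq\targetlang$. These properties depend only on consistency of $\advlang_n$ with languages in $\coll$, not on the step size, so they carry over verbatim to $k$-step exhaustions. All pods created after $n^\star$ are therefore valid. The pods created before $n^\star$ are finite, so only finitely many invalid strings are ever output. \Cref{prop:validity-tail-precision} then gives $\setlowertailprecision{\targetlang}{\exhaustion{\guesslang}}=1$, and \Cref{lem:tail-precision-implies-precision} gives precision $1$.

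For recall, fix $\varepsilon>0$ and a large $N$, and look at the prefix $\targetlang_N$. Consider the infinitely many times $n$ at which $\intersect^{(n)}$ is full, i.e.\ contains $\advlang_n$, and is valid. At such a time the aggressive set contains the "safe" portion of $\targetlang_N$. The argument compares three quantities: the strings of $\targetlang_N$ that lie in $\advlang$ (about $\alpha|\targetlang_N|$ of them), the strings the generator has output or placed into pods, and the strings stolen by the adversary. The key counting step is as follows. Each generator output is matched with the revelations made in the same round. There are at most $k$ of these, since only strings at most the current $\prec$-threshold matter.

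The invariant to establish is that every $\prec$-small string of the aggressive set ends in one of three states: it is revealed by the adversary, it is output, or it is still sitting in the pool. In each round the generator removes only one string from the pool, while the adversary can remove up to $k$. The pod sizes $s_n$ must be chosen so that the pool is never starved; this is where I would set $s_n$ to grow, e.g.\ $s_n=kn$ or so. With that choice, among any $k+1$ strings of $\targetlang_N\cap\advlang$ consumed, at least one is output by the generator. This yields $|\targetlang_N\cap\guesslang|\ge\frac{1}{k+1}|\targetlang_N\cap\advlang|-o(|\targetlang_N|)$, and taking $\liminf$ together with \Cref{lem:lower-recall} gives the claimed bound.

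The main obstacle is this charging argument. We must show that the stolen strings are charged at rate at most $k$ per output within the relevant $\prec$-initial segment of $\targetlang$, uniformly along the enumeration $\exhaustion{\targetlang}$. The difficulty comes from the temporary shrinking of $\intersect^{(n)}$ and the switching of $\smash{\widetilde{\intersect}}^{(n)}$. I would handle it the same way as the single-step proof: restrict attention to the full times, and use the fact that pooled strings are never lost. Then I would redo the inequality there with $1$ replaced by $k$ in the adversary's per-round budget, so the factor $\tfrac12=\tfrac{1}{1+1}$ becomes $\tfrac{1}{k+1}$.
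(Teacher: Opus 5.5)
Your novelty and (tail) precision arguments are correct and essentially match the paper's. The recall part has a genuine gap.

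\textbf{The per-round charging only covers pooled strings.} The mechanism you describe, where each output pays for at most $k$ revelations, controls only adversarial strings that were sitting in the pod pool when revealed. This corresponds to the paper's bound $|(\podpool\cap(\advlang\setminus\guesslang))\cap\targetlang_n|\le k\,|\guesslang\cap\targetlang_n|$. Even there, the right partner for a pooled string revealed at round $t$ is the output of round $t-1$. That output is $\prec$ the stolen string, because the stolen string was available and not chosen, and this is what places the output in the same initial segment. The same-round output need not be smaller.

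\textbf{Never-pooled strings are left uncontrolled.} More importantly, your scheme says nothing about strings of $\advlang$ that the adversary reveals before they ever enter a pod. Such strings compete with no output at all. Suppose the identified intersection is, at many times, a valid but much smaller subset of $\targetlang$. Then the pods are drawn from it while the adversary freely reveals the rest of $\advlang\cap\targetlang_N$, and no per-round budget prevents this. So your claim that among any $k+1$ consumed strings of $\targetlang_N\cap\advlang$ one is output is asserted, not proved, and the three-state invariant does not deliver it. Growing pod sizes are also not needed to prevent starvation: the freshly formed pod $\Pod_n$ always supplies an available string.

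\textbf{What is missing.} The missing ingredient is the core of the Kleinberg--Wei analysis, which the paper redoes with multiplicities.
\begin{itemize}
\item Split the never-pooled, never-output strings $\advlang\setminus(\podpool\cup\guesslang)$ according to whether the pod $\Pod_{\nu(\strx)-1}$ formed just before $\strx$ is revealed lies entirely $\preceq\mathrm{succ}_\targetlang(\strx)$. If it does, charge $\strx$ to that pod.
\item Otherwise $\strx\notin\intersect^{(\nu(\strx)-1)}$. Since, after validity sets in, the identified intersection has already been full, the revelation forces an upward move back to an identified intersection that occurred at some earlier time $n$.
\item The pod formed at time $n$ was drawn from a set containing the still-unused $\strx$. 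Hence it consists of $s_n\ge s$ strings, all $\prec\strx$ (\Cref{lem:batched-rho}).
\item Since at most $k$ strings arrive per round, each pod is charged at most $k$ times. This gives an error term $\frac{2k}{s}|\podpool\cap\targetlang_n|$.
\item Combined with $|\podpool\cap\targetlang_n|\le(k+1)|\guesslang\cap\targetlang_n|$ (every pooled string is eventually output or stolen), this error vanishes as $s\to\infty$. That is the real role of $s_n\to\infty$.
\end{itemize}
Your plan to ``restrict to full times and replace $1$ by $k$'' points in this direction. But it supplies neither the charging map for never-pooled strings nor the source of the $1/s$ loss, and that is exactly where the $k$-batched multiplicity has to be verified.
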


\begin{proofsketch} First, note that \cref{alg:pods-generator} outputs only from the current pod pool $\podpool_n$, which is populated with $\prec$-small unused elements from aggressive sets. Thus, the generator is novel. Denote by $\podpool$ the set of all strings that are ever placed in a pod. We relate the adversary's revealed strings to the generator's outputs via a two-step charging argument.
First, consider strings in $\advlang$ that are neither output nor ever placed into a pod (not in $\podpool$).
Using that (after finite steps) the identified intersection is valid and is full infinitely often, one can show: for each such missed string, earlier pods contain many $\prec$-smaller unused candidates. Formally, we can charge each missed $\strx$ to a pod index $\rho(\strx)$ whose pod contains at least $s$ elements all $\prec \strx$, and because the adversary introduces at most $k$ new strings per round, each pod index receives at most $k$ charges.
Hence the total number of such missed strings up to $\targetlang_n$ is at most $\frac{k}{s}|\podpool\cap\targetlang_n|+\bigo(n)$. Second, we control the backlog in the pod pool. Since the algorithm outputs one element per round and the adversary reveals at most $k$ new strings per round, the number of adversary strings that can accumulate in the pool without being output is at most a factor $k$ larger than the number of outputs. We can charge every pooled string that appears in $\advlang$ but not in $\guesslang$ to a nearby generator output. Each output receives at most $k$ such charges. Combining this with the first charging step yields $|\advlang\cap\targetlang_n|\le (k+1+\bigo(1))|\guesslang\cap\targetlang_n|$. Dividing by $|\targetlang_n|$,
taking $\liminf$, and letting the pod-size parameter $s\to\infty$ gives
$\setlowerrecall{\exhaustion{\targetlang}}{\guesslang}\ge \frac{1}{k+1}\setlowerrecall{\exhaustion{\targetlang}}{\advlang}$. The tail precision and precision are $1$, as \citeposs{kleinberg_language_2025} algorithm guarantees that from a finite step onward every $\Delta\guesslang_n$ is in $\targetlang$.\looseness=-1
\end{proofsketch}
Full proof in \Cref{app:batched-adv}. The following lemma provides a matching upper bound on the lower recall of any novel generator, generalizing \citet[Theorem~3.3]{kleinberg_language_2025} to the $k$-batched setting. Together with \Cref{lem:alpha-over-3}, this shows that the factor $\frac{1}{k+1}$ is worst-case optimal.
\begin{restatable}{lemma}{LemmaBatchedAdversaryUpper}\label{lem:batched_adversary} Assume a $k$-batched adversarial setting. There exist countable collections of languages $\coll$, such that for a target $\targetlang$ and an adversarial exhaustion $\exhaustion{\advlang}$ with $\advlang_n \uparrow \advlang$, and $\advlang \subseteq \targetlang$, the recall of any novel generator that satisfies $\setlowertailprecision{\targetlang}{\exhaustion{\guesslang}} = 1$ is bounded by
    \begin{equation}
        \setlowerrecall{\exhaustion{\targetlang}}{\guesslang} \leq \frac{\setlowerrecall{\exhaustion{\targetlang}}{\advlang}}{k+1}.
    \end{equation}  
\end{restatable}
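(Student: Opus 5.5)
We will prove \Cref{lem:batched_adversary} by adapting the construction behind \citet[Thm.~3.3]{kleinberg_language_2025} (\Cref{thm:half-tight}) so that the adversary spends its extra $k-1$ revelations per round on stealing. The collection will be a countable family $\coll$ whose languages are subsets of a fixed infinite set, say $\kleene{\alphabet}$ indexed as $u_1 \prec u_2 \prec \cdots$. The family is chosen so that, at every finite time, there are consistent hypotheses the generator cannot rule out, and any string it might output could belong to the eventual target. The target $\targetlang$ will be built adaptively against the fixed novel generator $\gen$, diagonalizing over its behavior. Since only one generator needs to be defeated for each quantifier instance, the adaptive construction is legitimate. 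The target enumeration $\exhaustion{\targetlang}$ will list $\targetlang$ in $\prec$-order.

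\textbf{The adversary's strategy.} The adversary runs in phases. In each round it reveals the $k$ $\prec$-smallest strings of the current candidate target that are neither revealed nor generated. This removes them from $\gen$'s novel options. Meanwhile $\gen$ adds at most one string per round. Because $\setlowertailprecision{\targetlang}{\exhaustion{\guesslang}}=1$ and the exhaustion is single-step, \Cref{prop:tail-prec-finite-time-conv} implies that after a finite time every output lies in $\targetlang$.

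The adversary exploits this by committing to $\targetlang$ only in the limit. The candidate languages in $\coll$ agree on the strings already played but differ on unplayed strings. Whenever $\gen$ outputs a string outside the current candidate, the adversary switches to a candidate that contains it; otherwise it keeps the candidate. If $\gen$ outputs invalid strings infinitely often, the switching argument shows tail precision is not $1$. So eventually $\gen$ only outputs strings of the final $\targetlang$, and the counting below applies.

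\textbf{Counting.} Among the $\prec$-initial segment $\targetlang_n$, the strings consumed by round $m$ are at most $k$ revealed and $1$ generated per round. The adversary takes the smallest available ones, so the consumed strings form a prefix of $\targetlang$ of length about $(k+1)m$, of which a $k/(k+1)$ share is revealed.

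To make $\setlowerrecall{\exhaustion{\targetlang}}{\advlang}=\alpha$ arbitrary, or simply $k/(k+1)$ in the full case, we interleave "unrevealed" strings of $\targetlang$ that the adversary never reveals. We then need them to be unreachable by $\gen$. This is the role of the collection: those strings lie in languages that are consistent only with a different, finite-prefix-consistent hypothesis. Outputting them risks infinitely many errors under the alternative target, so $\gen$ must avoid them.

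The simplest instance is the full-exhaustion case, which already yields the bound. There $|\guesslang\cap\targetlang_n| \le \frac{1}{k+1}|\targetlang_n|+o(n)$ along a subsequence, giving $\setlowerrecall{\exhaustion{\targetlang}}{\guesslang}\le \frac{1}{k+1} = \frac{1}{k+1}\setlowerrecall{\exhaustion{\targetlang}}{\advlang}$. This suffices, since the statement only asserts existence of some $\coll$ and adversary.

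\textbf{Main obstacle.} The key difficulty is ensuring that the generator cannot "jump ahead" of the adversary. It might output large strings far beyond the adversary's prefix, and the adversary would then never be able to steal them. This is exactly where the collection $\coll$ is needed.

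Following \citet{kleinberg_language_2025}, I would take $\coll$ to include, for each finite prefix, a language that agrees with the prefix and then continues along a sparse or shifted set. Any string $\gen$ emits ahead of the frontier can then be made invalid by the adversary switching the target. This forces $\gen$, after its last mistake, to output only from the current frontier. I expect checking that this switching happens only finitely often, so that $\targetlang\in\coll$ is well defined, while $\gen$'s eventual validity still pins it to the frontier, to be the delicate step. I would handle it by a priority argument indexed by $\gen$'s (finitely many) mistakes.
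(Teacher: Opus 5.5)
There is a genuine gap in the step you use to finish, the ``full-exhaustion case.'' Your adversary only reveals strings that are \emph{neither revealed nor generated}. A string produced by $\gen$ is therefore never revealed afterwards, and the revealed language is $\advlang=\targetlang\setminus\guesslang$, not $\targetlang$. Hence $\setlowerrecall{\exhaustion{\targetlang}}{\advlang}=1-\setupperrecall{\exhaustion{\targetlang}}{\guesslang}$, which is $<1$ as soon as $\gen$ has positive upper recall, so the equality $\frac{1}{k+1}=\frac{1}{k+1}\setlowerrecall{\exhaustion{\targetlang}}{\advlang}$ in your concluding line is false.

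Take the scenario your own counting describes: after its last mistake, $\gen$ takes one frontier string per round. Then $\setlowerrecall{\exhaustion{\targetlang}}{\guesslang}=\frac{1}{k+1}$ and, as you note yourself, $\setlowerrecall{\exhaustion{\targetlang}}{\advlang}=\frac{k}{k+1}$. So $\setlowerrecall{\exhaustion{\targetlang}}{\guesslang}=\frac{1}{k}\setlowerrecall{\exhaustion{\targetlang}}{\advlang}$, and the inequality you want fails.

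The missing idea is that the adversary must also eventually reveal every valid string $\gen$ produces, but only at a vanishing rate. The paper uses repetition rounds $n=2^m$. On these rounds the adversary reveals only the $k-1$ smallest unused strings, plus the smallest string of $\guesslang\cap\targetlang$ not yet revealed; this is allowed because novelty restricts $\gen$, not the adversary. Every string of $\targetlang$ has finitely many predecessors, so this gives $\advlang=\targetlang$ and $\setlowerrecall{\exhaustion{\targetlang}}{\advlang}=1$. The cost is only $\bigo(\log n)$ lost steals by round $n$.

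Second, the obstacle you single out as the main one, $\gen$ jumping ahead of the frontier, is not an obstacle. The machinery you propose for it is both incomplete and unnecessary: an adaptively built $\targetlang$, a switching rule you state in two conflicting ways (once making off-frontier outputs valid, once making them invalid), and a priority argument you leave open. Instead, let $\targetlang_{p_n}$ be the longest prefix of $\targetlang$ all of whose strings are revealed or generated by round $n$.
\begin{itemize}
\item Each fresh steal is among the smallest unused strings at its round, so it lies in this used prefix from then on. Hence all of the at least $kn-\bigo(\log n)$ fresh steals lie in $\targetlang_{p_n}$, and by novelty none of them is ever generated.
\item $\targetlang_{p_n}$ contains at most $|\guesslang_n|\le n$ generated strings, and none can be added later.
\end{itemize}
Therefore $|\guesslang\cap\targetlang_{p_n}|/p_n\le n/(n+kn-\bigo(\log n))\to\frac{1}{k+1}$ along $p_n\to\infty$. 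This holds for every novel single-step generator, wherever it outputs, and uses neither tail precision nor any structure of $\coll$. Tail precision, together with a collection containing both $\advlang$ and $\targetlang$, is only needed for the general partial-exhaustion bound, which is where the paper uses it.
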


The full proof is given in \Cref{app:upperrecall-batched-adv}. We now move to examining the contribution of exploration rounds: if a generator eventually outputs every string in the target language that the adversary never reveals, then recall is at least $1-\beta$.

\begin{restatable}{lemma}{LemmaUpperDensity}\label{lem:upperdensitynoveltyalgo}
    Let $\targetlang \in \coll$ be the target, and $\exhaustion{\advlang}$ an exhaustion produced by the adversary with $\setupperrecall{\exhaustion{\targetlang}}{\advlang}$. Assume that $\exhaustion{\guesslang}$ contains every element in $\targetlang \setminus \advlang$ at some point. Then
    \begin{equation}
        \setlowerrecall{\exhaustion{\targetlang}}{\guesslang} \geq 1 - \setupperrecall{\exhaustion{\targetlang}}{\advlang}.
    \end{equation}
\end{restatable}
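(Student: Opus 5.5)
By \Cref{lem:lower-recall}, the left-hand side $\setlowerrecall{\exhaustion{\targetlang}}{\guesslang}$ reduces to the purely set-theoretic quantity $\liminf_{n\to\infty}|\targetlang_n\cap\guesslang|/|\targetlang_n|$. The analogous statement for upper recall, which the paper notes holds in the same way, gives $\setupperrecall{\exhaustion{\targetlang}}{\advlang}=\limsup_{n\to\infty}|\targetlang_n\cap\advlang|/|\targetlang_n|$. With both sides expressed through membership in $\guesslang$ and $\advlang$ along the fixed target exhaustion $\exhaustion{\targetlang}$, the claim becomes a pointwise inequality followed by a liminf/limsup manipulation.

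The key step is to show that, by hypothesis, $\targetlang\setminus\advlang\subseteq\guesslang$. The assumption says that every element of $\targetlang\setminus\advlang$ belongs to some $\guesslang_n$, so it lies in the limit $\guesslang=\bigcup_n\guesslang_n$. Intersecting with $\targetlang_n\subseteq\targetlang$ gives $\targetlang_n\setminus\advlang\subseteq\targetlang_n\cap\guesslang$. Since $\targetlang_n$ is the disjoint union of $\targetlang_n\cap\advlang$ and $\targetlang_n\setminus\advlang$, we obtain for every $n$ with $|\targetlang_n|>0$
\begin{equation*}
\frac{|\targetlang_n\cap\guesslang|}{|\targetlang_n|}\;\ge\;\frac{|\targetlang_n\setminus\advlang|}{|\targetlang_n|}\;=\;1-\frac{|\targetlang_n\cap\advlang|}{|\targetlang_n|}.
\end{equation*}
Taking $\liminf$ on both sides and using $\liminf_n(1-a_n)=1-\limsup_n a_n$ then yields $\setlowerrecall{\exhaustion{\targetlang}}{\guesslang}\ge 1-\setupperrecall{\exhaustion{\targetlang}}{\advlang}$.

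The only point needing care is the reduction of the upper recall of $\advlang$ to the limsup formula. The paper asserts that \Cref{lem:lower-recall} holds analogously for upper recall, and I would invoke it here. In any case, only the inequality $\limsup_n|\targetlang_n\cap\advlang|/|\targetlang_n|\le\setupperrecall{\exhaustion{\targetlang}}{\advlang}$ is needed. I would verify this directly by taking the particular exhaustion $\advlang_n$ and noting that along it $|\targetlang_n\cap\advlang_n|$ approaches $|\targetlang_n\cap\advlang|$, since each finite $\targetlang_n\cap\advlang$ is eventually contained in $\advlang_m$. This is the same diagonal argument used in \Cref{lem:lower-recall}, so I do not expect a real obstacle. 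Degenerate indices with $|\targetlang_n|=0$ occur only finitely often, because $\targetlang$ is infinite, and so do not affect the limits.
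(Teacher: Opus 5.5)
Your argument is correct and is essentially the paper's proof. You derive the pointwise bound $|\targetlang_n\cap\guesslang|/|\targetlang_n| \ge 1-|\targetlang_n\cap\advlang|/|\targetlang_n|$ from $\targetlang\setminus\advlang\subseteq\guesslang$, then apply $\liminf(1-a_n)=1-\limsup a_n$ together with the membership characterizations of lower and upper recall (\Cref{lem:lower-recall} and its upper analogue).

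One caveat concerns your fallback "direct verification" using the adversary's own exhaustion $\advlang_n$, which is false. For fixed $n$, the set $\targetlang_n\cap\advlang$ lies in $\advlang_m$ only for $m$ large depending on $n$. So the diagonal ratio $|\targetlang_n\cap\advlang_n|/|\targetlang_n|$ can tend to $0$ even when $\advlang=\targetlang$, for example if the adversary reveals a new string only at times $2^i$. The inequality $\limsup_n|\targetlang_n\cap\advlang|/|\targetlang_n|\le\setupperrecall{\exhaustion{\targetlang}}{\advlang}$ instead comes from the constructed exhaustion $(\advlang\cap\targetlang_n)\cup Y_n$ with $Y_n\uparrow\advlang$, exactly as in Part~2 of the proof of \Cref{lem:lower-recall}.
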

\begin{proof}
As $\targetlang \setminus \advlang \subseteq \guesslang$, we get
\begin{equation}
\frac{|\guesslang\cap\targetlang_n|}{|\targetlang_n|}
\ \ge\frac{|\targetlang \setminus \advlang\cap\targetlang_n|}{|\targetlang_n|} \geq
1-\frac{|\advlang\cap\targetlang_n|}{|\targetlang_n|}.
\end{equation}
Taking $\liminf_{n\to\infty}$ on the left and using $\liminf_{n\to\infty}(1-a_n)=1-\limsup_{n\to\infty}(a_n)$ gives
\begin{subequations}
\begin{align} 
\setlowerrecall{\exhaustion{\targetlang}}{\guesslang}
&\ge
1-\limsup_{n\to\infty}\frac{|\advlang\cap\targetlang_n|}{|\targetlang_n|}
\\&=
1-\setupperrecall{\exhaustion{\targetlang}}{\advlang}.
\end{align}
\end{subequations}
\end{proof}

We now present \cref{alg:gen-with-novelty}, which alternates safe rounds with exploration rounds at times $m\in\exploration$. On safe rounds, we execute one step of the batched \citet{kleinberg_language_2025} routine: update the consistent chain, add a new pod from the aggressive set to the global pod pool, and output the $\prec$-smallest unused pool element. On exploration rounds, we output the next string in the fixed enumeration $\exhaustion{E}$ of $\kleene{\alphabet}$, skipping previously seen strings to enforce novelty. Since exploration does not advance the safe routine, the safe routine perceives a $2$-batched adversary.

\begin{restatable}{theorem}{TheoremGenerationWithNovelty}
\label{thm:noveltybound}
For any $\targetlang\in\coll$ and any single-step adversarial exhaustion $\exhaustion{\advlang}$ with lower recall $\setlowerrecall{\exhaustion{\targetlang}}{\advlang}$ and upper recall $\setupperrecall{\exhaustion{\targetlang}}{\advlang}$, Algorithm~\ref{alg:gen-with-novelty} generating $\exhaustion{\guesslang} = \{\guesslang_n\}_{n=0}^\infty$ with $\guesslang_n \uparrow \guesslang$ achieves precision $1$ and recall
\begin{equation}
    \setlowerrecall{\exhaustion{\targetlang}}{\guesslang}\ge\ \max \Big( 1-\setupperrecall{\exhaustion{\targetlang}}{\advlang} , \frac{\setlowerrecall{\exhaustion{\targetlang}}{\advlang}}{3}\Big).
\end{equation}
\end{restatable}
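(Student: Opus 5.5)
The proof splits the rounds of \Cref{alg:gen-with-novelty} into safe rounds ($m\notin\exploration$) and exploration rounds ($m\in\exploration$). Each of the two recall terms is obtained from one of these sub-streams. Precision is handled separately, using the sparsity of $\exploration$.

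\emph{Precision.} After a finite time, the safe routine outputs only elements of $\targetlang$. Every mistake therefore occurs either in the finitely many early safe rounds or in an exploration round. Up to time $n$ the number of mistakes is thus at most $c+\explorationidx(n)$, for some constant $c$. Novelty and single-step output give $|\guesslang_n|\ge n-\explorationidx(n)-c'$, because every safe round after a finite time outputs a fresh string; the pod pool is nonempty since the aggressive sets are infinite. Hence
\[
\frac{|\targetlang\cap\guesslang_n|}{|\guesslang_n|}\ge 1-\frac{c+\explorationidx(n)}{n-\explorationidx(n)-c'}\to 1,
\]
since $\explorationidx(n)/n\to0$. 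By \Cref{th:lower-precision-bounded-exhaustions}, $\setlowerprecision{\targetlang}{\exhaustion{\guesslang}}=1$.

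\emph{Term $1-\beta$.} I will show that every $\strx\in\targetlang\setminus\advlang$ is eventually in $\guesslang$ and then apply \Cref{lem:upperdensitynoveltyalgo}. Such a string is never revealed by the adversary. Exploration rounds walk through the enumeration $\exhaustion{E}$ of $\kleene{\alphabet}$, skipping only strings already seen. Every exploration round outputs a new string, and only finitely many strings precede $\strx$ in $\exhaustion{E}$. Since $\exploration$ is infinite, after finitely many exploration rounds either $\strx$ has been output by a safe round or the exploration pointer reaches $\strx$ and outputs it. Either way, $\strx\in\guesslang$.

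\emph{Term $\alpha/3$.} The safe rounds form a sub-run of the pods algorithm, and I reduce to \Cref{lem:alpha-over-3} with $k=2$. Exploration rounds are never adjacent and $1\notin\exploration$. So if we group each exploration round with the following safe round, the safe routine sees at most two new adversary strings per step, which is a $2$-batched exhaustion of the same $\advlang$.

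The main obstacle is that recall in \Cref{lem:alpha-over-3} is measured along $\targetlang_n$ in the original time scale. Exploration outputs also mark strings as used, which the plain pods algorithm does not model. I would handle this in two steps. First, the recall quantity $\setlowerrecall{\exhaustion{\targetlang}}{\cdot}$ depends only on the limit sets $\guesslang$ and $\advlang$ (\Cref{lem:lower-recall}), not on timing, so reindexing time is harmless. Second, a string consumed by exploration that would have gone into a pod or been output by the safe routine is itself in $\guesslang$. I would therefore treat exploration outputs as extra adversary-like revelations and rerun the charging argument of \Cref{lem:alpha-over-3}. Strings blocked by exploration are already counted in $\guesslang$, so they do not hurt the bound, and the charging still yields $|\advlang\cap\targetlang_n|\le(3+o(1))|\guesslang\cap\targetlang_n|$.

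Combining the two recall bounds gives the stated maximum.
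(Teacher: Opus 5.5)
Your proposal is correct and follows essentially the same route as the paper. Exploration rounds eventually output every string of $\targetlang\setminus\advlang$, which gives $1-\beta$ via \Cref{lem:upperdensitynoveltyalgo}. The safe rounds form a $2$-batched run of the pods routine, and the charging of \Cref{lem:alpha-over-3} still applies because strings consumed by exploration already lie in $\guesslang$. Precision follows from eventual validity of the safe rounds together with $\explorationidx(n)/n\to 0$.

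Your extra bookkeeping only makes explicit what the paper uses implicitly when it writes $|\guesslang_n|=n$ and invokes the $k=2$ bound directly. This covers your lower bound on $|\guesslang_n|$ and your remark that recall depends only on the limit sets, so reindexing time is harmless.
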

\begin{proofsketch}
By construction, \Cref{alg:gen-with-novelty} eventually outputs every string in $\targetlang\setminus\advlang$,
which yields recall at least $1-\beta$ (cf.\ \Cref{lem:upperdensitynoveltyalgo}).
Moreover, the safe routine is invoked in a $2$-batched manner, implying an additional recall guarantee of at least $\alpha/3$ from the safe rounds (\Cref{lem:alpha-over-3}). These two contributions cannot, in general, be added: it may occur that both the safe and the exploration rounds output only strings from $\targetlang\setminus\advlang$.
Precision $1$ follows because the safe routine outputs only strings from $\targetlang$ after some finite time, and the exploration rounds are scheduled at a vanishing rate.
\end{proofsketch}

Full proof in \Cref{app:gen-with-novelty}. The term $1-\beta$ dominates when the adversary reveals only a small portion of the target along some subsequence (i.e., $\beta$ is small), since exploration can then recover almost all strings in $\targetlang$ that are never stolen. Overall, \Cref{alg:gen-with-novelty} can improve recall, particularly in regimes where $\alpha$ and $\beta$ are small. Finally, in the setting where the adversary presents a full exhaustion of $\targetlang$, the original \citet{kleinberg_language_2025} bound cannot be improved.

\subsection{Generation with
\texorpdfstring{$\gamma$}{gamma}-novelty for
\texorpdfstring{$\gamma\in[0,1)$}{gamma in [0,1)}}
With \Cref{alg:gen-with-gamma-novelty}, we provide an adaptation of \Cref{alg:gen-with-novelty} that satisfies $\gamma$-novelty for $\gamma \in [0,1)$ and leads to higher recall guarantees. Concretely, \Cref{alg:gen-with-gamma-novelty} differs from \Cref{alg:gen-with-novelty} in two ways: (i) the exploration set
$\exploration$ is required to be $\gamma$-admissible, and (ii) on exploration rounds, the algorithm skips only strings that have already been generated, rather than also skipping strings revealed by the adversary. Thus, an exploration output may be non-novel if it has previously been revealed by the adversary. Since $\exploration$ is $\gamma$-admissible, the number of such
non-novel outputs remains within the allowed quota. We show that for every $\gamma\in[0,1)$, precision and recall equal to $1$ can be achieved.

\begin{restatable}{theorem}{TheoremGenerationWithGammaNovelty}
\label{thm:gamma-noveltybound}
Fix any $\gamma\in[0,1)$. For any $\targetlang\in\coll$ and any single-step
adversarial exhaustion $\exhaustion{\advlang}$ with
$\advlang_n\uparrow\advlang\subseteq\targetlang$,
\Cref{alg:gen-with-gamma-novelty} generates a $\gamma$-novel exhaustion
$\exhaustion{\guesslang}=\{\guesslang_n\}_{n=0}^\infty$ with
$\guesslang_n\uparrow\guesslang$ such that
\begin{equation}
    \setlowerrecall{\exhaustion{\targetlang}}{\guesslang}=1,
    \qquad
    \setlowerprecision{\targetlang}{\exhaustion{\guesslang}}=1.
\end{equation}
\end{restatable}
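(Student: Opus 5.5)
We will establish the three properties separately: $\gamma$-novelty, recall one, and precision one. Throughout, \Cref{alg:gen-with-gamma-novelty} is the algorithm of \Cref{alg:gen-with-novelty} with two changes. The exploration set $\exploration$ is $\gamma$-admissible (such sets exist by \cref{app:gamma-exploration-existence}), and exploration rounds skip only previously generated strings.

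\textbf{$\gamma$-novelty.} On safe rounds the batched pods routine only outputs unused pool elements. These exclude strings already revealed or output, so every safe output is novel when it is generated. Hence only exploration outputs can be non-novel. Fix $n$ with $|\guesslang_n|>0$. Every generated string is new to $\guesslang$, since both routines skip previously generated strings, so exploration outputs are distinct. We can then bound
\begin{equation}
|\guesslang_n\setminus N_n| \le \explorationidx(n).
\end{equation}
One point needs care: the routines may occasionally output nothing, so $|\guesslang_n|$ could be smaller than $n$. We handle this by indexing exploration by the number of rounds in which an output actually occurred. The plan is to define $\exploration$ over output counts, or equivalently to argue that the safe routine outputs in every safe round once the pool is nonempty. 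Then $|\guesslang_n|$ counts rounds, and $\gamma$-admissibility gives $\explorationidx(m)\le(1-\gamma)m$. Therefore $|N_n|\ge\gamma|\guesslang_n|$. Early steps with an empty output need a small fix, for example declaring the first round safe, which is consistent with $1\notin\exploration$. We expect this bookkeeping to be the fiddliest part.

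\textbf{Recall one.} Exploration enumerates $\kleene{\alphabet}$ in a fixed order $\exhaustion{E}$ and skips only strings already in $\guesslang$. Since $\exploration$ is infinite, every string of $\kleene{\alphabet}$ is either generated earlier or is reached by exploration. Hence every string is eventually generated, so $\targetlang\subseteq\guesslang$. In particular $\targetlang\cap\targetlang_n\subseteq\guesslang$ for all $n$. By \Cref{lem:lower-recall}, $\setlowerrecall{\exhaustion{\targetlang}}{\guesslang}=\liminf_n |\targetlang_n\cap\guesslang|/|\targetlang_n|=1$. This step uses crucially that revealed strings are no longer skipped, which is exactly what distinguishes this case from the novel case, where the bound is only $1-\beta$.

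\textbf{Precision one.} We use \Cref{th:lower-precision-bounded-exhaustions} (with $\func\equiv1$), which reduces precision to showing $\liminf_n |\targetlang\cap\guesslang_n|/|\guesslang_n|=1$.
\begin{itemize}
\item The safe routine perceives a $2$-batched adversary. By \Cref{lem:alpha-over-3} its tail precision is $1$, and by \Cref{prop:tail-prec-finite-time-conv} and \Cref{prop:validity-tail-precision} this is attained in finite time. So only finitely many safe outputs, say $c$, are invalid.
\item Invalid strings in $\guesslang_n$ therefore number at most $c+\explorationidx(n)$.
\item The number of outputs grows linearly in $n$, since safe rounds output a string each once the pool is nonempty. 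Hence the error fraction is $\bigo\big((c+\explorationidx(n))/n\big)\to0$, because $\explorationidx(n)/n\to0$.
\end{itemize}

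The main obstacle is checking that the batched pods routine keeps outputting in every safe round. Equivalently, we must check that $|\guesslang_n|=\Theta(n)$, so that both the admissibility count and the vanishing error rate hold with respect to $|\guesslang_n|$ rather than $n$.
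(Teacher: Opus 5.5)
Your proposal is correct and follows the paper's proof essentially step for step. Only exploration rounds can be non-novel, and $\gamma$-admissibility bounds them. Recall is one because exploration skips only already-generated strings and so eventually covers $\kleene{\alphabet}$. Precision is one because there are finitely many invalid safe outputs and $\explorationidx(n)/n\to0$.

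The bookkeeping you flag as the main obstacle is not a real issue; the paper simply asserts $|\guesslang_n|=n$. Each pod $\Pod_n$ is nonempty and disjoint from $\advlang_m\cup\guesslang_{m-1}$, since the aggressive set contains the identified intersection, which \textsc{IdIntersect} always keeps infinite. An exploration round also always finds an ungenerated string, so every round adds exactly one new string.
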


\begin{proofsketch}
The modified exploration rounds enumerate all of $\kleene{\alphabet}$, skipping
only strings that have already been generated. Hence, every string in $\targetlang$ is eventually generated, which yields recall $1$. Precision is $1$ because safe rounds are eventually valid, while exploration rounds occur at
a vanishing rate. Finally, non-novel outputs can occur only in exploration rounds. Since the exploration set is $\gamma$-admissible, the fraction of novel outputs in $\guesslang_n$ for each $n$ is at least $\gamma$ and hence the generated exhaustion is $\gamma$-novel for every fixed $\gamma<1$. The full proof is given in \Cref{app:gen-with-gamma-novelty-without-tail-prec}.
\end{proofsketch}

\section{Conclusion}
We recast generation in the limit as a precision--recall problem. We introduce a precision metric based on asymptotic error rates that remains informative even under infinite errors, and formalize eventual validity as the \emph{finite-time} stabilization of tail precision to $1$. We study generation in different settings, including the relaxations of the novelty constraint and of perfect tail precision, moving toward settings that better reflect how large language models generate. Our main finding is that allowing hallucinations at a vanishing rate provides an advantage over eventually valid generators, but only when the adversary permanently withholds a large portion of the target language. In this regime, exploration rounds can recover hidden strings while keeping the error rate under control. We also introduce a continuous novelty parameter and find that any fixed allowance of non-novel strings allows both perfect precision and perfect recall when the perfect tail precision requirement is relaxed.

\bibliography{custom_b}
\bibliographystyle{icml2026}

\newpage
\appendix

\section{Computability of exhaustions}
\label{app:comp-exhaustion}

We call an exhaustion $\exhaustion{\langs} \defeq \{\langs_n\}_{n=0}^\infty$ computable if there exists an algorithm (Turing machine) that emits $\langs_n$ in order. Note that an exhaustion may fail to be computable even if its limit language is.
Let $\langs$ be an infinite computable language, and write
$\str_1 \prec \str_2 \prec \cdots$ for the elements of $\langs$ in lexicographic order. Define an exhaustion $\exhaustion{\langs} \defeq \{\langs_n\}_{n=0}^\infty$ by $\langs_n \defeq \{\str_1,\ldots,\str_{\busybeaver{n}}\}$,
where $\busybeaver{n}$ denotes the busy beaver function.
If $\exhaustion{\langs}$ were computable, we could run a Turing machine emitting $\langs_n$ in order until it outputs the $n^{\text{th}}$ set. Returning its cardinality $|\langs_n|$ would compute the busy beaver function, contradicting the fact that it is not computable \citep{rado1962non}. However, if $\func$ and the limit language $\langs$ are computable, one can construct a computable $\func$-bounded exhaustion of $\langs$, provided that $\func$ allows for enough cumulative capacity.  

We denote the set of computable exhaustions of $\langs$ by $\exhaustioncompset{\langs}$. Computable exhaustions imply that the language is recursively enumerable, as shown in the following lemma.
\begin{lemma}
Let $\langs \subseteq \kleene{\alphabet}$ and let $\exhaustion{\langs}=\{\langs_n\}_{n=0}^\infty$ be an exhaustion of $\langs$.
If $\exhaustion{\langs}$ is computable, then $\langs$ is recursively enumerable.
\end{lemma}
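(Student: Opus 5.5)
The plan is to build an enumerator for $\langs$ directly from the machine that computes the exhaustion. Let $M$ be a Turing machine that emits $\langs_0, \langs_1, \langs_2, \ldots$ in order. Each $\langs_n$ is a finite set, so it can be output as a finite list of strings, and we may assume $M$ delimits one set from the next; this is the natural reading of ``emits $\langs_n$ in order.'' The enumerator $M'$ simulates $M$. Whenever $M$ finishes emitting some $\langs_n$, $M'$ prints every string of $\langs_n$. Repeats are harmless, since the definition of recursively enumerable explicitly allows them. If one prefers, $M'$ can instead print only the strings in $\Delta \langs_n = \langs_n \setminus \langs_{n-1}$, which is computable because both $\langs_{n-1}$ and $\langs_n$ are finite and explicitly available.

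Next, check that $M'$ lists exactly $\langs$.

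\emph{Soundness.} Every printed string lies in some $\langs_n \subseteq \bigcup_{m} \langs_m = \langs$.

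\emph{Completeness.} Take any $\str \in \langs$. Since $\langs_n \uparrow \langs$, there is an $n$ with $\str \in \langs_n$. Because $M$ emits the whole sequence, the emission of $\langs_n$ finishes after finitely many steps, and at that point $M'$ prints $\str$.

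The non-halting case needs no extra work. $M'$ may run forever, but that is allowed, and each individual $\langs_n$ is produced in finite time by the definition of computability of the exhaustion.

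The only step with any subtlety is the formalization: we need $M$ to produce each finite set $\langs_n$ in finite time with a recognizable end marker. I take this as part of the definition of a computable exhaustion. Under that reading the argument above is complete, and there is no real obstacle beyond it.
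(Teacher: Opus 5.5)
Your proposal is correct and follows the paper's proof essentially step for step. You simulate the machine emitting the exhaustion, print the strings of each $\langs_n$ once its delimited description is complete, and check soundness (every printed string lies in some $\langs_n \subseteq \langs$) and completeness (every element of $\langs$ lies in some $\langs_N$, which is emitted in finite time); your explicit end-marker assumption is the same convention the paper adopts when it has each $\langs_n$ emitted as a finite list with separators and an end marker.
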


\begin{proof} We will construct a Turing Machine that enumerates $\langs$. Since $\exhaustion{\langs}$ is computable, there exists a Turing machine $M$ that emits (in order) a finite description of
$\langs_1,\langs_2,\ldots$. This can be each $\langs_n$ output as a finite list of strings with separators and an end marker.

We construct an enumerator $E$ for $\langs$ as follows.
Run $M$. Whenever $M$ finishes emitting the description of $\langs_n$, decode this finite description and print every string in $\langs_n$
(to an output tape), then continue running $M$ to obtain $\langs_{n+1}$, and so on.

We show that $E$ enumerates exactly $\langs$.

(\emph{Soundness}) Every string printed by $E$ belongs to some $\langs_n$, hence belongs to
$\bigcup_{k=1}^\infty \langs_k = \langs$.

(\emph{Completeness}) Let $\strx \in \langs$. Since $\langs=\bigcup_{n=1}^\infty \langs_n$, there exists $N$ such that $\strx \in \langs_N$.
When $M$ emits $\langs_N$, the machine $E$ prints all elements of $\langs_N$, in particular $\strx$.
Thus every element of $\langs$ is eventually printed by $E$.

Therefore $E$ lists exactly the elements of $\langs$ (possibly with repeats), so $\langs$ is recursively enumerable.
\end{proof}

\newpage
\section{Precision}\label{app:precision}
\subsection{Exhaustion-level precision} \label{app:exhaustion-precision}
\begin{example}[Precision need not exist]\label{ex:precision-no-limit}

Let $\alphabet=\{\syma,\symb\}$. Define the target language
$\targetlang \defeq \{\syma^n \mid n\in\N\}$ with an exhaustion $\exhaustion{\targetlang}$,
\begin{equation}
\targetlang_n \defeq \{\syma^i \mid 1\le i\le n\}.
\end{equation}
Let the guess language be $\guesslang \defeq \targetlang \cup \{\symb^n \mid n\in\N\}$ and define an exhaustion
\begin{equation}
\guesslang_n \defeq \{\syma^1,\dots,\syma^n\}\ \cup\ \{\symb^1,\dots,\symb^{2^{\lfloor \log_2 n\rfloor}}\},
\end{equation}
i.e., at time $n$ we include all $\syma$-strings up to length $n$ and all $\symb$-strings up to the largest power of $2$ not exceeding $n$.
Then $\guesslang_n \subseteq \guesslang_{n+1}$ and $\guesslang_n\uparrow\guesslang$.
Because $\targetlang_n \subseteq \guesslang_n$ and the $\syma$-strings and $\symb$-strings are disjoint,

\noindent
\begin{minipage}{0.52\linewidth}
\begin{equation}
|\targetlang_n\cap\guesslang_n| = |\targetlang_n| = n,
\end{equation}
\end{minipage}\hfill
\begin{minipage}{0.46\linewidth}
\begin{equation}
|\guesslang_n| = n + 2^{\lfloor \log_2 n\rfloor},
\end{equation}
\end{minipage}

and therefore, we obtain

\begin{equation}\label{eq:pn-def}
\precisionratio \defeq \frac{|\targetlang_n\cap\guesslang_n|}{|\guesslang_n|},
\end{equation}
\begin{equation}\label{eq:pn-closed}
\precisionratio = \frac{n}{n+2^{\lfloor \log_2 n\rfloor}}.
\end{equation}
Evaluating $\precisionratio$ along two subsequences gives different limit points, as we can see below.
\begin{subequations}
\begin{align}
\lim_{m \to \infty}p_{2^m}&=\lim_{m \to \infty}\frac{2^m}{2^m+2^m}=\frac12,
\\
\lim_{m \to \infty} p_{2^{m+1}-1}
&=\lim_{m \to \infty} \frac{2^{m+1}-1}{(2^{m+1}-1)+2^m}=\frac23.
\end{align}
\end{subequations}
Thus, by the definitions of $\liminf$ and $\limsup$,
\begin{subequations}
\begin{align}
\lowerprecision{\exhaustion{\targetlang}}{\exhaustion{\guesslang}}
&=\liminf_{n\to\infty} \precisionratio \leq \frac12,
\\\upperprecision{\exhaustion{\targetlang}}{\exhaustion{\guesslang}}
&=\limsup_{n\to\infty} \precisionratio \geq \frac23,
\end{align}
\end{subequations}
so the limit defining precision does not exist.
\end{example}

\subsection{Membership-based precision}\label{app:membership-precision}
\begin{restatable}{lemma}{LemmaLowerPrecision}\label{lem:lower-precision}
Let $\targetlang\subseteq \kleene{\alphabet}$ be the target language and let $\exhaustion{\guesslang} = \{\guesslang_n\}_{n=0}^\infty$ be an exhaustion of the guess language $\guesslang$. Then, 
\begin{equation}\label{eq:best-lower-precision}
\sup_{\exhaustion{\targetlang}\in\exhaustionset{\targetlang}} \lowerprecision{\exhaustion{\targetlang}}{\exhaustion{\guesslang}} = \liminf_{n\to\infty}\frac{|\targetlang\cap \guesslang_n|}{|\guesslang_n|}.
\end{equation}
\end{restatable}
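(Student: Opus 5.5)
We prove the two inequalities separately. Write $q_n \defeq |\targetlang\cap\guesslang_n|/|\guesslang_n|$ for the right-hand side sequence. We assume, as throughout the paper, that $\guesslang$ is infinite, so $|\guesslang_n|>0$ for all large $n$ and the ratios are eventually well defined.

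\emph{Upper bound ($\le$).} Let $\exhaustion{\targetlang}=\{\targetlang_n\}_{n=0}^\infty$ be any exhaustion of $\targetlang$. Each $\targetlang_n$ is contained in $\targetlang$, so $\targetlang_n\cap\guesslang_n\subseteq\targetlang\cap\guesslang_n$. Hence
\[
\frac{|\targetlang_n\cap\guesslang_n|}{|\guesslang_n|}\le q_n \quad\text{for every } n.
\]
Taking $\liminf$ gives $\lowerprecision{\exhaustion{\targetlang}}{\exhaustion{\guesslang}}\le\liminf_n q_n$. Taking the supremum over $\exhaustionset{\targetlang}$ then gives the first inequality.

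\emph{Lower bound ($\ge$).} We construct a single exhaustion that attains the bound exactly. Since the class $\exhaustionset{\targetlang}$ places no cardinality constraint on the increments, we may set
\[
\targetlang_0\defeq\emptyset,\qquad \targetlang_n\defeq(\targetlang\cap\guesslang_n)\cup\{\str_1,\dots,\str_n\}\quad(n\in\N),
\]
where $\str_1,\str_2,\dots$ is any listing of $\targetlang$. We then check the required properties.
\begin{itemize}
\item Each $\targetlang_n$ is finite, because $\guesslang_n$ is finite.
\item The sets are nested, because both $\guesslang_n$ and the listing prefix grow with $n$.
\item The union is $\targetlang$, because every $\str_i$ is eventually included.
\item Every $\targetlang_n$ lies inside $\targetlang$, so $\targetlang_n\cap\guesslang_n\subseteq\targetlang\cap\guesslang_n$. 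Conversely, $\targetlang\cap\guesslang_n\subseteq\targetlang_n$ by construction. Hence $\targetlang_n\cap\guesslang_n=\targetlang\cap\guesslang_n$.
\end{itemize}
For this exhaustion the precision ratio therefore equals $q_n$ at every step, so $\lowerprecision{\exhaustion{\targetlang}}{\exhaustion{\guesslang}}=\liminf_n q_n$. The supremum is thus attained, which gives equality.

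The only delicate point is ensuring that the constructed sequence is a genuine exhaustion: each set finite, the sets nested, and the limit exactly $\targetlang$. Adding the listing prefix $\{\str_1,\dots,\str_n\}$ handles strings of $\targetlang$ that never appear in $\guesslang$.

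This is also why the unbounded lemma is easier than \Cref{th:lower-precision-bounded-exhaustions}. There, the $\func$-bounded constraint forbids dumping all of $\targetlang\cap\guesslang_n$ into the exhaustion at once. One would instead have to include these strings gradually and show that the lag is asymptotically negligible in the $\liminf$. In the present unbounded setting no such issue arises.
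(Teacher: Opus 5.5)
Your proof is correct and essentially matches the paper's. The upper bound follows from $\targetlang_n\subseteq\targetlang$, and the lower bound uses the exhaustion $(\targetlang\cap\guesslang_n)\cup Y_n$, with $Y_n$ a growing listing prefix of $\targetlang$, whose precision ratio equals $|\targetlang\cap\guesslang_n|/|\guesslang_n|$ at every step.
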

\begin{proof}
We prove the lemma in two parts.

\textbf{Part 1 ($\sup_{\exhaustion{\targetlang}\in\exhaustionset{\targetlang}}  \lowerprecision{\exhaustion{\targetlang}}{\exhaustion{\guesslang}} \leq \liminf_{n\to\infty}\frac{|\targetlang\cap \guesslang_n|}{|\guesslang_n|}$).}
Choose $\exhaustion{\targetlang}\in\exhaustionset{\targetlang}$ arbitrarily. 
Because $\targetlang_n\subseteq \targetlang$ for all $n$, we have $|\targetlang_n\cap \guesslang_n|\le |\targetlang\cap \guesslang_n|$ for
every $n$. 
Dividing by $|\guesslang_n|$ and taking $\liminf$ yields
\begin{equation}
\lowerprecision{\exhaustion{\targetlang}}{\exhaustion{\guesslang}}
=
\liminf_{n\to\infty}\frac{|\targetlang_n\cap \guesslang_n|}{|\guesslang_n|}
\le
\liminf_{n\to\infty}\frac{|\targetlang\cap \guesslang_n|}{|\guesslang_n|}.
\end{equation}
Taking the supremum over $\exhaustion{\targetlang}\in\exhaustionset{\targetlang}$ proves
\begin{equation}
\sup_{\exhaustion{\targetlang}\in\exhaustionset{\targetlang}} \ \lowerprecision{\exhaustion{\targetlang}}{\exhaustion{\guesslang}}
\le
\liminf_{n\to\infty}\frac{|\targetlang\cap \guesslang_n|}{|\guesslang_n|}.
\end{equation}

\textbf{Part 2 ($\liminf_{n\to\infty}\frac{|\targetlang\cap \guesslang_n|}{|\guesslang_n|} \leq \sup_{\exhaustion{\targetlang}\in \exhaustionset{\targetlang}}\lowerprecision{\exhaustion{\targetlang}}{\exhaustion{\guesslang}}$).}

Fix an enumeration $\exhaustion{Y} = \{Y_n\}_{n=0}^\infty $ with $Y_n \uparrow \targetlang$.
We define an exhaustion $\exhaustion{\targetlang}^\star$. 
For each $n \in \N$, define 
\begin{equation}
\targetlang_n^\star \defeq (\targetlang\cap \guesslang_n)\ \cup\ Y_n.
\end{equation}
Note that each $\targetlang_n^\star$ is finite, $\targetlang_n^\star\subseteq \targetlang_{n+1}^\star$, and
$\targetlang_n^\star\uparrow\targetlang$.
Thus, $\exhaustion{\targetlang}^\star\in\exhaustionset{\targetlang}$.
Moreover,
\begin{equation}\label{eq:star-intersection}
\targetlang_n^\star\cap \guesslang_n
=
\bigl((\targetlang\cap \guesslang_n)\cup Y_n\bigr)\cap \guesslang_n
=
(\targetlang\cap \guesslang_n)\cup(Y_n\cap \guesslang_n)
=
(\targetlang  \cup Y_n) \cap \guesslang_n
=
\targetlang\cap \guesslang_n.
\end{equation}
where the last equality uses $Y_n \subseteq \targetlang$ for all $n \geq 0$.
Hence, $\frac{|\targetlang_n^\star\cap \guesslang_n|}{|\guesslang_n|}
=
\frac{|\targetlang\cap \guesslang_n|}{|\guesslang_n|}$ for all $n \geq 0$.
Taking $\liminf$ yields
\begin{equation}
\lowerprecision{\exhaustion{\targetlang}^\star}{\exhaustion{\guesslang}}
=
\liminf_{n\to\infty}\frac{|\targetlang_n^\star\cap \guesslang_n|}{|\guesslang_n|}
=
\liminf_{n\to\infty}\frac{|\targetlang\cap \guesslang_n|}{|\guesslang_n|},
\end{equation}
which proves the reverse inequality in~\Cref{eq:best-lower-precision} as $\lowerprecision{\exhaustion{\targetlang}^\star}{\exhaustion{\guesslang}} \leq \sup_{\exhaustion{\targetlang}\in\exhaustionset{\targetlang}} \lowerprecision{\exhaustion{\targetlang}}{\exhaustion{\guesslang}} $.
\end{proof}

An analogous property holds for upper precision.

\begin{restatable}{lemma}{LemmaUpperPrecision}\label{lem:upper-precision}
Let $\targetlang\subseteq \kleene{\alphabet}$ be the target language and let $\exhaustion{\guesslang} = \{\guesslang_n\}_{n=0}^\infty$ be an exhaustion of the guess language $\guesslang$. Then, 
\begin{equation}\label{eq:best-upper-precision}
\sup_{\exhaustion{\targetlang}\in\exhaustionset{\targetlang}} \upperprecision{\exhaustion{\targetlang}}{\exhaustion{\guesslang}} = \limsup_{n\to\infty}\frac{|\targetlang\cap \guesslang_n|}{|\guesslang_n|}.
\end{equation}
\end{restatable}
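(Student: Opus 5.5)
The plan is to mirror the proof of \Cref{lem:lower-precision} exactly, replacing $\liminf$ by $\limsup$ throughout. The argument has two inequalities. The first comes from monotonicity under $\targetlang_n\subseteq\targetlang$. The second comes from exhibiting one exhaustion of $\targetlang$ that attains the right-hand side.

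\textbf{Upper bound.} Fix an arbitrary $\exhaustion{\targetlang}\in\exhaustionset{\targetlang}$. Since $\targetlang_n\subseteq\targetlang$, we have $|\targetlang_n\cap\guesslang_n|\le|\targetlang\cap\guesslang_n|$ for every $n$. Dividing by $|\guesslang_n|$ preserves this pointwise inequality, and $\limsup$ is monotone. Hence
\[
\upperprecision{\exhaustion{\targetlang}}{\exhaustion{\guesslang}}\le\limsup_{n\to\infty}\frac{|\targetlang\cap\guesslang_n|}{|\guesslang_n|}.
\]
Taking the supremum over all $\exhaustion{\targetlang}$ gives the inequality ``$\le$''. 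As in the lower case, we only consider indices $n$ with $|\guesslang_n|>0$. Since $\guesslang$ is assumed infinite, this excludes only finitely many $n$, which does not affect $\limsup$.

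\textbf{Lower bound.} Reuse the exhaustion from Part~2 of \Cref{lem:lower-precision}. Fix an enumeration $\{Y_n\}$ of $\targetlang$ and set $\targetlang^\star_n\defeq(\targetlang\cap\guesslang_n)\cup Y_n$. Three properties need checking. Each $\targetlang^\star_n$ is finite, because $\guesslang_n$ and $Y_n$ are. The sets are nested, because both $\guesslang_n$ and $Y_n$ are increasing in $n$. They exhaust $\targetlang$, because $\targetlang^\star_n\subseteq\targetlang$ and $\bigcup_n Y_n=\targetlang$. So $\exhaustion{\targetlang}^\star\in\exhaustionset{\targetlang}$.

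The identity \eqref{eq:star-intersection} gives $\targetlang^\star_n\cap\guesslang_n=\targetlang\cap\guesslang_n$ for every $n$. The two ratio sequences are therefore identical, so their $\limsup$'s agree:
\[
\upperprecision{\exhaustion{\targetlang}^\star}{\exhaustion{\guesslang}}=\limsup_{n\to\infty}\frac{|\targetlang\cap\guesslang_n|}{|\guesslang_n|}.
\]
This quantity is bounded above by the supremum over $\exhaustionset{\targetlang}$, which yields ``$\ge$''.

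There is no real obstacle here. The only point needing care is that the construction of $\exhaustion{\targetlang}^\star$ is independent of whether we take $\liminf$ or $\limsup$. This holds because the equality of the two ratio sequences is pointwise in $n$, so any limit functional applied to them coincides.
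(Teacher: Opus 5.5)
Your proposal is correct and follows the same route as the paper, whose entire proof of this lemma is ``analogous to the proof of \Cref{lem:lower-precision}.'' You carry out exactly that analogy: the pointwise bound $|\targetlang_n\cap\guesslang_n|\le|\targetlang\cap\guesslang_n|$ gives ``$\le$'', and the exhaustion $\targetlang^\star_n=(\targetlang\cap\guesslang_n)\cup Y_n$, whose ratio sequence coincides termwise with the right-hand side, gives ``$\ge$''.
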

\begin{proof}
    Analogous to proof of \Cref{lem:lower-precision}.
\end{proof}

\TheoremLowerPrecisionExhaustion*
\begin{proof}
We prove the theorem in two parts.

\textbf{Part 1 ($\sup_{\exhaustion{\targetlang}\in \exhaustionsetbounded{\targetlang}}  \lowerprecision{\exhaustion{\targetlang}}{\exhaustion{\guesslang}} \leq \liminf_{n\to\infty}\frac{|\targetlang\cap \guesslang_n|}{|\guesslang_n|}$).} 
This is analogous to the proof of Part 1 in \cref{lem:lower-precision}.  

\textbf{Part 2 ($\liminf_{n\to\infty}\frac{|\targetlang\cap \guesslang_n|}{|\guesslang_n|}\le \sup_{\exhaustion{\targetlang}\in \exhaustionsetbounded{\targetlang}} \lowerprecision{\exhaustion{\targetlang}}{\exhaustion{\guesslang}}$).}
Fix an enumeration $\exhaustion{Z} = \{Z_n\}_{n=0}^\infty$ of $\targetlang\setminus \guesslang$.\footnote{If $\targetlang \setminus \guesslang$ is finite, we can fix a single-step exhaustion that converges to a finite language.}  
Let $h_n \defeq |\Delta \guesslang_n \setminus \targetlang|$ and $H_n = \sum_{i=1}^n h_i$.
We construct $\exhaustion{\targetlang^\star}=\{\targetlang_n^\star\}_{n=0}^\infty\in \exhaustionsetbounded{\targetlang}$ and distinguish two cases.

\medskip
\fbox{%
  \begin{minipage}{0.28\linewidth}
  \textbf{Case 1.} $\lim_{n\to\infty} H_n = \infty$.
  \end{minipage}%
}

Define for each $n\ge 0$,
\begin{equation}\label{eq:Tstar-case1-slack}
\targetlang_n^\star \defeq (\targetlang\cap \guesslang_n)\ \cup Z_{
H_n}.
\end{equation}

\fbox{%
  \begin{minipage}{0.48\linewidth}
  \textbf{Claim 1.} $\targetlang_n^\star$ is an $\func$-bounded exhaustion of $\targetlang$.
  \end{minipage}%
}

To start, note $\targetlang_n^\star$ is finite and $\targetlang_n^\star\subseteq \targetlang$.
Moreover, $\targetlang_n^\star\subseteq \targetlang_{n+1}^\star$ since $\guesslang_n\subseteq \guesslang_{n+1}$ implies $\targetlang\cap \guesslang_n\subseteq \targetlang\cap \guesslang_{n+1}$,
and since $H_n\le H_{n+1}$ holds. 
Furthermore, by construction
\begin{equation}
|\Delta\targetlang_n^\star|
=
|\targetlang\cap \Delta\guesslang_n| + h_n \leq f(n).
\end{equation}
Finally, because $\limsup H_n=\infty$, every $Z_k$ appears in $\targetlang_n^\star$ for sufficiently large $n$.
Also, $(\targetlang\cap \guesslang_n)\uparrow (\targetlang\cap \guesslang)$.
Hence $\targetlang_n^\star\uparrow \targetlang$ as desired.

\fbox{%
  \begin{minipage}{0.32\linewidth}
  \textbf{Claim 2.} $\targetlang_n^\star\cap \guesslang_n = \targetlang\cap \guesslang_n$.
  \end{minipage}%
}

Finally, for every $n$,
\begin{equation}\label{eq:case1-intersection-slack}
\targetlang_n^\star\cap \guesslang_n = (\targetlang\cap \guesslang_n) \cup (Z_{H_n}\cap \guesslang_n) = \targetlang\cap \guesslang_n,
\end{equation}
because $Z_{H_n}\subseteq \targetlang\setminus \guesslang$ is disjoint from $\guesslang_n$. We have that
\begin{equation}
\lowerprecision{\exhaustion{\targetlang}^\star}{\exhaustion{\guesslang}}
=
\liminf_{n\to\infty}\frac{|\targetlang_n^\star\cap \guesslang_n|}{|\guesslang_n|}
=
\liminf_{n\to\infty}\frac{|\targetlang\cap \guesslang_n|}{|\guesslang_n|}.
\end{equation}

\medskip
\fbox{%
  \begin{minipage}{0.3\linewidth}
  \textbf{Case 2.} $\limsup_n H_n < \infty$.
  \end{minipage}%
}

First, we argue that the following $\liminf$ is 1.
\begin{equation}\label{eq:case2-rhs}
\liminf_{n\to\infty}\frac{|\targetlang\cap \guesslang_n|}{|\guesslang_n|}
=\liminf_{n\to\infty}\left(1-\frac{H_n}{|\guesslang_n|}\right)
=1.
\end{equation}
This holds as $\limsup_n H_n < \infty$ and $\exhaustion{\guesslang}$ exhausts an infinite language $\guesslang$.
Next, we show that we can construct an exhaustion that achieves the limit.
Choose an exploration set $\exploration \subset \N$ according to \Cref{def:exploration-set} and write
$\explorationidx(m)\defeq |\exploration\cap\{1,\dots,m\}|$.

Fix an enumeration $\exhaustion{Z}=\{Z_j\}_{j=0}^\infty$ of $\targetlang\setminus \guesslang$ with $Z_0\defeq\emptyset$.
Fix an enumeration $\exhaustion{Y}=\{Y_j\}_{j=0}^\infty$ of $\targetlang\cap \guesslang$ such that
\begin{equation}\label{eq:Y-refines-A}
Y_{|\guesslang_n\cap \targetlang|}=\guesslang_n\cap \targetlang\qquad\text{for all $n$.}
\end{equation}
Note that the enumerations are disjoint.

Let $m_n\defeq |\guesslang_n|$. We define the exhaustion $\exhaustion{\targetlang}^\star=\{\targetlang_n^\star\}_{n=0}^\infty$ by
\begin{equation}\label{eq:Tstar-def}
\targetlang_n^\star \defeq Y_{m_n-\explorationidx(m_n)}\ \cup\ Z_{\explorationidx(m_n)}.
\end{equation}
Then $|\targetlang_n^\star|=(m_n-\explorationidx(m_n))+\explorationidx(m_n)=m_n=|\guesslang_n|$, and thus $|\Delta\targetlang_n^\star| = m_n - m_{n-1} \leq \func(n)$. Furthermore, $\targetlang_n^\star\subseteq \targetlang$ for all $n$.
Moreover, since $m_n$, $\explorationidx(m_n)$ and $m_n - \explorationidx(m_n)$ are nondecreasing, also $\targetlang_n^\star\subseteq \targetlang_{n+1}^\star$.
Finally,  $m_n\to\infty$, $m_n -\explorationidx(m_n) \to \infty$ and $\explorationidx(m_n)\to\infty$ (because $\exploration$ is infinite, but contains no consecutive integers), we have
$\targetlang_n^\star\uparrow \targetlang$.
As $H_n$ is bounded and $\explorationidx(m_n) \to \infty$, for large enough $n$ we have
\begin{equation}\label{eq:case2-intersection}
|\targetlang_n^\star\cap \guesslang_n|
\ \ge\
|Y_{\,m_n-\explorationidx(m_n)}|
\ =\
m_n-\explorationidx(m_n).
\end{equation}
Dividing by $m_n=|\guesslang_n|$ and taking $\liminf$ yields
\begin{align}
\lowerprecision{\exhaustion{\targetlang^\star}}{\exhaustion{\guesslang}}
&=
\liminf_{n\to\infty}\frac{|\targetlang_n^\star\cap \guesslang_n|}{|\guesslang_n|}
\\ &\ge\
\liminf_{n\to\infty}\left(1-\frac{\explorationidx(m_n)}{m_n}\right)
=1,
\end{align}
using $\explorationidx(m_n)/m_n\to 0$.
Together with \Cref{eq:case2-rhs}, this yields
$\lowerprecision{\exhaustion{\targetlang}^\star}{\exhaustion{\guesslang}}=1$.
\end{proof}

\newpage
\section{Recall}
\subsection{Exhaustion-level recall}
\label{app:exhaustion-recall}

\begin{example}[Recall need not exist]\label{ex:recall-no-limit}
Let $\alphabet=\{\syma\}$. Define the target language $\targetlang \defeq \{\syma^n \mid n\in\N\}$ with exhaustion $\exhaustion{\targetlang}$ given by
\begin{equation}
\targetlang_n \defeq \{\syma^i \mid 1\le i\le n\}.
\end{equation}
Define the guess language $\guesslang\subseteq\targetlang$ by alternating blocks: for each $m\ge 1$ let
\begin{equation}
I_m \defeq \bigl(2^{2m-1},\,2^{2m}\bigr]
\end{equation}
and
\begin{equation}
E_m \defeq \bigl(2^{2m},\,2^{2m+1}\bigr].
\end{equation}
Then include all $\syma^n$ with $n \in I_m$ and exclude $\syma^n$ with $n \in E_m$. Formally, 
\begin{equation}
\guesslang \defeq \{\syma^n \mid n\in \bigcup_{m\ge1} I_m\}.
\end{equation}
Let $\exhaustion{\guesslang}$ be the canonical exhaustion induced by $\exhaustion{\targetlang}$,
\begin{equation}
\guesslang_n \defeq \guesslang \cap \targetlang_n.
\end{equation}
Then $\guesslang_n\subseteq \guesslang_{n+1}$ and  $\guesslang_n \uparrow \guesslang$.

Since $\guesslang_n\subseteq\targetlang_n$, we have
\begin{equation}
|\targetlang_n\cap\guesslang_n| = |\guesslang_n|,
\qquad
|\targetlang_n| = n.
\end{equation}
Defining
\begin{equation}
r_n \defeq \frac{|\targetlang_n\cap\guesslang_n|}{|\targetlang_n|},
\end{equation}
we obtain
\begin{equation}
r_n = \frac{|\guesslang_n|}{n}.
\end{equation}

Let $S_m \defeq \sum_{j=1}^m |I_j| = \sum_{j=1}^m 2^{2j-1} = \frac{2}{3}\bigl(4^m-1\bigr)$.
Then at the end of an included block we have $|\guesslang_{2^{2m}}|=S_m$, hence
\begin{equation}
r_{2^{2m}}
=\frac{S_m}{2^{2m}}
=\frac{\frac{2}{3}(4^m-1)}{4^m}
=\frac{2}{3}\Bigl(1-4^{-m}\Bigr)
\xrightarrow[m\to\infty]{}\frac{2}{3}.
\end{equation}
At the end of the subsequent excluded block the numerator is still $S_m$, so
\begin{equation}
r_{2^{2m+1}}
=\frac{S_m}{2^{2m+1}}
=\frac{\frac{2}{3}(4^m-1)}{2\cdot 4^m}
=\frac{1}{3}\Bigl(1-4^{-m}\Bigr)
\xrightarrow[m\to\infty]{}\frac{1}{3}.
\end{equation}
Thus $r_n$ has two different limit points, and therefore, by the definitions of $\liminf$ and $\limsup$,
\begin{align}
\lowerrecall{\exhaustion{\targetlang}}{\exhaustion{\guesslang}}
&=\liminf_{n\to\infty} r_n \le \frac{1}{3},
\\
\upperrecall{\exhaustion{\targetlang}}{\exhaustion{\guesslang}}
&=\limsup_{n\to\infty} r_n \ge \frac{2}{3},
\end{align}
so the limit defining recall does not exist.
\end{example}

\subsection{Coverage-based recall}
\label{app:coverage-recall}
\LemmaLowerRecall*
\begin{proof}
We prove the lemma in two parts.

\textbf{Part 1 ($\sup_{\exhaustion{\guesslang}\in\exhaustionset{\guesslang}} \lowerrecall{\exhaustion{\targetlang}}{\exhaustion{\guesslang}} \leq \liminf_{n\to\infty}\frac{|\targetlang_n\cap \guesslang|}{|\targetlang_n|}$).}
Choose $\exhaustion{\guesslang}\in\exhaustionset{\guesslang}$ arbitrarily. 
Because $\guesslang_n\subseteq \guesslang$ for all $n$, we have $|\targetlang_n\cap \guesslang_n|\le |\targetlang_n\cap \guesslang|$ for
every $n$. 
Dividing by $|\targetlang_n|$ and taking $\liminf$ yields
\begin{equation}
\lowerrecall{\exhaustion{\targetlang}}{\exhaustion{\guesslang}}
=
\liminf_{n\to\infty}\frac{|\targetlang_n\cap \guesslang_n|}{|\targetlang_n|}
\le
\liminf_{n\to\infty}\frac{|\targetlang_n\cap \guesslang|}{|\targetlang_n|}.
\end{equation}
Taking the supremum over $\exhaustion{\guesslang}\in\exhaustionset{\guesslang}$ proves
\begin{equation}
\sup_{\exhaustion{\guesslang}\in\exhaustionset{\guesslang}} \ \lowerrecall{\exhaustion{\targetlang}}{\exhaustion{\guesslang}}
\le
\liminf_{n\to\infty}\frac{|\targetlang_n\cap \guesslang|}{|\targetlang_n|}.
\end{equation}

\textbf{Part 2 ($\liminf_{n\to\infty}\frac{|\targetlang_n\cap \guesslang|}{|\targetlang_n|} \leq \sup_{\exhaustion{\guesslang}\in\exhaustionset{\guesslang}}\lowerrecall{\exhaustion{\targetlang}}{\exhaustion{\guesslang}}$).}

To show  the reverse inequality, fix an exhaustion $\exhaustion{Y} = \{Y_n\}_{n=0}^\infty$ with $\exhaustion{Y}\uparrow\guesslang$. 
Then, we define the following exhaustion $\exhaustion{\guesslang^\star}$ of $\guesslang$: for each $n \in \N$, let 
\begin{equation}
\guesslang_n^\star \defeq (\guesslang\cap \targetlang_n)\ \cup\ Y_n.
\end{equation}
Then each $\guesslang_n^\star$ is finite, $\guesslang_n^\star\subseteq \guesslang_{n+1}^\star$, and
$\guesslang_n^\star \uparrow \guesslang$, so $\exhaustion{\guesslang}^\star\in\exhaustionset{\guesslang}$.
Moreover,
\begin{align}
\guesslang_n^\star\cap \targetlang_n
&=
\bigl((\guesslang\cap \targetlang_n)\cup Y_n\bigr)\cap \targetlang_n
\\&=
(\guesslang\cap \targetlang_n)\cup(Y_n\cap \targetlang_n)
\\&=
\guesslang\cap \targetlang_n,
\end{align}
where the last equality uses $Y_n\cap \targetlang_n\subseteq \guesslang\cap \targetlang_n$.
Hence, for all $n$,
\begin{equation}
\frac{|\guesslang_n^\star\cap \targetlang_n|}{|\targetlang_n|}
=
\frac{|\guesslang\cap \targetlang_n|}{|\targetlang_n|}.
\end{equation}
Taking $\liminf$ yields
\begin{equation}
\lowerrecall{\exhaustion{\targetlang}}{\exhaustion{\guesslang}^\star}
=
\liminf_{n\to\infty}\frac{|\guesslang_n^\star\cap \targetlang_n|}{|\targetlang_n|}
=
\liminf_{n\to\infty}\frac{|\guesslang\cap \targetlang_n|}{|\targetlang_n|},
\end{equation}
which proves the reverse inequality in~\Cref{eq:best-lower-recall}.
\end{proof}

The same property can be proven for upper recall in an analogous way.

\begin{restatable}{lemma}{LemmaUpperRecall}\label{lem:upper-recall}
Let $\targetlang\subseteq \kleene{\alphabet}$ be the target language and let $\exhaustion{\guesslang} = \{\guesslang_n\}_{n=0}^\infty$ be an exhaustion of the guess language $\guesslang$. Then, 
\begin{equation}\label{eq:best-upper-recall}
\sup_{\exhaustion{\guesslang}\in\exhaustionset{\guesslang}} \upperrecall{\exhaustion{\targetlang}}{\exhaustion{\guesslang}} = \limsup_{n\to\infty}\frac{|\targetlang_n\cap \guesslang|}{|\targetlang_n|}.
\end{equation}
\end{restatable}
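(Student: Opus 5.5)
The plan is to mirror the proof of \Cref{lem:lower-recall}, replacing $\liminf$ by $\limsup$ throughout, and to prove the two inequalities separately.

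\textbf{Upper bound.} Fix an arbitrary $\exhaustion{\guesslang}\in\exhaustionset{\guesslang}$. Since $\guesslang_n\subseteq\guesslang$, we have $|\targetlang_n\cap\guesslang_n|\le|\targetlang_n\cap\guesslang|$ for every $n$. Dividing by $|\targetlang_n|$ and using that $\limsup$ is monotone under pointwise inequality gives
\[
\upperrecall{\exhaustion{\targetlang}}{\exhaustion{\guesslang}}\le\limsup_{n\to\infty}\frac{|\targetlang_n\cap\guesslang|}{|\targetlang_n|}.
\]
Taking the supremum over $\exhaustion{\guesslang}$ yields the inequality ``$\le$''.

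\textbf{Lower bound.} Fix any exhaustion $\exhaustion{Y}=\{Y_n\}_{n=0}^\infty$ of $\guesslang$ and set $\guesslang_n^\star\defeq(\guesslang\cap\targetlang_n)\cup Y_n$. We verify that $\exhaustion{\guesslang}^\star$ is a valid exhaustion of $\guesslang$:
\begin{itemize}
\item each $\guesslang_n^\star$ is finite, since it is a union of two finite sets;
\item the sequence is nested, because both $\targetlang_n$ and $Y_n$ are increasing;
\item $\guesslang_n^\star\uparrow\guesslang$, since every $\guesslang_n^\star\subseteq\guesslang$ and $Y_n\uparrow\guesslang$.
\end{itemize}
(If $\guesslang_0^\star$ must be $\emptyset$, redefine the $0$-th term; this does not affect any limit.)

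The key identity is $\guesslang_n^\star\cap\targetlang_n=\guesslang\cap\targetlang_n$, which holds because $Y_n\cap\targetlang_n\subseteq\guesslang\cap\targetlang_n$. The ratio sequences therefore coincide termwise, so their $\limsup$'s are equal. Hence
\[
\upperrecall{\exhaustion{\targetlang}}{\exhaustion{\guesslang}^\star}=\limsup_{n\to\infty}\frac{|\targetlang_n\cap\guesslang|}{|\targetlang_n|},
\]
and this value is bounded above by the supremum, which gives ``$\ge$''.

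There is no real obstacle; the argument is a termwise sequence equality. The only care points are that $\limsup$ preserves pointwise inequalities, and that $|\targetlang_n|>0$ for large $n$ since $\targetlang$ is infinite, so that finitely many initial terms can be ignored.
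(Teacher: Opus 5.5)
Your proposal is correct and matches the paper's intended proof, which is stated as analogous to \Cref{lem:lower-recall}: monotonicity of $\limsup$ gives ``$\le$'', and the exhaustion $\guesslang_n^\star=(\guesslang\cap\targetlang_n)\cup Y_n$, which satisfies $\guesslang_n^\star\cap\targetlang_n=\guesslang\cap\targetlang_n$, gives ``$\ge$''. Your caveat about redefining the $0$-th term is unnecessary, since $\targetlang_0=Y_0=\emptyset$ already forces $\guesslang_0^\star=\emptyset$.
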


Furthermore, we formulate a theorem analogous to \Cref{th:lower-precision-bounded-exhaustions} for the recall.

\begin{restatable}{theorem}{TheoremLowerRecallExhaustion}\label{th:lower-recall-bounded-exhaustions} Let $\func(n) \colon \N \to \N$ be a function with $\lim_{N\to \infty}\sum_{n=1}^N \func(n) = \infty$. Let $\guesslang\subseteq \kleene{\alphabet}$ be the guess language and let $\exhaustion{\targetlang} = \{\targetlang_n\}_{n=0}^\infty \in \exhaustionsetbounded{\targetlang}$ be an $\func$-bounded exhaustion of the target language $\targetlang$. Then, 
\begin{equation}\label{eq:best-lower-recall-exhaustion}
\sup_{\exhaustion{\guesslang}\in\exhaustionsetbounded{\guesslang}} \lowerrecall{\exhaustion{\targetlang}}{\exhaustion{\guesslang}} = \liminf_{n\to\infty}\frac{|\targetlang_n\cap \guesslang|}{|\targetlang_n|}
\end{equation}
\end{restatable}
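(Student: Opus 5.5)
We prove \Cref{eq:best-lower-recall-exhaustion} in two parts, following \Cref{lem:lower-recall} and the symmetric structure of \Cref{th:lower-precision-bounded-exhaustions}, with the roles of $\targetlang$ and $\guesslang$ exchanged.

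\textbf{Upper bound.} This is identical to Part 1 of \Cref{lem:lower-recall}. For any $\exhaustion{\guesslang}\in\exhaustionsetbounded{\guesslang}$ we have $\guesslang_n\subseteq\guesslang$. Hence $|\targetlang_n\cap\guesslang_n|\le|\targetlang_n\cap\guesslang|$. Dividing by $|\targetlang_n|$, taking $\liminf$, and then the supremum gives $\le$.

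\textbf{Lower bound.} We must construct an $\func$-bounded exhaustion $\exhaustion{\guesslang}^\star$ of $\guesslang$ with $\guesslang_n^\star\cap\targetlang_n=\guesslang\cap\targetlang_n$ for all (or all large) $n$. The naive choice $(\guesslang\cap\targetlang_n)\cup Y_n$ of \Cref{lem:lower-recall} may violate the $\func$-bound. Instead we mirror Case 1 of \Cref{th:lower-precision-bounded-exhaustions}. Fix an enumeration $\exhaustion{Z}=\{Z_j\}$ of $\guesslang\setminus\targetlang$ (a single-step exhaustion if this set is finite). Let $h_n\defeq|\Delta\targetlang_n\setminus\guesslang|$, $H_n\defeq\sum_{i\le n}h_i$, and set
\begin{equation*}
\guesslang_n^\star\defeq(\guesslang\cap\targetlang_n)\cup Z_{H_n}.
\end{equation*}
Monotonicity is clear. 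Since $Z_{H_n}\cap\targetlang_n=\emptyset$, we get $\guesslang^\star_n\cap\targetlang_n=\guesslang\cap\targetlang_n$. Moreover,
\begin{equation*}
|\Delta\guesslang_n^\star|\le|\guesslang\cap\Delta\targetlang_n|+h_n=|\Delta\targetlang_n|\le\func(n),
\end{equation*}
where $Z_{H_n}$ grows by at most $h_n$ elements. The limit is $(\guesslang\cap\targetlang)\cup(\guesslang\setminus\targetlang)=\guesslang$, provided $H_n\to\infty$ or $\guesslang\setminus\targetlang$ is exhausted, i.e.\ $|\guesslang\setminus\targetlang|\le\lim H_n$. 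Taking $\liminf$ of the ratio then gives equality.

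\textbf{Main obstacle.} The remaining case is $\lim_n H_n=H<\infty$ while $\guesslang\setminus\targetlang$ has more than $H$ elements. Then $\targetlang\setminus\guesslang$ is finite, so eventually $\targetlang_n$ is almost entirely inside $\guesslang$, and the right-hand side equals $1$ because $|\targetlang_n|\to\infty$ ($\targetlang$ infinite). We then only need lower recall $1$, and can sacrifice a vanishing fraction of the budget, as in Case 2 of \Cref{th:lower-precision-bounded-exhaustions}.

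Take an exploration set $\exploration$ (\Cref{def:exploration-set}) and $m_n\defeq|\targetlang_n|$. Fix an enumeration $Y$ of $\guesslang\cap\targetlang$ refining the $\targetlang_n$, i.e.\ $Y_{|\targetlang_n\cap\guesslang|}=\targetlang_n\cap\guesslang$. Define
\begin{equation*}
\guesslang_n^\star\defeq Y_{k_n-\explorationidx(m_n)}\cup Z_{\explorationidx(m_n)},\qquad k_n\defeq|\targetlang_n\cap\guesslang|.
\end{equation*}
We must check three things:
\begin{itemize}
\item \emph{Monotonicity.} Both $k_n-\explorationidx(m_n)$ and $\explorationidx(m_n)$ must be nondecreasing. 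This holds because $\explorationidx$ increases by at most $1$ per unit step of its argument, and $k_n$ grows with $m_n$ once $h_n=0$. We may need to replace $\explorationidx(m_n)$ by $\explorationidx(k_n)$ to guarantee this.
\item \emph{The $\func$-bound.} The increment is at most $k_n-k_{n-1}\le|\Delta\targetlang_n|\le\func(n)$.
\item \emph{The limit and the ratio.} The limit is $\guesslang$ since both indices tend to infinity. The recall ratio is at least $(k_n-\explorationidx(k_n))/m_n\to1$, because $\explorationidx(k)/k\to0$ and $k_n/m_n\to1$.
\end{itemize}
The delicate point is getting these index-monotonicity details right, including the finitely many initial steps where $h_n>0$. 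We handle those steps by starting the construction after the last $n$ with $h_n>0$ and using the first option before that.
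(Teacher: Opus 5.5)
Your proposal is correct and follows the paper's route. The paper proves this theorem ``analogously'' to \Cref{th:lower-precision-bounded-exhaustions}: a trivial upper bound, the padding construction $(\guesslang\cap\targetlang_n)\cup Z_{H_n}$ when the budget $H_n$ suffices, and an exploration-set construction when $H_n$ stays bounded. That is exactly what you carry out with the roles of $\targetlang$ and $\guesslang$ swapped. Your choice of index $\explorationidx(k_n)$ already makes the second construction monotone and $\func$-bounded from $n=0$, so the closing splicing remark is unnecessary (done naively, it would also break monotonicity at the junction).
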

\begin{proof}
    Analogous to proof of \Cref{th:lower-precision-bounded-exhaustions}.
\end{proof}
\newpage
\section{Tail precision} \label{app:tail-precision}
\PropositionConvergenceTailPrecision*

\begin{proof}
For each $n$, recall the definition of the step-wise tail precision:
\begin{equation}
t_n \defeq 
\begin{cases}
1, & \text{if } |\Delta \guesslang_n|=0,\\[2pt]
\frac{|\targetlang\cap \Delta\guesslang_n|}{|\Delta\guesslang_n|}, & \text{otherwise.}
\end{cases}
\end{equation}
Assume first that at each step at most one new string is added, i.e.\ $|\Delta \guesslang_n|\le 1$ (single-step exhaustion).
Then $t_n\in\{0,1\}$ for all $n$.
Let $E \defeq \{n\in\N : t_n = 0\}$ be the set of error steps.
If $E$ is infinite, then $t_n=0$ occurs infinitely often and thus
\begin{equation}
\liminf_{n\to\infty} t_n = 0.
\end{equation}
If $E$ is finite, let $N_0 > \max E$. Then $t_n=1$ for all $n\ge N_0$, and hence
\begin{equation}
\liminf_{n\to\infty} t_n = 1.
\end{equation}
In either case, $\setlowertailprecision{\targetlang}{\exhaustion{\guesslang}}
= \liminf_{n\to\infty} t_n$ is reached after finitely many steps and equals either $0$ or $1$.

More generally, suppose there is a constant $c\ge 1$ such that $|\Delta \guesslang_n|\le c$ for all $n$.
Then for every $n$ we have
\begin{equation}
t_n \in F_c \defeq \Bigl\{\frac{i}{j} \colon i,j \in \N_0, 1\le j \le c, 0 \le i\le j\Bigr\}.
\end{equation}
Let $\ell \defeq \liminf_{n\to\infty} t_n$. Since $t_n$ takes values in the finite set $F_c$,
there exists a minimum value among those that occur infinitely often; moreover this minimum equals $\ell$.
In particular, $\ell\in F_c$ and $t_n=\ell$ occurs infinitely often.

Let $B \defeq \{x\in F_c : x<\ell\}$. Every value $x\in B$ can occur only finitely many times,
otherwise we would have $\liminf_{n\to\infty} t_n \le x < \ell$, a contradiction.
Hence there exists $N_0$ such that $t_n\notin B$ for all $n\ge N_0$, i.e.\ $t_n\ge \ell$ for all $n\ge N_0$.
Since $t_n=\ell$ occurs infinitely often, there is also some $n\ge N_0$ with $t_n=\ell$.
Therefore,
\begin{equation}
\inf_{m\ge N_0} t_m = \ell,
\end{equation}
so the lower tail precision $\setlowertailprecision{\targetlang}{\exhaustion{\guesslang}}
= \ell$ is attained from time $n \geq N_0$ onward.
\end{proof}

\PropositionValidityTailPrecision*

\begin{proof}
Recall the definition of the step-wise tail precision:
\begin{equation}
t_n \defeq
\begin{cases}
1, & |\Delta \guesslang_n|=0,\\[2mm]
\frac{|\targetlang \cap \Delta \guesslang_n|}{|\Delta \guesslang_n|}, & |\Delta \guesslang_n|>0.
\end{cases}
\end{equation}
By definition, $\setlowertailprecision{\targetlang}{\exhaustion{\guesslang}}=\liminf_{n\to\infty} t_n$.

\smallskip\noindent
($\Rightarrow$)
If $\gen$ is eventually valid, then there exists $n^\star$ such that for all $n\ge n^\star$,
\begin{equation}
\gen(\advlang_n,\guesslang_{n-1})\subseteq \targetlang.
\end{equation}
Equivalently, for all $n\ge n^\star$, every newly generated string lies in $\targetlang$, i.e.,
$\Delta \guesslang_n\subseteq \targetlang$.
Thus $t_n=1$ for all $n\ge n^\star$, which means the tail-precision sequence attains $1$ from some finite time onward.

\smallskip\noindent
($\Leftarrow$)
Conversely, suppose $\setlowertailprecision{\targetlang}{\exhaustion{\guesslang}} = 1$ is attained in finite time. This implies that there exists $n^\star$ such that $t_n=1$ for all $n\ge n^\star$.
For any such $n$, if $|\Delta \guesslang_n|>0$ then
\begin{equation}
\frac{|\Delta \guesslang_n\cap \targetlang|}{|\Delta \guesslang_n|}=1
\quad\Longrightarrow\quad
\Delta \guesslang_n\subseteq \targetlang,
\end{equation}
and if $|\Delta \guesslang_n|=0$ then there is nothing to show.
Hence $\Delta \guesslang_n\subseteq \targetlang$ for all $n\ge n^\star$, i.e.,
$\gen(\advlang_n,\guesslang_{n-1})\subseteq \targetlang$ for all $n\ge n^\star$.
This is exactly eventual validity.
\end{proof}

\begin{lemma}[Tail precision $1$ implies precision $1$]
\label{lem:tail-precision-implies-precision}
Let $\targetlang$ be the target language and let
$\exhaustion{\guesslang}=\{\guesslang_n\}_{n=0}^\infty$ be an exhaustion of $\guesslang$. Then
\begin{equation}
    \setlowertailprecision{\targetlang}{\exhaustion{\guesslang}} = 1 \implies \setlowerprecision{\targetlang}{\exhaustion{\guesslang}} = 1.
\end{equation}
\end{lemma}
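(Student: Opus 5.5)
The plan is to reduce everything to the membership-based characterization of precision and then run a Ces\`aro-type averaging argument on the increments. By \Cref{lem:lower-precision}, $\setlowerprecision{\targetlang}{\exhaustion{\guesslang}} = \liminf_{n\to\infty}\frac{|\targetlang\cap \guesslang_n|}{|\guesslang_n|}$. Since this ratio is always at most $1$, it suffices to show that the error fraction $e_n \defeq \frac{|\guesslang_n\setminus\targetlang|}{|\guesslang_n|}$ satisfies $\limsup_{n\to\infty} e_n = 0$.

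First, I unpack the hypothesis $\liminf_{n\to\infty} t_n = 1$: for every $\varepsilon>0$ there is $N$ with $t_n \ge 1-\varepsilon$ for all $n\ge N$. By the definition of $t_n$, this means that $|\Delta\guesslang_n\setminus\targetlang| \le \varepsilon\,|\Delta\guesslang_n|$ for all $n \ge N$. The inequality is trivial when $\Delta\guesslang_n=\emptyset$.

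Second, I decompose $\guesslang_n$ as the disjoint union of $\guesslang_{N-1}$ and the increments $\Delta\guesslang_{N},\dots,\Delta\guesslang_n$, and sum the per-step bound over these increments. This gives
\begin{equation*}
|\guesslang_n\setminus\targetlang| \le |\guesslang_{N-1}\setminus\targetlang| + \varepsilon\bigl(|\guesslang_n|-|\guesslang_{N-1}|\bigr) \le C_N + \varepsilon\,|\guesslang_n|,
\end{equation*}
where $C_N \defeq |\guesslang_{N-1}|$ is a finite constant independent of $n$. Dividing by $|\guesslang_n|$ yields $e_n \le C_N/|\guesslang_n| + \varepsilon$.

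Third, I use that $\guesslang$ is infinite, as assumed throughout the paper, so $|\guesslang_n|\to\infty$. Hence $\limsup_n e_n \le \varepsilon$, and since $\varepsilon$ was arbitrary, $\limsup_n e_n = 0$. Therefore the liminf of $1-e_n$ equals $1$, and \Cref{lem:lower-precision} gives the claim.

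The only delicate point is that the hypothesis is an asymptotic liminf rather than finite-time attainment. This matters for unbounded exhaustions, where a mistake may occur at every step, as in the example following \Cref{prop:tail-prec-finite-time-conv}. For that reason I work with the $\varepsilon$-version of the per-step bound instead of assuming that the increments are eventually error-free. The only other ingredient to check is $|\guesslang_n|\to\infty$, which is needed to absorb the finitely many early errors into the constant $C_N$.
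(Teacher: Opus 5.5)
Your proof is correct and follows essentially the same route as the paper. Both arguments take the $\varepsilon$-version of the per-step error bound for $n\ge N$, sum it over the increments to get $|\guesslang_n\setminus\targetlang|\le C+\varepsilon|\guesslang_n|$, absorb the early errors using $|\guesslang_n|\to\infty$, and read off the conclusion via the membership-based characterization (\Cref{lem:lower-precision}). Your explicit treatment of empty increments is slightly cleaner than the paper's use of $e_n/s_n$, which is undefined when $s_n=0$.
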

\begin{proof}
For $n\ge 1$, define
\begin{equation}
s_n \defeq |\Delta \guesslang_n|,
\qquad
e_n \defeq |\Delta \guesslang_n \setminus \targetlang|.
\end{equation}

If $s_n = 0$, let $t_n = 1$, otherwise let
\begin{equation}
t_n \defeq \frac{|\Delta \guesslang_n \cap \targetlang|}{|\Delta \guesslang_n|}
= 1 - \frac{e_n}{s_n}
\qquad\text{and}\qquad
p_n \defeq \frac{|\guesslang_n \cap \targetlang|}{|\guesslang_n|}
= 1 - \frac{|\guesslang_n \setminus \targetlang|}{|\guesslang_n|}.    
\end{equation}

The assumption $\setlowertailprecision{\targetlang}{\exhaustion{\guesslang}}=1$ means
$\liminf_{n\to\infty} t_n = 1$. Since always $t_n\le 1$, this implies $\lim_{n\to \infty} t_n = 1$, hence
\begin{equation}
    \lim_{n\to \infty}\frac{e_n}{s_n} =0.
\end{equation}

Now let $E_n \defeq |\guesslang_n \setminus \targetlang|$ be the cumulative number of errors up to time $n$. We have
\begin{equation}
    E_n = \sum_{i=1}^n e_i
\qquad\text{and}\qquad
|\guesslang_n| = \sum_{i=1}^n s_i.
\end{equation}
Fix $\varepsilon>0$. Since $\lim_{n\to \infty}e_n/s_n= 0$, there exists $N$ such that $e_n \le \varepsilon s_n$ for all $n\ge N$.
Then for any $n\ge N$,
\begin{equation}
E_n
= \sum_{i=1}^{N-1} e_i + \sum_{i=N}^n e_i
\le \sum_{i=1}^{N-1} e_i + \varepsilon \sum_{i=N}^n s_i
\le \sum_{i=1}^{N-1} e_i + \varepsilon |\guesslang_n|.
\end{equation}
Dividing by $|\guesslang_n|$ gives
\begin{equation}
\frac{E_n}{|\guesslang_n|}
\le
\frac{\sum_{i=1}^{N-1} e_i}{|\guesslang_n|} + \varepsilon.
\end{equation}
Since $\guesslang_n\uparrow\guesslang$ and $\guesslang$ is an infinite set, the first term tends to $0$, so
$\limsup_{n\to\infty} \frac{E_n}{|\guesslang_n|} \le \varepsilon$.
As $\varepsilon>0$ can be chosen arbitrarily, we conclude $\lim_{n\to \infty} \frac{E_n}{|\guesslang_n|}= 0$, and therefore
\begin{equation}
\lim_{n\to \infty} p_n
= \lim_{n\to \infty} 1 - \frac{E_n}{|\guesslang_n|}
\to 1.
\end{equation}
Hence $\setlowerprecision{\targetlang}{\exhaustion{\guesslang}}=\liminf_{n\to\infty} p_n = 1$.
\end{proof}

\newpage
\section{Generation without novelty}\label{app:gen-no-novelty}
\subsection{Generation without novelty but perfect tail precision}\label{app:gen-no-novelty-tail-precision}
\begin{restatable}{theorem}{TheoremValidGenerationWithoutNovelty}\label{thm:gen-valid-without-novelty}
Let $\coll$ be a countable collection of languages and let $\targetlang\in\coll$ be the target language.
Fix an enumeration $\exhaustion{\targetlang}=\{\targetlang_n\}_{n=0}^\infty$ of $\targetlang$.
There exists a generator $\gen$ such that, for any $\func$-bounded adversarial exhaustion
$\exhaustion{\advlang}=\{\advlang_n\}_{n=0}^\infty$ with $\advlang_n\uparrow \advlang\subseteq\targetlang$
and $\setlowerrecall{\exhaustion{\targetlang}}{\advlang}\ge \alpha$, the generated exhaustion
$\exhaustion{\guesslang}$ is $\func$-bounded and satisfies:
\begin{equation}
    \setlowerrecall{\exhaustion{\targetlang}}{\guesslang} \geq \alpha,
    \qquad
    \setlowertailprecision{\targetlang}{\exhaustion{\guesslang}} = 1,
    \qquad
    \setlowerprecision{\targetlang}{\exhaustion{\guesslang}} = 1.
\end{equation}
Moreover, for $c$-step exhaustions the recall guarantee is tight in an adversarial sense.
\end{restatable}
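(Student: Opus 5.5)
The generator is a pure parroting strategy: at every step $n$ it outputs exactly the strings the adversary has just revealed,
\begin{equation*}
\gen(\advlang_n,\guesslang_{n-1}) \defeq \advlang_n \setminus \guesslang_{n-1}.
\end{equation*}
A short induction on $n$ shows $\guesslang_n = \advlang_n$ for all $n$. Indeed, $\guesslang_0=\advlang_0=\emptyset$, and if $\guesslang_{n-1}=\advlang_{n-1}$, then $\guesslang_n = \advlang_{n-1}\cup(\advlang_n\setminus\advlang_{n-1}) = \advlang_n$. Consequently $\Delta\guesslang_n=\Delta\advlang_n$, so $|\Delta\guesslang_n|\le \func(n)$ and $\exhaustion{\guesslang}$ is $\func$-bounded. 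In the limit, $\guesslang=\advlang$.

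Since $\Delta\guesslang_n=\Delta\advlang_n\subseteq\targetlang$ for every $n$, the step-wise tail precision is $t_n=1$ for all $n$. This includes the steps with $|\Delta\guesslang_n|=0$, where $t_n=1$ by definition. Hence $\setlowertailprecision{\targetlang}{\exhaustion{\guesslang}}=1$, attained from $n=1$ on, so the generator is even eventually valid by \Cref{prop:validity-tail-precision}. Precision $1$ then follows from \Cref{lem:tail-precision-implies-precision}; that lemma uses that $\guesslang=\advlang$ is infinite, which is assumed. Alternatively, precision $1$ follows directly from \Cref{lem:lower-precision}, since $|\targetlang\cap\guesslang_n|/|\guesslang_n|=1$ for every $n$ with $\guesslang_n\neq\emptyset$. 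For recall, \Cref{lem:lower-recall} gives $\setlowerrecall{\exhaustion{\targetlang}}{\guesslang}=\liminf_n |\targetlang_n\cap\guesslang|/|\targetlang_n|$. Because $\guesslang=\advlang$, this equals $\setlowerrecall{\exhaustion{\targetlang}}{\advlang}\ge\alpha$.

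The real content is the tightness clause for $c$-step exhaustions. Any generator with $\setlowertailprecision{\targetlang}{\exhaustion{\guesslang}}=1$ is eventually valid by \Cref{prop:tail-prec-finite-time-conv} and \Cref{prop:validity-tail-precision}. We must therefore exhibit a collection on which such a generator cannot exceed recall $\alpha$. The plan is a diagonalization in the style of \Cref{lem:batched_adversary} and \citet[Thm.~3.3]{kleinberg_language_2025}. Take a target $\targetlang$ and the adversary's set $\advlang$ with density $\alpha$ in $\targetlang$. Add to $\coll$ the language $\advlang$ itself, together with languages $\advlang\cup F$ for finite sets $F\subseteq\targetlang\setminus\advlang$. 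If the target is $\advlang$, every string the generator outputs outside $\advlang$ is an error. If the target is $\targetlang$, the presentation looks identical.

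The adversary enumerates $\advlang$. Suppose the learner outputs infinitely many strings of $\targetlang\setminus\advlang$. Then against the target $\advlang$, whose full enumeration is the same presentation, the learner errs infinitely often and tail precision is not $1$. So against the target $\advlang$ the learner eventually outputs only strings from $\advlang$. Because the presentations coincide, the same holds when the target is $\targetlang$, and the learner then has $|\guesslang\setminus\advlang|<\infty$. A finite set does not change density, so $\setlowerrecall{\exhaustion{\targetlang}}{\guesslang}\le\setlowerrecall{\exhaustion{\targetlang}}{\advlang}$.

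I expect this indistinguishability step to be the main obstacle. It needs a target $\targetlang$ for which $\liminf_n |\targetlang_n\cap\advlang|/|\targetlang_n|$ equals $\alpha$ exactly, and it needs the $c$-step bound so that the adversary's enumeration of $\advlang$ is a legal presentation of both targets. To make the quantifiers line up, I would fix the collection $\{\advlang,\targetlang\}$ and use the fact that the same adversarial sequence is valid for both targets.
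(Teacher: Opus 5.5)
Your proposal is correct and follows the paper's proof. It uses the same parroting generator $\gen(\advlang_n,\guesslang_{n-1})=\Delta\advlang_n$, which gives $\guesslang_n=\advlang_n$, and the same indistinguishability argument with $\advlang,\targetlang\in\coll$ for tightness; the extra languages $\advlang\cup F$ are unnecessary. One small correction: the $c$-step bound is not needed to make the presentation legal for both targets, since any exhaustion of $\advlang\subseteq\targetlang$ already is; it is needed so that $t_n$ takes values in a finite set, so that tail precision $1$ forces $t_n=1$ eventually (hence only finitely many outputs outside $\advlang$), which is what you correctly invoke via \Cref{prop:tail-prec-finite-time-conv}.
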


\begin{proof}
Define $\gen$ by simply copying the adversary:
\begin{equation}
    \gen(\advlang_{n}, \guesslang_{n-1}) \defeq \Delta\advlang_n \qquad \text{for all } n\ge 0.
\end{equation}
Hence, $\Delta \guesslang_n = \Delta \advlang_n$, and $\guesslang_n = \advlang_n$. Then $\exhaustion{\guesslang}$ is $\func$-bounded whenever $\exhaustion{\advlang}$ is.

\textbf{Precision and tail precision.}
Since $\guesslang_n=\advlang_n\subseteq \targetlang$ for all $n$, we have
\begin{equation}
\setlowerprecision{\targetlang}{\exhaustion{\guesslang}}
= \liminf_{n\to\infty}\frac{|\guesslang_n\cap\targetlang|}{|\guesslang_n|}
= \liminf_{n\to\infty}\frac{|\guesslang_n|}{|\guesslang_n|}
= 1.
\end{equation}
The step-wise tail precision $t_n = 1$ whenever $\Delta \guesslang_n = \emptyset$. 

Furthermore,  $\Delta\guesslang_n=\Delta\advlang_n\subseteq\targetlang$,
so
\begin{equation}
\setlowertailprecision{\targetlang}{\exhaustion{\guesslang}}
= \liminf_{n\to\infty}\frac{|\targetlang\cap \Delta\guesslang_n|}{|\Delta\guesslang_n|}
= 1.
\end{equation}

\textbf{Recall.}
Because $\guesslang_n=\advlang_n$ for all $n$,
\begin{equation}
    \setlowerrecall{\exhaustion{\targetlang}}{\guesslang}
    =
    \setlowerrecall{\exhaustion{\targetlang}}{\advlang}
    \ge \alpha.
\end{equation}

\textbf{Tightness of the recall bound.} We construct a case in which the recall cannot exceed $\alpha$.

Assume $\func$ enforces a $c$-step exhaustion.
Fix $\targetlang\in\coll$ and a strict sublanguage $\advlang\subset \targetlang$ with
$\setlowerrecall{\exhaustion{\targetlang}}{\advlang}=\alpha<1$, and assume also $\advlang\in\coll$.

Consider a $c$-step exhaustion of $\advlang$. Any generator $\gen$ that is required to satisfy $\setlowertailprecision{\targetlang}{\exhaustion{\guesslang}} = 1$ for \emph{every} possible true language in $\coll$
must in particular satisfy it when the true language is $\advlang$.
Recall that for each $n$, we have
\begin{equation}
t_n =
\begin{cases}
1, & \text{if } |\Delta \guesslang_n|=0,\\[2pt]
\frac{|\advlang\cap \Delta\guesslang_n|}{|\Delta\guesslang_n|}, & \text{otherwise.}
\end{cases}
\end{equation}
Since $|\Delta\guesslang_n|\le c$, each $t_n$ belongs to the finite set
\begin{equation}
F_c \defeq \Bigl\{\frac{i}{j} \colon i,j \in \N_0, 1\le j \le c, 0 \le i\le j\Bigr\}.
\end{equation}
If $\liminf_{n\to\infty} t_n = 1$, then we must have $t_n=1$ for all sufficiently large $n$ because $F_c$ is finite:
otherwise some value $<1$ would occur infinitely often, forcing the $\liminf$ to be $\le \max(F_c\setminus\{1\})<1$.
Hence, when the true language is $\advlang$, the generator can output strings outside $\advlang$ only finitely many times.

Now note that the adversary exhaustion can be identical whether the true language is $\advlang$ or $\targetlang$. It follows that for the instance where the target language is $\targetlang$,
the generator must also eventually restrict to $\advlang$, as it cannot possibly know if the true language is $\targetlang$ or $\advlang$. Consequently it cannot guarantee asymptotic recall exceeding that of $\advlang$ inside $\targetlang$, i.e., it cannot guarantee
$\setlowerrecall{\exhaustion{\targetlang}}{\guesslang}>\alpha$ in general.
\end{proof}

\subsection{Generation without novelty and without perfect tail precision}\label{app:gen-no-novelty-no-tail-precision}

\begin{algorithm}[t]
\caption{Language generation in the limit without novelty or perfect tail precision. \textbf{Input:} adversarial exhaustion $\{\advlang_n\}_{n= 0}^\infty$ with $\advlang_n \uparrow \advlang \subseteq \targetlang$, enumeration $\exhaustion{E}$ of $\kleene{\alphabet}$, infinite exploration set $\exploration \subset \N$. \textbf{Ensures:} increasing exhaustion $\exhaustion{\guesslang}=\{\guesslang_n\}_{n= 0}^\infty$.}
\label{alg:gen-without-novelty}
\begin{algorithmic}[1]
\STATE $\guesslang_0 \leftarrow \emptyset$, $\guesslang_1 \leftarrow \advlang_1$, $j \leftarrow 1$
\STATE $\explorationidx_{\text{next}} \leftarrow \min(\exploration)$

\FOR{$n = 1,2,3,\dots$}
    \IF{$|\guesslang_n| \geq \explorationidx_{\text{next }} \AND \Delta\advlang_{n+1}\neq\emptyset$} 
   
        \STATE $\guesslang_{n+1} \leftarrow \guesslang_{n} \cup \Delta E_j$ \LineComment{exploration round}
        \STATE $j \leftarrow j+1$
        \STATE $\explorationidx_{\text{next}} \leftarrow \min\{\explorationidx \in \exploration : \explorationidx > \explorationidx_{\text{next}}\}$

    \ELSE
    \STATE $\guesslang_{n+1} \leftarrow \guesslang_n \cup \Delta \advlang_{n+1}$ \LineComment{safe round}
    \ENDIF
\ENDFOR
\end{algorithmic}
\end{algorithm}

\TheoremGenerationWithoutNovelty*
\begin{proof}
Since novelty is not required, the generator may repeat the adversary's outputs. We will construct an algorithm that mimics the adversary's strings and occasionally introduces new, potentially incorrect strings at a vanishing rate. We call these insertions exploration rounds. Fix an arbitrary enumeration $\exhaustion{E}=\{E_n\}_{n=0}^\infty$ of $\kleene{\alphabet}$. Choose an exploration set $\exploration\subset\N$ according to \Cref{def:exploration-set}, and let $\explorationidx(m)\defeq |\exploration\cap\{1,\dots,m\}|$. We use $\exploration$ to trigger exploration rounds whenever $|\guesslang_n|$ reaches an $\explorationidx \in \exploration$. We construct an $\func$-bounded $\exhaustion{\guesslang}=\{\guesslang_n\}_{n= 0}^\infty$ iteratively as described in \cref{alg:gen-without-novelty}. The exhaustion $\exhaustion{\guesslang}$ is $\func$-bounded because, on a safe round, it adds exactly the strings in $\Delta\advlang_{n+1}$, while an exploration round adds only one string and is taken only if $\Delta\advlang_{n+1}\neq\emptyset$. In the latter case, $\func(n+1)\geq1$, and hence the exploration output also respects the bound.

By construction, $\guesslang_n$ is increasing. Moreover, we ensure that exploration happens only whenever $|\guesslang_n|$ reaches the next value in $\exploration$. This will ensure that for $\guesslang_n$ with $|\guesslang_n| = m_n$, the number of explorative strings is at most $\explorationidx(m_n)$.

\paragraph{Recall.}
Every $\Delta E_n$ gets added eventually, hence $\guesslang_n \uparrow \kleene{\alphabet}$ and 
\begin{equation}
    \setlowerrecall{\exhaustion{\targetlang}}{\guesslang}
= \liminf_{n\to\infty}\frac{|\targetlang_n\cap \guesslang|}{|\targetlang_n|}
= \liminf_{n\to\infty}\frac{|\targetlang_n|}{|\targetlang_n|}
= 1.
\end{equation}

\paragraph{Precision.}
For each $\guesslang_n$ with $|\guesslang_n| = m_n$ the number of outputs from explorations (and hence possible hallucinations) is at most $\explorationidx(m_n)$; all other outputs
are parroted adversary elements and thus lie in $\targetlang$ because $\advlang\subseteq\targetlang$.
Hence
\begin{equation}
    |\guesslang_n\cap\targetlang| \ge m_n - \explorationidx(m_n)
\end{equation}
and therefore
\begin{align}
    \setlowerprecision{\targetlang}{\exhaustion{\guesslang}}
= \liminf_{n\to\infty}\frac{|\guesslang_n\cap\targetlang|}{|\guesslang_n|}
\\\ge \liminf_{n\to\infty}\left(1-\frac{\explorationidx(m_n)}{m_n}\right)
= 1.
\end{align}
\end{proof}

\newpage
\section{Pods algorithm}
In this section, we give \citeposs{kleinberg_language_2025} algorithm and describe its main ideas. We fix a countable collection of languages $\coll \subseteq \mathcal{P}(\kleene{\alphabet})$
together with
(i) a global order on languages in $\coll$, yielding an ordered sequence $(\langs'_i)_{i\ge1}$ and
(ii) an enumeration of $\kleene{\alphabet}$ that induces a total order $\prec$. For a string $\strx\in \targetlang$ we define its successor by $\mathrm{succ}_\targetlang(\strx)$ in $(\targetlang,\prec)$.\footnote{Note that $\prec$ can be applied to any language $\langs$ in $\kleene{\alphabet}$, as $\langs \subseteq \kleene{\alphabet}$.}

We recall the two main technical primitives from \citet{kleinberg_language_2025}:
the \emph{identified intersection} procedure and the \emph{pods} construction.

\paragraph{Consistency and intersection chains.}
Let $\targetlang\in\coll$ be the target language.
An adversary reveals an exhaustion $\exhaustion{\advlang}$. Note that in \citet{kleinberg_language_2025} this exhaustion is single-step-bounded. However, we write \Cref{alg:pods-generator} in a way such that it can also be run with a $k$-batched adversary, a setting that we analyze in \Cref{app:lowerrecall-batched-adv}.

\begin{definition}
    Let $\exhaustion{\advlang}=\{\advlang_n\}_{n=0}^\infty$ be an adversarial exhaustion and $\langs$ an arbitrary language. We call $\langs$ \defn{consistent} with $\exhaustion{\advlang}$ up to timestep $n$ if $\advlang_n \subseteq \langs$.
\end{definition}

\begin{definition}
Let the consistent languages at time $n$ be listed in the global order as $\langs^{(n)}_1, \langs^{(n)}_2, \langs^{(n)}_3,\dots$, where $(\langs^{(n)}_i)_{i\ge 1}$ is obtained by scanning $\langs_1,\langs_2,\dots$ and keeping exactly those that contain $\advlang_n$. If only finitely many, say $k$, consistent languages remain at time $n$, we pad the sequence by repeating the last language, namely
\begin{equation}
    \langs_l^{(n)} = \langs_k^{(n)}, \qquad l \geq k
\end{equation}

We define the descending \defn{chain of intersections} at time $n$ by
\begin{equation}
    \intersect^{(n)}_i \defeq \bigcap_{j\le i} \langs^{(n)}_j,
\qquad
\chain_n \defeq \intersect^{(n)}_1 \supseteq \intersect^{(n)}_2 \supseteq \intersect^{(n)}_3 \supseteq \cdots.
\end{equation}
\end{definition}
Note that $\chain_n$ is always an infinite chain \citep[Eq.~3]{kleinberg_language_2025}. See also \Cref{alg:update-chain}.
\paragraph{Identified intersection.}
\Cref{alg:update-identified-intersection} describes how to find the best guess $\intersect^{(n)}$ of the target language by intersecting the remaining consistent hypotheses. $\intersect^{(n)}$ is called \defn{identified intersection}. The algorithm chooses an index $i(n)\in\N$ such that the output $\intersect^{(n)} \defeq \intersect^{(n)}_{i(n)}$
is an infinite set (except in finitely many cases) and moves in a controlled way over time as $n$ increases. Concretely, if the descending chain of intersections remains the same between rounds $n$ and $n+1$, the algorithm either keeps $i(n+1)=i(n)$ or increases it by one (moving one step deeper in the intersection chain) provided the next intersection is still infinite. If the set of consistent hypotheses changes, the algorithm resets 
$i(n+1)$ to the deepest level at which the old and new intersection chains still agree on an infinite set.

We use the following terminology matching \citealp{kleinberg_language_2025}.
We say that $\intersect^{(n)}$ is \defn{valid} if $\intersect^{(n)}\subseteq \targetlang$; and \defn{full} if $\advlang \subseteq \intersect^{(n)}$.

\citet[Lem.~2.5]{kleinberg_language_2025} establish the following structural properties of $\intersect^{(n)}$:
there exists a finite time $N_0$ such that for all $n\ge N_0$,
\begin{enumerate}
    \item (\emph{eventual validity}) $\intersect^{(n)}$ is valid;
    \item (\emph{eventual comparability}) either $\intersect^{(n)}\subset \intersect^{(n+1)}$ or $\intersect^{(n+1)}\subset\intersect^{(n)}$;
    \item (\emph{full infinitely many times}) the event \emph{$\intersect^{(n)}$ is full} occurs infinitely often.
\end{enumerate}
Intuitively, $\intersect^{(n)}$ is the algorithm’s current safe proxy for $\targetlang$. Once $\intersect^{(n)}$ is valid, every string in it lies in $\targetlang$, so outputting from $\intersect^{(n)}$ preserves validity. Eventual comparability means that after finitely many steps the sequence evolves keeping some inclusion relation between consecutive identified intersections, which the later analysis repeatedly exploits to control how the identified intersection changes over time. Finally, the fact that $\intersect^{n}$ is full infinitely often guarantees that the process repeatedly returns to an intersection that contains $\advlang$, a property which can be exploited to support high recall.

\paragraph{Pods generator algorithm.} \citeauthor{kleinberg_language_2025} introduce a generating procedure (see \cref{alg:pods-generator}) that achieves the properties in \Cref{thm:half-lower-bound}. In each timestep $n$ it decides on an \defn{aggressive set} $\smash{\widetilde{\intersect}}^{(n)}$ (see \cref{alg:aggressive-set}, as defined in \citet[Sec.~3.2]{kleinberg_language_2025}), which consists of strings that could be part of $\targetlang$, though including them may risk overshooting. Intuitively, $\smash{\widetilde{\intersect}}^{(n)}$ is either the current identified intersection $\intersect^{(n)}$ or a controlled fallback superset from earlier rounds. Furthermore, in each round, a \defn{pod} $\Pod_n$ is determined. A pod $\Pod_n$ is a set of strings that have priority to be output. The pod sizes are fixed as a growing sequence, for example $(s_n)_{n\ge 1}$ with $s_n=n$. The algorithm maintains a set $\unavailableset_n$ of  strings that are \emph{used}, meaning they have been revealed by the adversary, already output,
or already placed into previous pods:
\begin{equation}
    \unavailableset_n \defeq \advlang_n \cup\guesslang_{n-1} \cup \bigcup_{\nu<n} \Pod_\nu.
\end{equation}
The \defn{pod} $\Pod_n$ is formed by the $s_n$ $\prec$-smallest elements of $\smash{\widetilde{\intersect}}^{(n)}\setminus\unavailableset_n$. Furthermore, the algorithm maintains a \defn{pod pool} $\podpool_n$ of available pod elements waiting to be output. The algorithm outputs the smallest element in $\podpool_n\setminus (\advlang_n \cup \guesslang_{n-1})$. By construction, the generated exhaustion $\exhaustion{\guesslang}$ satisfies the novelty constraint.

\begin{algorithm}[t]
\caption{Pods algorithm for language generation in the limit  (after \citealp{kleinberg_language_2025}). \textsc{UpdateChain} takes the adversary's input and a sequence of languages and returns an updated sequence of languages and a chain of intersections of consistent languages; \textsc{IdIntersect} takes the timestep and a level index, two intersection chains and an identified intersection and returns the next level index and the next identified intersection; \textsc{AggrSet} takes two identified intersections and two intersection chains and returns an aggressive set, i.e., a superset of the current identified intersection. \textbf{Input:} adversarial exhaustion $\{\advlang_n\}_{n= 0}^\infty$ with $\advlang_n \uparrow \advlang \subseteq \targetlang$, pod size sequence $(s_n)_{n\geq 1}$, ordered sequence $(\langs'_i)_{i\ge1}$ of languages in $\coll$ \textbf{Ensures:} single-step exhaustion $\exhaustion{\guesslang}=\{\guesslang_n\}_{n= 0}^\infty$.}
\label{alg:pods-generator}
\begin{algorithmic}[1]
\STATE $\guesslang_0 \leftarrow \emptyset$, $\unavailableset_0 \leftarrow \emptyset$ \COMMENT {unavailable strings}, $\podpool_0 \leftarrow \emptyset$ \COMMENT{pod pool}, $(\langs^{(0)}_i)_{i\ge1} \leftarrow (\langs'_i)_{i\ge1}$ \COMMENT {consistent languages}, $k_0 \leftarrow 1$, $\intersect^{(0)}\gets\emptyset$, $\chain_0\gets\emptyset$ 

\FOR{$n = 1,2,3,\dots$}
    \STATE Observe $\advlang_n$
    \STATE $\bigl((\langs^{(n)}_i)_{i\ge1}, \chain_n\bigr) \gets \textsc{UpdateChain}\bigl(\Delta\advlang_{n}, (\langs^{(n-1)}_i)_{i\ge1}\bigr)$
        \STATE $(\intersect^{(n)}, k_n) \gets$ \textsc{IdIntersect}$(n, k_{n-1}, \intersect^{(n-1)}, \chain_{n-1}, \chain_n)$
        \STATE $\widetilde{\intersect}^{(n)} \gets$\textsc{AggrSet}$(n,\intersect^{(n-1)},\intersect^{(n)}, \chain_{n-1}, \chain_n)$
        \STATE $\unavailableset_n \gets \unavailableset_{n-1} \cup \advlang_n$. 
        \STATE $\Pod_n\gets$ the $s_n$ $\prec$-smallest elements of $\widetilde{\intersect}^{(n)}\setminus\unavailableset_n$.
        \STATE $\unavailableset_n \gets \unavailableset_n \cup \Pod_n$
        \STATE $\podpool_n \gets \podpool_{n-1} \cup \Pod_n$.
        \STATE $\guesslang_n \gets \guesslang_{n-1} \cup \min_{\prec}(\podpool_n\setminus (\advlang_n \cup \guesslang_{n-1}))$.
        \STATE $\unavailableset_n \gets \unavailableset_n \cup \guesslang_n$.
\ENDFOR
\end{algorithmic}
\end{algorithm}

\begin{algorithm}[t]
\caption{\textsc{UpdateChain}$(\Delta\advlang_{n},\,(\langs^{(n-1)}_i)_{i\ge 1})$, \textbf{Input:} current adversarial increment $\Delta \advlang = \advlang_n \setminus \advlang_{n-1}$,
previous consistent list $(\langs^{(n-1)}_i)_{i\ge 1}$ \textbf{Ensures:} updated consistent list $(\langs^{(n)}_i)_{i\ge 1}$, intersection chain
$\chain_n=(\intersect^{(n)}_i)_{i\ge 1}$}
\label{alg:update-chain}
\begin{algorithmic}[1]
\STATE $t \gets 0$
\FOR{$i = 1,2,3,\dots$}
    \IF{$\Delta \advlang_n \subseteq \langs^{(n-1)}_i$}
        \STATE $t \gets t + 1$
        \STATE $\langs^{(n)}_t \gets \langs^{(n-1)}_i$
        \IF{$t = 1$}
            \STATE $\intersect^{(n)}_1 \gets \langs^{(n)}_1$
        \ELSE
            \STATE $\intersect^{(n)}_t \gets \intersect^{(n)}_{t-1} \cap \langs^{(n)}_t$
        \ENDIF
    \ENDIF
\ENDFOR
\STATE \textbf{return} $\bigl((\langs^{(n)}_i)_{i\ge 1},\, (\intersect^{(n)}_i)_{i\ge 1}\bigr)$
\end{algorithmic}
\end{algorithm}

\begin{algorithm}[t]
\caption{\textsc{IdIntersect}$(n,k_{n-1},\, \intersect^{(n-1)},\, \chain_{n-1},\, \chain_{n})$ \textbf{Input:} timestep $n$, identified intersection $\intersect^{(n-1)}$, level index $k_{n-1}$, descending chains $\chain_{n-1}$ and $\chain_{n}$ \textbf{Ensures:}  updated identified intersection $\intersect^{(n)}$ and updated level index $k_n$.}
\label{alg:update-identified-intersection}
\begin{algorithmic}[1]

\IF{$n=1$}
    \STATE \textbf{return} $(\intersect^{(1)}_1,\, 1)$
\ENDIF

\STATE $k \leftarrow k_{n-1}$

\IF{$\chain_{n-1} = \chain_n$}
    \IF{$|\intersect^{(n-1)}_{k+1}| = \infty$}
        \STATE \textbf{return} $(\intersect^{(n)} \leftarrow \intersect^{(n-1)}_{k+1},\, k_n \leftarrow k+1)$ \COMMENT{move down one level}
    \ELSE
        \STATE \textbf{return} $(\intersect^{(n)} \leftarrow \intersect^{(n-1)}_{k},\, k_n \leftarrow k)$ \COMMENT{stay if next set becomes finite}
    \ENDIF
\ELSE
    \STATE $k^\star \leftarrow 0$
    \FOR{$j = 1,2,3,\dots$}
        \IF{$\intersect^{(n-1)}_j = \intersect^{(n)}_j$ \AND $|\intersect^{(n-1)}_j|=\infty$}
            \STATE $k^\star \leftarrow j$
        \ELSE
            \STATE \textbf{break}
        \ENDIF
    \ENDFOR

    \IF{$k^\star \ge 1$}
        \STATE \textbf{return} $(\intersect^{(n)} \leftarrow \intersect^{(n-1)}_{k^\star},\, k_n \leftarrow k^\star)$ \COMMENT{largest common infinite prefix}
    \ELSE
        \STATE \textbf{return} $(\intersect^{(n)} \leftarrow \intersect^{(n)}_{1},\, k_n \leftarrow 1)$ \COMMENT{no common infinite prefix}
    \ENDIF
\ENDIF
\end{algorithmic}
\end{algorithm}

\begin{algorithm}[t]
\caption{\textsc{AggrSet}$(n, \intersect^{(n-1)}, \intersect^{(n)}, \chain_{n-1}, \chain_n)$ \textbf{Input:} round $n$, identified intersection $\intersect^{(n-1)}$ and $\intersect^{(n)}$, descending chains $\chain_{n-1}$ and $\chain_{n}$ \textbf{Ensures:}  Aggressive set $\smash{\widetilde{\intersect}}^{(n)}$.}
\label{alg:aggressive-set}
\begin{algorithmic}[1]
\IF{$n=1$}
    \STATE \textbf{return} $\intersect^{(n)}$
\ENDIF

\IF{$\intersect^{(n)}=\intersect^{(n-1)}$}
    \STATE \textbf{return} $\intersect^{(n)}$ \COMMENT{no fallback}
\ELSIF{$\intersect^{(n)}\subsetneq \intersect^{(n-1)}$}
    \STATE \textbf{return} $\intersect^{(n-1)}$ \COMMENT{aggressively guess the previous larger set}
\ELSIF{$\intersect^{(n)}\supsetneq \intersect^{(n-1)}$}
    \STATE \textbf{return} $\intersect^{(n)}$ \COMMENT{aggressively guess the new larger set}
\ELSE
    \STATE \COMMENT{incomparable case: search for a common predecessor in both chains}
    \STATE $\mathcal{S}_{n-1} \leftarrow \{J\in \chain_{n-1} \colon J \supseteq \intersect^{(n-1)}\}$
    \STATE $\mathcal{S}_{n} \leftarrow \{J\in \chain_{n} : J \supseteq \intersect^{(n)}\}$
    \STATE $\mathcal{S} \leftarrow \mathcal{S}_{n-1}\cap \mathcal{S}_{n}$
    \IF{$\mathcal{S}=\emptyset$}
        \STATE \textbf{return} $\intersect^{(n)}$ \COMMENT{no fallback possible}
    \ELSE
        \STATE \textbf{return} a $\subseteq$-minimal element of $\mathcal{S}$
    \ENDIF
\ENDIF
\end{algorithmic}
\end{algorithm}

\newpage

\section{Recall bounds in a batched adversary setting}\label{app:batched-adv}

\subsection{Lower recall bound}\label{app:lowerrecall-batched-adv}

\LemmaBatchedLower*

\begin{proof}
We use \citeposs{kleinberg_language_2025} algorithm as described in \cref{alg:pods-generator}. By construction, $\exhaustion{\guesslang}$ satisfies the novelty constraint, as all outputs come from pod pools $\podpool_n$, that contain only unseen strings at step $n$. The major difference to the original setting is that in each timestep the adversary reveals $k$ strings, and hence we choose the consistent languages based on $k$ strings instead of just one. The procedure of computing the intersections remains the same. Note that the properties of eventual validity, eventual comparability, and being full infinitely often transfer to this setting. The proof is analogous to the proof of \citet[Lemma 2.5]{kleinberg_language_2025}. The procedure of choosing the aggressive sets and pods remains also the same. Hence, our proof follows the structure of \citet[Thm.~3.5]{kleinberg_language_2025} closely and differs only in certain counting steps.

\paragraph{Tail precision and precision.} After finitely many timesteps $N$ the identified intersection $\intersect^{(n)}$ and the aggressive set $\smash{\widetilde{\intersect}}^{(n)}$ remain valid (see \citet[Lemma 2.5]{kleinberg_language_2025} and \Cref{alg:aggressive-set}). Hence, up to that point only finitely many wrong strings can be added to $\podpool_N$ and afterwards only strings from $\targetlang$ are added. This immediately implies 
\begin{equation}
    t_n = \frac{|\Delta \guesslang_n \cap \targetlang|}{|\Delta \guesslang_n|} = 1 \qquad n \geq N
\end{equation}
and thus $\setlowertailprecision{\targetlang}{\exhaustion{\guesslang}} = 1$ and $\setlowerprecision{\targetlang}{\exhaustion{\guesslang}} = 1$ by \Cref{lem:tail-precision-implies-precision}.

\paragraph{Recall.} Denote by $\podpool = \bigcup_{n \geq 1} \Pod_n$ the set of all strings that ever appear in a pod. We define a partition into a good and bad set as in \citet[Thm.~3.5]{kleinberg_language_2025}:
\begin{align}
    &B_g := \{\strx\in \advlang\setminus(\podpool\cup \guesslang):\ \forall \str\in \Pod_{\nu(\strx)-1},\ \str\preceq \mathrm{succ}_\targetlang(\strx)\},
\\&B_b := (\advlang\setminus(\podpool\cup \guesslang))\setminus B_g,
\end{align}
where $\nu(\strx)$ is the first round in which $\strx$ appears among $\advlang_{\nu(\strx)}$.

Let $N_1$ be the time from which on all identified intersections are valid. We let $N$ be sufficiently large such that at $N$ we have already observed a full intersection twice since $N_1$.

\paragraph{Bounding $B_b$ and $B_g$.} We fix an $s$. By Lemma~\ref{lem:batched-rho}, there exists a map $\rho$ from bad strings to pod indices such that
$\Pod_{\rho(\strx)}$ contains at least $s$ elements, all $\prec \strx$, and each index has at most $k$ preimages.
This implies
\begin{equation}
|B_b\cap\targetlang_n|
\ \le k\big(
\frac{1}{s}\,|\podpool\cap\targetlang_n|\ +\bigo(N) +\bigo(s)\big).
\label{eq:bb}
\end{equation}

Furthermore, we can bound $|B_g \cap \targetlang_n|$ by mapping each $\strx \in B_g$ to the pod output at time $\nu(\strx) - 1$. Hence, at most $k$ strings get mapped to the same pod. For a string $\strx$ with $\nu(\strx) \geq s$, each previous pod contains at least $s$ elements, which are $\prec \strx$ by definition of $B_g$. Hence,
\begin{equation}
|B_g \cap \targetlang_n| \le k\big(\frac{1}{s}  |\podpool\cap\targetlang_n| + \bigo(N) + \bigo(s)\big)
\label{eq:bg}
\end{equation}

\paragraph{Charging pod elements to outputs.}
The only further major difference appears in \citeposs{kleinberg_language_2025} charging argument (their Eq.~(14)).
In our $k$-batched setting, at most $k$ new adversary elements can enter the pool per round, hence
\begin{equation}
|(\podpool\cap(\advlang\setminus \guesslang))\cap\targetlang_n|
\ \le\
k\,|\podpool \cap \guesslang\cap \targetlang_n| = k\,|\guesslang\cap \targetlang_n|,
\label{eq:ch}
\end{equation}
as $\guesslang \subseteq \podpool$.
\paragraph{Combining bounds.}
Every $\strx\in\advlang$ lies in exactly one of
\begin{equation}
(\podpool\cap(\advlang\setminus \guesslang)),\qquad (\podpool\cap \guesslang)\cap \advlang,\qquad B_b,\qquad B_g.
\end{equation}
Intersecting with $\targetlang_n$ and taking cardinalities yields
\begin{align*}
|\advlang\cap\targetlang_n|
&=
|(\advlang\cap\guesslang)\cap\targetlang_n|
+
|(\podpool\cap(\advlang\setminus\guesslang))\cap\targetlang_n|
+
|B_b\cap\targetlang_n|
+
|B_g\cap\targetlang_n|.
\end{align*}
The first term is at most $|\guesslang\cap\targetlang_n|$.
The second term is bounded by $k|\guesslang\cap\targetlang_n|$ by \Cref{eq:ch}.
The last two terms are bounded using \Cref{eq:bb} and \Cref{eq:bg}.
Altogether,
\begin{equation}
|\advlang\cap\targetlang_n|
\ \le\
(k+1)|\guesslang\cap \targetlang_n|+\frac{2k}{s}|\podpool\cap\targetlang_n|+\bigo(kN) + \bigo(ks)
\label{eq:C-upper}
\end{equation}

\paragraph{Bounding pool mass by outputs.}
We next bound $|\podpool\cap\targetlang_n|$ in terms of $|\guesslang \cap \targetlang_n|$.
Using that $\guesslang\subseteq \podpool$ (the algorithm always outputs from the pool),
we may decompose
\begin{equation}
\podpool\cap\targetlang_n
\subseteq
(\guesslang\cap\targetlang_n)\ \cup\ ((\podpool\cap(\advlang\setminus\guesslang))\cap\targetlang_n).
\end{equation}
Taking cardinalities and applying \Cref{eq:ch} gives
\begin{equation}
|\podpool\cap\targetlang_n|
\ \le\
|\guesslang\cap\targetlang_n|
+
|(\podpool\cap(\advlang\setminus\guesslang))\cap\targetlang_n|
\ \le\
(k+1)|\guesslang\cap \targetlang_n|.
\label{eq:P-upper}
\end{equation}

Substituting \Cref{eq:P-upper} into \Cref{eq:C-upper} yields
\begin{equation}
    |\advlang\cap\targetlang_n|
\ \le\
\Bigl((k+1)+\frac{2k(k+1)}{s}\Bigr)|\guesslang \cap \targetlang_n| + \bigo(kN + ks).
\end{equation}
Dividing by $n$ and taking $\liminf_{n\to\infty}$, we obtain
\begin{equation}
    \liminf_{n\to \infty} \frac{|\advlang \cap \targetlang_n|}{n} \le\
\Bigl((k+1)+\frac{2k(k+1)}{s}\Bigr)\liminf_{n \to \infty} \frac{|\guesslang \cap \targetlang_n|}{n}.
\end{equation}

Letting $s\to\infty$ gives
\begin{equation}
    \setlowerrecall{\exhaustion{\targetlang}}{\guesslang} \geq \frac{1}{k+1} \setlowerrecall{\exhaustion{\targetlang}}{\advlang}
\end{equation}
which proves the claim.
\end{proof}

\begin{lemma}[Batched bad-string pod map]
\label{lem:batched-rho}
Fix $s\in\N$ and consider the pods generator algorithm (Algorithm~\ref{alg:pods-generator})
in the $k$-batched setting.
There exists a finite time $N$ and a map
\begin{equation}
    \rho:\ \{\strx\in B_b : \nu(\strx)>N\}\to \N
\end{equation}
such that for every $\strx\in B_b$ with $\nu(\strx)>N$:
\begin{enumerate}
    \item all elements of $\Pod_{\rho(\strx)}$ are $\prec \strx$ and $|\Pod_{\rho(\strx)}|\ge s$;
    \item (\emph{bounded multiplicity}) each index $n'\in\N$ has at most $k$ preimages:
    $|\rho^{-1}(n')|\le k$.
\end{enumerate}
\end{lemma}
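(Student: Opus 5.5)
We follow the structure of the bad-string argument in \citet[Thm.~3.5]{kleinberg_language_2025}, and only change the counting step that depends on how many strings the adversary reveals per round. Fix $s$ and choose $N$ as in the proof of \Cref{lem:alpha-over-3}. Concretely, $N \ge N_1$, where from $N_1$ on all identified intersections are valid and consecutive ones are comparable (eventual validity and comparability carry over to the batched setting). We also require that a full identified intersection has occurred at least twice after $N_1$, and that $s_n \ge s$ for all $n \ge N$.

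Now take $\strx\in B_b$ with $\nu(\strx)>N$. By definition of $B_b$, $\strx$ is never placed in a pod and never output, yet $\strx \in \advlang \subseteq \targetlang$. Moreover, $\Pod_{\nu(\strx)-1}$ contains some $\str$ with $\str \succ \mathrm{succ}_\targetlang(\strx)$.

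The construction of $\rho(\strx)$ goes as follows. Let $m < \nu(\strx)$ be the last round before $\nu(\strx)$ at which $\intersect^{(m)}$ is full. Such an $m \ge N_1$ exists by the choice of $N$. At round $m$ we have $\strx \in \advlang \subseteq \intersect^{(m)}$, and $\strx$ is unused, since it is not yet revealed, not output, and not in any pod. We then trace the aggressive sets $\widetilde{\intersect}^{(n)}$ for $n \in [m, \nu(\strx)-1]$. By comparability, each $\widetilde{\intersect}^{(n)}$ equals $\intersect^{(n)}$ or $\intersect^{(n-1)}$.

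\begin{itemize}
\item If some round $n$ in this window has $\strx \in \widetilde{\intersect}^{(n)}$, then, because $\strx$ was not selected into $\Pod_n$ although it was available, $\Pod_n$ consists of $s_n \ge s$ elements, all $\prec \strx$. We set $\rho(\strx)$ to be the last such $n$.
\item Otherwise, we must rule this case out, or reduce it to the previous one. For this we use the element $\str \succ \mathrm{succ}_\targetlang(\strx)$ in $\Pod_{\nu(\strx)-1}$, which shows that the aggressive set at round $\nu(\strx)-1$ had skipped over $\strx$ and its successor. Together with the inclusion structure, this should force $\strx$ to lie in an earlier aggressive set, as in \citet{kleinberg_language_2025}.
\end{itemize}

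Bounded multiplicity is where the batch size enters. We would like to show $\rho(\strx) \in \{\nu(\strx)-1, \ldots\}$ with a structure that is injective per revelation round. The natural claim is that $\rho(\strx)$ depends only on $\nu(\strx)$. Then distinct $\strx$ sharing an image must be revealed in the same round, giving at most $k$ preimages because $|\Delta\advlang_n| \le k$.

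If $\rho$ does not literally depend only on $\nu(\strx)$, we instead argue as Kleinberg--Wei do for $k=1$: two bad strings revealed in different rounds cannot be charged to the same pod, since the later one would have to lie in that pod's aggressive set at a time when the earlier had already blocked it. We then replace their uniqueness by "at most one round's worth", i.e.\ at most $k$ strings.

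The main obstacle is the second case above: showing that a bad string must have been inside some aggressive set at a round whose pod was filled with smaller elements. This requires carefully reproducing the case analysis of \textsc{AggrSet} (the shrinking case versus the incomparable fallback) and checking that multiple simultaneous revelations do not break it. We would first verify that the single-step proof uses revelation-per-round only in the multiplicity count.
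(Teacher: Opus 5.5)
\textbf{There is a genuine gap in property 2 (bounded multiplicity).}

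Property 1 is fine, and easier than you suggest. Since $\intersect^{(m)}$ is full, you have $\strx\in\advlang\subseteq\intersect^{(m)}\subseteq\widetilde{\intersect}^{(m)}$. So round $m$ lies in your window, and the second case you call the main obstacle is empty. (You need $s_n\ge s$ from $N_1$ on, not just from $N$ on, because $m$ may precede $N$.)

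The real gap is property 2, which is the only point where the batched lemma differs from the single-step one. Your $\rho(\strx)$ is the last round before $\nu(\strx)$ at which $\strx$ \emph{itself} lies in an aggressive set. That depends on $\strx$, not only on $\nu(\strx)$, so your ``natural claim'' does not follow from the definition. Nothing you say excludes the following situation:
\begin{itemize}
\item two bad strings $\strx_1,\strx_2$ are revealed at rounds $\nu_1<\nu_2$;
\item both leave the aggressive set after the same round $n$;
\item the upward move triggered by $\strx_1$ at round $\nu_1$ lands on a set that contains $\strx_1$ but still excludes $\strx_2$.
\end{itemize}
Then both strings get the value $n$, and with several such strings a single index can exceed $k$ preimages. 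Your blocking fallback does not help. Revealing $\strx_1$ makes $\strx_1$ used but blocks nothing for $\strx_2$, and both sat together in $\widetilde{\intersect}^{(n)}$ before either was revealed. Note also that your construction never uses the defining property of $B_b$ in an essential way. That is a symptom: without it there is nothing tying $\rho$ to the revelation round.

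\textbf{The missing idea:} define $\rho$ through the identified intersection at the revelation round, not through $\strx$'s own history.
\begin{enumerate}
\item At round $\nu(\strx)-1$, $\strx$ is unused. $\Pod_{\nu(\strx)-1}$ contains an element $\succ\mathrm{succ}_\targetlang(\strx)\succ\strx$, so $\strx\notin\widetilde{\intersect}^{(\nu(\strx)-1)}\supseteq\intersect^{(\nu(\strx)-1)}$.
\item On the other hand, $\strx\in\intersect^{(\nu(\strx))}$ by consistency.
\item Hence at round $\nu(\strx)$ the identified intersection makes an upward move. The paper argues, following Kleinberg--Wei's Lemma~3.6, that it returns to a set that was the identified intersection at some earlier round.
\item Set $\rho(\strx)$ to be the \emph{largest} $n<\nu(\strx)$ with $\intersect^{(n)}=\intersect^{(\nu(\strx))}$.
\end{enumerate}

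Property 1 then follows exactly as in your first case: $\strx\in\intersect^{(n)}\subseteq\widetilde{\intersect}^{(n)}$ and $\strx$ is unused at round $n$.

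Property 2 becomes immediate. $\rho$ now depends only on $\nu(\strx)$. It is also injective in $\nu$: if $\nu_1<\nu_2$ both mapped to $n$, then $\intersect^{(\nu_1)}=\intersect^{(n)}=\intersect^{(\nu_2)}$ with $n<\nu_1<\nu_2$. This contradicts the maximality of $n$ for $\nu_2$. So each index receives only the strings from one round, i.e.\ at most $|\Delta\advlang_{\nu}|\le k$ preimages.
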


\begin{proof}
The proof follows \citet[Lem.~3.6]{kleinberg_language_2025}. Let $N_1$ be the time from which on all identified intersections are valid. We let $N$ be sufficiently large such that at $N$ we have already observed a full intersection twice since $N_1$. Furthermore, we assume $n \geq s$ for the rest of the proof. Fix $\strx\in B_b$ with appearance time $\nu(\strx)$, yielding $\strx \in \intersect^{(\nu(\strx))}$. $\strx\in B_b$ means that $\strx \notin \podpool \cup \guesslang$ and hence in no previous pod, however, a pod element beyond $\mathrm{succ}_\targetlang(\strx)$ is in $\Pod_{\nu(\strx)-1}$. Hence, $\strx\notin \intersect^{(\nu(\strx)-1)}$ and $\intersect^{(\nu(\strx)-1)}$ is inconsistent
at round $\nu(\strx)$ and the identified intersection must make an upward move to a \emph{superset} within the prior chain,
exactly as in \citet[Lem.~3.6]{kleinberg_language_2025}. Consequently, there exists a time $n<\nu(\strx)$ such that the identified intersections at times $n$ and $\nu(\strx)$
coincide, and we define $\rho(\strx)\coloneqq n$. If there are multiple previous times at which the identified intersections coincide with the one at $\nu(\strx)$, we choose the largest such index $n$.

Consider the pod $\Pod_{n}$ created at time $n$.
By construction, $\Pod_{n}$ contains the $s_{n}$ smallest unused strings in the aggressive set, which is a superset of the corresponding identified
intersection. Because $\strx$ first appears at time $\nu(\strx)$ and $\strx\notin \podpool\cup \guesslang$, it was unused at time $n$; however, it appeared in the identified intersection and the aggressive set at that time. Thus every element added to $\Pod_{n}$ must be $\prec \strx$, and since $n\ge s$ for all sufficiently
large times, we also have $|\Pod_{n}|=s_{n}\ge s$.

Finally, the only difference from \citet[Lem.~3.6]{kleinberg_language_2025} is that $\rho$
is no longer injective: in the $k$-batched model, at most $k$ new strings can arrive in a single round, possibly triggering one upward move together. Each string $\strx \in B_b$ from that round yields the same $\rho(\strx)$. However, strings arriving in later rounds will be mapped to another index. Hence, $|\rho^{-1}(n)|\le k$ for all $n$.
\end{proof}

\subsection{Upper recall bound}\label{app:upperrecall-batched-adv}
\LemmaBatchedAdversaryUpper*

\begin{proof}
Assume any countable collection of languages $\coll$. Assume a single-step novel generator $\gen$ that satisfies $\setlowertailprecision{\targetlang}{\exhaustion{\guesslang}}= 1$. Let $\exhaustion{\targetlang}=\{\targetlang_n\}_{n=0}^\infty$ be an enumeration of $\targetlang$.
\paragraph{Full exhaustion.} Assume first that the adversary fully reveals $\targetlang$, i.e., $\advlang_n\uparrow \targetlang$ and hence $\setlowerrecall{\exhaustion{\targetlang}}{\advlang}=1$.
We define an adversary exhaustion that forces
\begin{equation}\label{eq:full-case}
    \setlowerrecall{\exhaustion{\targetlang}}{\guesslang}\le \frac{1}{k+1}.
\end{equation}

Let $\exploration\defeq\{2^m:m\in\N_0\}$ be the set of so-called repetition steps and define $\explorationidx(n)\defeq|\exploration\cap\{1,\dots,n\}|$. At each timestep $n\in\N$, the adversary reveals a set of at most $k$ new strings as follows. If $n\notin \exploration$, it reveals the $k$ smallest strings in $\targetlang$ (with respect to the ordering induced by the enumeration $\exhaustion{\targetlang}$)
    that have not yet been used by either player, or more formally, it outputs the unused strings from $\targetlang_{i_n}$, where $i_n$ is the smallest index such that $|\targetlang_{i_n} \setminus (\advlang_{n-1} \cup \guesslang_{n-1})| = k$ . If $n\in \exploration$, it reveals the $(k-1)$ smallest such unused strings and, in addition,
    the smallest string in $\targetlang$ that has previously been output by $\gen$ but has not yet been revealed by the adversary (if such a string exists).
This exhaustion is $\func$-bounded with $\func(n)=k$ and satisfies $|\Delta \advlang_n|\le k$ by construction. Moreover, by construction, the adversary outputs all strings in $\guesslang \cap \targetlang$, and hence $\advlang_n\uparrow\targetlang$.

Now consider the first $n$ interaction steps. The generator outputs at most $n$ strings in total.
On each non-repetition step $n' \notin \exploration$ with $n'\leq n$ the adversary reveals the $k$ smallest unused strings, so apart from the $\explorationidx(n)$
repetition steps, each block of $k+1$ newly used strings can contain at most one generator output.
Consequently, among the first $(k+1)n$ strings in $\targetlang_{(k+1)n}$, the generator can cover at most $n + \explorationidx(n)$ of them:
\begin{equation}\label{eq:counting}
    \bigl|\guesslang \cap \targetlang_{(k+1)n}\bigr| \le n + \explorationidx(n).
\end{equation}
Divide by $(k+1)n$ and take $\liminf_{n\to\infty}$. Since $r(n)=\bigo(\log n)$ for powers of two, we have $\explorationidx(n)/n\to 0$, and thus
\begin{equation}
    \liminf_{n\to\infty}\frac{|\guesslang\cap \targetlang_n|}{|\targetlang_n|}
\le
\frac{1}{k+1},
\end{equation}
which is exactly \Cref{eq:full-case}.

\paragraph{Partial exhaustion.}
Now let $\advlang\subseteq\targetlang$ be arbitrary, and let the adversary reveal only strings from $\advlang$ using the same strategy as above,
with smallest string interpreted with respect to the order induced by $\exhaustion{\targetlang}$ restricted to $\advlang$.
Consider a collection $\coll$ that contains both $\advlang$ and $\targetlang$. Crucially, the revealed exhaustion is compatible with either choice of true language in $\{\advlang,\targetlang\}$. Since the generator's behavior depends only on the revealed history, it produces the same output stream in both cases. Therefore, because the generator must eventually reach tail precision $1$ when the true language is $\advlang$,
it can output only finitely many strings outside $\advlang$.
Hence, $\bigl|\guesslang \cap (\targetlang_n\setminus \advlang)\bigr| = \bigo(1).$
We apply the same counting argument as in the full case:
\begin{equation}\label{eq:pointwise}
    |\guesslang\cap \targetlang_n|=|\guesslang\cap (\advlang\cap \targetlang_n)| + \bigo(1) \le \frac{|\advlang\cap \targetlang_n|}{k+1} + \bigo(\log |\advlang\cap \targetlang_n|)
\end{equation}
Dividing by $|\targetlang_n|$ and taking $\liminf_{n\to\infty}$, the second term vanishes and thus
\begin{equation}
\setlowerrecall{\exhaustion{\targetlang}}{\guesslang}
=
\liminf_{n\to\infty}\frac{|\guesslang\cap \targetlang_n|}{|\targetlang_n|}\le \frac{1}{k+1}\liminf_{n\to\infty}\frac{|\advlang \cap \targetlang_n|}{|\targetlang_n|}
=
\frac{1}{k+1}\,\setlowerrecall{\exhaustion{\targetlang}}{\advlang}.  
\end{equation}
\end{proof}
\newpage

\section{Generation with novelty}%

\subsection{Generation with novelty but without perfect tail precision}\label{app:gen-with-novelty}

\TheoremGenerationWithNovelty*

\begin{proof}
Fix an arbitrary enumeration $\exhaustion{E}=\{E_n\}_{n=1}^\infty$ of $\kleene{\alphabet}$. Choose an exploration set $\exploration\subset\N$ according to \Cref{def:exploration-set} and let $\explorationidx(m)\defeq |\exploration\cap\{1,\dots,m\}|$. We will schedule exploration rounds on each $\explorationidx \in \exploration$. We construct $\exhaustion{\guesslang}=\{\guesslang_n\}_{n= 0}^\infty$ iteratively as described in \cref{alg:gen-with-novelty}. By construction, $\guesslang_n$ increases by $1$ in each round, as the outputs are novel. Moreover, exploration happens only whenever the current timestep reaches the next value in $\exploration$. This will ensure that for $\guesslang_n$ with $|\guesslang_n| = n$, the number of exploratory strings is at most $\explorationidx(n)$. We first prove the recall bound.

\paragraph{The $1-\setupperrecall{\exhaustion{\targetlang}}{\advlang}$ bound.}
We show that Algorithm~\ref{alg:gen-with-novelty} generates every string in $\targetlang\setminus\advlang$.
Fix any $\strx\in\targetlang\setminus\advlang$, and let $l$ be such that $\Delta E_l=\{\strx\}$.
Since $\strx\notin\advlang$, the adversary never outputs $\strx$, so $\strx\notin \advlang_m$ at all times $m$.
On exploration rounds, the algorithm outputs the first $\Delta E_j$ not in $\guesslang_{m-1}\cup \advlang_m$ and advances $j$. Exploration rounds occur infinitely often and $\strx$ cannot be made unavailable by $\advlang$, as it does not occur in any $\advlang_m$. Hence, $\strx$ is either produced in a safe round or, if not, the pointer $j$ reaches $l$ eventually, and the algorithm outputs $\strx$ in an exploration round.
Thus $\guesslang$ contains $\targetlang\setminus\advlang$.

Applying Lemma~\ref{lem:upperdensitynoveltyalgo} yields
\begin{equation}
\setlowerrecall{\exhaustion{\targetlang}}{\guesslang}\ \ge\ 1-\setupperrecall{\exhaustion{\targetlang}}{\advlang}.
\end{equation}

\paragraph{The $\frac{\setlowerrecall{\exhaustion{\targetlang}}{\advlang}}{3}$ bound.}
Consider only the safe rounds (those with $m\notin \exploration$).
Since $\exploration$ has no consecutive integers, between two consecutive safe rounds, there is at most one exploration round. Hence, between running the safe procedure with intersection and pods updates, at most two adversary strings are buffered. Therefore, the safe subroutine is executed in a $2$-batched regime. 
Although exploration outputs may change the future composition of the pods,
the charging argument from \Cref{lem:alpha-over-3} still applies directly to
the interleaved routine. Between two consecutive safe rounds, the adversary
reveals at most two new strings. Moreover, whenever an exploration round
makes a string unavailable to the safe routine, that string already belongs
to $\guesslang$ and therefore cannot create an additional missed string in
$\advlang\setminus(\podpool\cup\guesslang)$. Any pod element that is neither
revealed by the adversary nor generated during exploration remains available
and is eventually output by a safe round. Hence, the charging bounds of
\Cref{lem:alpha-over-3} hold with $k=2$ for the generated language
$\guesslang$, yielding
\begin{equation}
\setlowerrecall{\exhaustion{\targetlang}}{\guesslang}
\ge
\frac{\setlowerrecall{\exhaustion{\targetlang}}{\advlang}}{3}.
\end{equation}

Combining both bounds gives
\begin{equation}
\setlowerrecall{\exhaustion{\targetlang}}{\guesslang}\ge\ \max \Big( 1-\setupperrecall{\exhaustion{\targetlang}}{\advlang} , \frac{\setlowerrecall{\exhaustion{\targetlang}}{\advlang}}{3}\Big).
\end{equation}
Note that we cannot simply add up the bounds. This is because \citeposs{kleinberg_language_2025} routine guarantees generation from $\targetlang$, not from $\advlang$, and might hence generate solely from $\targetlang \setminus \advlang$.
\paragraph{Precision.} As all outputs are novel, $|\guesslang_n| = n$. For each $\guesslang_n$, the number of outputs from explorations (and hence possible hallucinations) is at most $\explorationidx(n)$; furthermore, \citeposs{kleinberg_language_2025} algorithm guarantees stabilization of tail precision to $1$ in finite time, meaning that all outputs from safe rounds except finitely many lie in $\targetlang$.
Hence
\begin{equation}
    |\guesslang_n\cap\targetlang| \ge n - \explorationidx(n) - \bigo(1)
\end{equation}
and therefore
\begin{align}
    \setlowerprecision{\targetlang}{\exhaustion{\guesslang}}
&= \liminf_{n\to\infty}\frac{|\guesslang_n\cap\targetlang|}{|\guesslang_n|}
\\&\ge \liminf_{n\to\infty}\left(1-\frac{\explorationidx(n) + \bigo(1)}{n}\right)
\\&= 1.
\end{align}

\end{proof}

\begin{algorithm}[t]
\caption{Language generation in the limit with novelty and without perfect tail precision. \textsc{UpdateChain} takes the adversary's input and a sequence of languages and returns an updated sequence of languages and a chain of intersections of consistent languages; \textsc{IdIntersect} takes the timestep and a level index, two intersection chains and an identified intersection and returns the next level index and the next identified intersection; \textsc{AggrSet} takes two identified intersections and two intersection chains and returns an aggressive set, i.e., a superset of the current identified intersection. \textbf{Input:} adversarial exhaustion $\{\advlang_n\}_{n= 0}^\infty$ with $\advlang_n \uparrow \advlang \subseteq \targetlang$, enumeration $\exhaustion{E}$ of $\kleene{\alphabet}$, infinite exploration set $\exploration \subset \N$, pod size sequence $(s_n)_{n\geq 1}$, ordered sequence $(\langs'_i)_{i\ge1}$ of languages in $\coll$ \textbf{Ensures:} Increasing exhaustion $\exhaustion{\guesslang}=\{\guesslang_n\}_{n= 0}^\infty$.}
\label{alg:gen-with-novelty}
\begin{algorithmic}[1]
\STATE $\guesslang_0 \leftarrow \emptyset$, $n\leftarrow 1$, $j \leftarrow 1$, $\unavailableset_0 \leftarrow \emptyset$ \COMMENT {unavailable strings}, $\podpool_0 \leftarrow \emptyset$ \COMMENT{pod pool}, $(\langs^{(0)}_i)_{i\ge1} \leftarrow (\langs'_i)_{i\ge1}$\COMMENT {consistent languages}, $k_0 \leftarrow 1$, $\advlang_{\text{prev}} \gets \emptyset$, $\intersect^{(0)}\gets\emptyset$, $\chain_0\gets\emptyset$ 

\STATE $\explorationidx_{\text{next}} \leftarrow \min(\exploration)$

\FOR{$m = 1,2,3,\dots$}
    \STATE Observe $\advlang_m$
    \IF{$m \neq \explorationidx_{\text{next}}$}
        
         \STATE $\bigl((\langs^{(n)}_i)_{i\ge1}, \chain_n\bigr) \gets$
         $\textsc{UpdateChain}\bigl(\advlang_m \setminus \advlang_{\text{prev}}, (\langs^{(n-1)}_i)_{i\ge1}\bigr)$
        \STATE $(\intersect^{(n)}, k_n) \gets$
        \textsc{IdIntersect}$(n, k_{n-1}, \intersect^{(n-1)}, \chain_{n-1}, \chain_n)$
        \STATE $\widetilde{\intersect}^{(n)} \gets$\textsc{AggrSet}$(n,\intersect^{(n-1)},\intersect^{(n)}, \chain_{n-1}, \chain_n)$
        \STATE $\unavailableset_n \gets \unavailableset_{n-1} \cup \advlang_m \cup \guesslang_{m-1}$. 
        \STATE $\Pod_n\gets$ the $s_n$ $\prec$-smallest elements of $\widetilde{\intersect}^{(n)}\setminus\unavailableset_n$
        \STATE $\unavailableset_n \gets \unavailableset_n \cup \Pod_n$
        \STATE $\podpool_n \gets \podpool_{n-1} \cup \Pod_n$.
        \STATE $\guesslang_m \gets \guesslang_{m-1} \cup \min_{\prec}(\podpool_n\setminus (\advlang_m \cup \guesslang_{m-1}))$
        \STATE $\unavailableset_n \gets \unavailableset_n \cup \guesslang_m$
        \STATE $\advlang_{\text{prev}} \gets \advlang_m$
        \STATE $n \gets n+1$ \LineComment{safe rounds}
    \ELSE
        \WHILE{$E_j \subseteq \advlang_m \cup \guesslang_{m-1}$}
            \STATE $j \gets j + 1$
        \ENDWHILE
        \STATE $\guesslang_m \gets \guesslang_{m-1} \cup \Delta E_j$
        \STATE $j \gets j + 1$
        \STATE $\explorationidx_{\text{next}} \leftarrow \min(\{\explorationidx \in \exploration \colon \explorationidx > \explorationidx_{\text{next}}\})$ \LineComment{exploration}
    \ENDIF
\ENDFOR
\end{algorithmic}
\end{algorithm}
\section{Generation with \texorpdfstring{$\gamma$}{gamma}-novelty}\label{app:gen-with-gamma-novelty}

\begin{lemma}\label{app:gamma-exploration-existence}
    There exists a $\gamma$-admissible exploration set (\cref{def:gamma-admissible-exploration}) for every $\gamma<1$.
\end{lemma}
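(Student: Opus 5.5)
We prove the lemma by an explicit construction: take the powers of two beyond a threshold that depends on $\gamma$. Since $\gamma<1$, we have $1-\gamma>0$. Set $\exploration \defeq \{2^i \colon i \ge i_0\}$ with $i_0\ge 1$ chosen below. We check the requirements of \Cref{def:exploration-set} and then the admissibility bound of \Cref{def:gamma-admissible-exploration}.

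The exploration-set properties are immediate. The set $\exploration$ is infinite. It does not contain $1$, because $i_0\ge 1$. It contains no two consecutive integers, because all its elements are even. Finally, $\explorationidx(m)\le \lfloor\log_2 m\rfloor$, so $\explorationidx(m)/m\to 0$.

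The main step is $\gamma$-admissibility, which must hold for \emph{every} $m\in\N$ and not only asymptotically; this is where the choice of $i_0$ matters. Write $c\defeq 1-\gamma>0$.
\begin{itemize}
\item For $m<2^{i_0}$ we have $\explorationidx(m)=0\le cm$.
\item For $m\ge 2^{i_0}$ we have $\explorationidx(m)=\lfloor\log_2 m\rfloor-i_0+1\le \log_2 m$.
\end{itemize}
So it suffices that $\log_2 m\le cm$ for all $m\ge 2^{i_0}$. The function $m\mapsto \log_2 m/m$ is decreasing for $m\ge 3$ and tends to $0$. Hence we can choose $i_0\ge 2$ large enough that $\log_2(2^{i_0})/2^{i_0}=i_0/2^{i_0}\le c$, and the inequality then holds for every $m\ge 2^{i_0}$.

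To avoid the monotonicity remark, one can bound directly instead. For $m\ge 2^{i_0}$, write $2^{i}\le m<2^{i+1}$ with $i\ge i_0$. Then
\begin{equation}
\explorationidx(m)\le i-i_0+1\le i\le c\,2^{i}\le cm,
\end{equation}
where the step $i\le c\,2^{i}$ uses that $i/2^i$ is nonincreasing for $i\ge 1$ and that $i_0/2^{i_0}\le c$.

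This shows $\explorationidx(m)\le(1-\gamma)m$ for all $m\in\N$, so $\exploration$ is a $\gamma$-admissible exploration set. For $\gamma=0$ the plain example $\{2^i\colon i\in\N\}$ already works, since $\explorationidx(m)\le m$ trivially.
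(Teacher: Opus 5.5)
Your proof is correct and follows essentially the same idea as the paper: an explicit doubling sequence whose starting point depends on $\gamma$, so that the bound $\explorationidx(m)\le(1-\gamma)m$ holds for every prefix and not just asymptotically. The only difference is that the paper scales the sequence instead of truncating it. It takes $\exploration=\{M2^i : i\in\N_0\}$ with $M\ge\max\{2,\lceil 1/(1-\gamma)\rceil\}$, which gives $\explorationidx(m)\le m/M\le(1-\gamma)m$ directly, without your monotonicity argument for $i/2^i$.
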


\begin{proof}
 Let $M\ge \max\{2,\lceil 1/(1-\gamma)\rceil\}$ and set $\exploration=\{M2^i:i\in\N_0\}$. Then $\exploration$ is infinite, has no consecutive elements and satisfies $\explorationidx(m)/m\to0$. Furthermore, 
 \begin{equation}
 \explorationidx(m) \leq \frac{m}{M} \leq (1-\gamma)m.
 \end{equation}
 \end{proof}
\subsection{Generation with \texorpdfstring{$\gamma$}{gamma}-novelty without perfect tail precision}\label{app:gen-with-gamma-novelty-without-tail-prec}
\TheoremGenerationWithGammaNovelty*
\begin{proof} \Cref{alg:gen-with-gamma-novelty} combines safe rounds with exploration rounds scheduled according to a $\gamma$-admissible exploration set. Since exploration rounds are not consecutive and the safe routine is not advanced during exploration rounds, the safe routine sees a $2$-batched adversary. The exploration rounds output elements from an enumeration $\exhaustion{E}$ of $\kleene{\alphabet}$. In those rounds, we allow the generator to output strings that have been previously revealed by the adversary while it skips strings that already belong to $\guesslang_n$. Thus, $|\guesslang_n|=n$. An exploration round either does not influence the safe rounds by outputting a string that the safe rounds would never produce or makes the safe routine advance more quickly. The proof of precision is analogous to \Cref{thm:noveltybound}. It remains to show that $\gamma$-novelty is satisfied and that the recall is $1$.

\paragraph{$\gamma$-novelty.} The algorithm chooses a $\gamma$-admissible exploration set $\exploration$.
All safe rounds produce novel outputs, i.e., outputs that are chosen outside $\advlang_n \cup \guesslang_{n-1}$. Therefore, non-novel outputs can occur only on exploration rounds. Let $N_n$ be the cumulative novel set from \Cref{def:gamma-novelty}. 
For every timestep $n$, by \Cref{def:gamma-admissible-exploration}
\begin{equation}
    |\guesslang_n\setminus N_n|
\le
\explorationidx(n)
\le
(1-\gamma)n.
\end{equation}

Combined with $|\guesslang_n| = n$, it follows that
\begin{equation}
    \frac{|N_n|}{|\guesslang_n|}
=
1-\frac{|\guesslang_n\setminus N_n|}{|\guesslang_n|}
\ge \gamma.
\end{equation}
Thus, the generator is $\gamma$-novel.

\paragraph{Recall.} Let $\exhaustion{E}=\{E_j\}_{j=1}^\infty$ be the fixed enumeration of $\kleene{\alphabet}$. In each exploration round, the algorithm outputs the first element $\Delta E_j$ such that $\Delta E_j\nsubseteq\guesslang_{n-1}$. Since $\exploration$ is infinite, this exploration routine is invoked infinitely many times. We show that every element of $\kleene{\alphabet}$ is eventually generated. Fix $m\in\N$. For $m=1$, either $\Delta E_1$ is output in the first exploration round, or it has already been generated by a safe round. In either case, $\Delta E_1\in\guesslang$.

Now assume that, for some $m>1$, all elements in $E_{m^\star}$ for every $m^\star < m$ have already been added to $\guesslang$. Consider the next exploration round. When the exploration routine scans the enumeration $\exhaustion{E}$, it skips all $E_{m^\star}$ with $m^\star<m$, since they already belong to the current guessed
language. Hence, the routine reaches $E_m$. If $\Delta E_m\nsubseteq\guesslang_{n-1}$, it is output in this exploration round. If it is not output in this exploration round, then this can only be because
$\Delta E_m$ has been output in a previous safe round and thus $\Delta E_m \subseteq\guesslang_{n-1}$ already. In either case,
$\Delta E_m\subseteq\guesslang$.

By induction, every element of $\kleene{\alphabet}$ is eventually contained in $\guesslang$, and therefore $\kleene{\alphabet}\subseteq\guesslang$. Since
$\targetlang\subseteq\kleene{\alphabet}$, it follows that
$\targetlang\subseteq\guesslang$. Therefore,
\begin{equation}
    \setlowerrecall{\exhaustion{\targetlang}}{\guesslang}
    =
    \liminf_{n\to\infty}
    \frac{|\guesslang\cap\targetlang_n|}{|\targetlang_n|}
    =
    1.
\end{equation}

\end{proof}

\begin{algorithm}[t]
\caption{Language generation in the limit with $\gamma$-novelty and without perfect tail precision. \textsc{UpdateChain} takes the adversary's input and a sequence of languages and returns an updated sequence of languages and a chain of intersections of consistent languages; \textsc{IdIntersect} takes the timestep and a level index, two intersection chains and an identified intersection and returns the next level index and the next identified intersection; \textsc{AggrSet} takes two identified intersections and two intersection chains and returns an aggressive set, i.e., a superset of the current identified intersection. \textbf{Input:} adversarial exhaustion $\{\advlang_n\}_{n= 0}^\infty$ with $\advlang_n \uparrow \advlang \subseteq \targetlang$, enumeration $\exhaustion{E}$ of $\kleene{\alphabet}$, infinite $\gamma$-admissible exploration set $\exploration \subset \N$, pod size sequence $(s_n)_{n\geq 1}$, ordered sequence $(\langs'_i)_{i\ge1}$ of languages in $\coll$ \textbf{Ensures:} Increasing exhaustion $\exhaustion{\guesslang}=\{\guesslang_n\}_{n= 0}^\infty$.}
\label{alg:gen-with-gamma-novelty}
\begin{algorithmic}[1]
\STATE $\guesslang_0 \leftarrow \emptyset$, $n\leftarrow 1$, $j \leftarrow 1$, $\unavailableset_0 \leftarrow \emptyset$ \COMMENT {unavailable strings}, $\podpool_0 \leftarrow \emptyset$ \COMMENT{pod pool}, $(\langs^{(0)}_i)_{i\ge1} \leftarrow (\langs'_i)_{i\ge1}$\COMMENT {consistent languages}, $k_0 \leftarrow 1$, $\advlang_{\text{prev}} \gets \emptyset$, $\intersect^{(0)}\gets\emptyset$, $\chain_0\gets\emptyset$ 

\STATE $\explorationidx_{\text{next}} \leftarrow \min(\exploration)$

\FOR{$m = 1,2,3,\dots$}
    \STATE Observe $\advlang_m$
    \IF{$m \neq \explorationidx_{\text{next}}$}
        
         \STATE $\bigl((\langs^{(n)}_i)_{i\ge1}, \chain_n\bigr) \gets$
         $\textsc{UpdateChain}\bigl(\advlang_m \setminus \advlang_{\text{prev}}, (\langs^{(n-1)}_i)_{i\ge1}\bigr)$
        \STATE $(\intersect^{(n)}, k_n) \gets$
        \textsc{IdIntersect}$(n, k_{n-1}, \intersect^{(n-1)}, \chain_{n-1}, \chain_n)$
        \STATE $\widetilde{\intersect}^{(n)} \gets$\textsc{AggrSet}$(n,\intersect^{(n-1)},\intersect^{(n)}, \chain_{n-1}, \chain_n)$
        \STATE $\unavailableset_n \gets \unavailableset_{n-1} \cup \advlang_m \cup \guesslang_{m-1}$. 
        \STATE $\Pod_n\gets$ the $s_n$ $\prec$-smallest elements of $\widetilde{\intersect}^{(n)}\setminus\unavailableset_n$
        \STATE $\unavailableset_n \gets \unavailableset_n \cup \Pod_n$
        \STATE $\podpool_n \gets \podpool_{n-1} \cup \Pod_n$.
        \STATE $\guesslang_m \gets \guesslang_{m-1} \cup \min_{\prec}(\podpool_n\setminus (\advlang_m \cup \guesslang_{m-1}))$
        \STATE $\unavailableset_n \gets \unavailableset_n \cup \guesslang_m$
        \STATE $\advlang_{\text{prev}} \gets \advlang_m$
        \STATE $n \gets n+1$ \LineComment{safe rounds}
    \ELSE
        \WHILE{$E_j \subseteq \guesslang_{m-1}$}
            \STATE $j \gets j + 1$
        \ENDWHILE
        \STATE $\guesslang_m \gets \guesslang_{m-1} \cup \Delta E_j$
        \STATE $j \gets j + 1$
        \STATE $\explorationidx_{\text{next}} \leftarrow \min(\{\explorationidx \in \exploration \colon \explorationidx > \explorationidx_{\text{next}}\})$ \LineComment{exploration}
    \ENDIF
\ENDFOR
\end{algorithmic}
\end{algorithm}

\begin{corollary}\label{corr:full-alpha-nov-zero-tail-prec}
Fix any $\gamma\in[0,1)$. For any $\targetlang\in\coll$ and any single-step full
adversarial exhaustion $\exhaustion{\advlang}$ with
$\advlang_n\uparrow\targetlang$,
\Cref{alg:gen-with-gamma-novelty} generates a $\gamma$-novel exhaustion
$\exhaustion{\guesslang}=\{\guesslang_n\}_{n=0}^\infty$ with
$\guesslang_n\uparrow\guesslang$ such that
\begin{equation}
    \setlowerrecall{\exhaustion{\targetlang}}{\guesslang}=1,
    \qquad
    \setlowerprecision{\targetlang}{\exhaustion{\guesslang}}=1.
\end{equation}
\end{corollary}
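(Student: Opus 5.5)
This corollary is a direct specialization of \Cref{thm:gamma-noveltybound}. The plan is to check that a full single-step exhaustion with $\advlang_n\uparrow\targetlang$ is a particular case of the hypotheses there, namely a single-step adversarial exhaustion with $\advlang\subseteq\targetlang$. Since $\advlang=\targetlang\subseteq\targetlang$, the theorem applies verbatim. It says that \Cref{alg:gen-with-gamma-novelty}, run with a $\gamma$-admissible exploration set (which exists by \Cref{app:gamma-exploration-existence}), produces a $\gamma$-novel exhaustion with $\setlowerrecall{\exhaustion{\targetlang}}{\guesslang}=1$ and $\setlowerprecision{\targetlang}{\exhaustion{\guesslang}}=1$.

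For a self-contained account, I would briefly re-trace the three ingredients.

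First, $\gamma$-novelty. Safe rounds are novel by construction, so non-novel outputs can occur only on exploration rounds. Every output is a new string, so $|\guesslang_n|=n$. By $\gamma$-admissibility, $|\guesslang_n\setminus N_n|\le \explorationidx(n)\le(1-\gamma)n$, which gives $|N_n|/|\guesslang_n|\ge\gamma$.

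Second, recall. The exploration routine skips only strings already in $\guesslang$, so by the induction in the proof of \Cref{thm:gamma-noveltybound} every string of $\kleene{\alphabet}$ is eventually generated. Hence $\targetlang\subseteq\guesslang$, and by \Cref{lem:lower-recall} the lower recall equals $1$. In the full-exhaustion case the exploration rounds will mostly re-emit adversary strings. This is harmless for $\gamma$-novelty, because those re-emissions are counted against the $(1-\gamma)$ budget.

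Third, precision. The safe routine sees a $2$-batched adversary, so by \Cref{lem:alpha-over-3} (eventual validity of the identified intersection and of the aggressive sets) all but finitely many safe outputs lie in $\targetlang$. Exploration outputs number at most $\explorationidx(n)=o(n)$. Therefore $|\guesslang_n\cap\targetlang|\ge n-\explorationidx(n)-\bigo(1)$, and \Cref{th:lower-precision-bounded-exhaustions} yields lower precision $1$.

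The only point that needs care is confirming that exploration outputs do not break the safe routine's analysis. For this corollary the concern does not bite: recall comes entirely from exploration, and precision requires only eventual validity of the safe outputs, not the charging argument. Eventual validity is unaffected by which strings exploration removes from availability, since pods are always drawn from eventually valid aggressive sets. I therefore expect no real obstacle; the corollary is an immediate instance of \Cref{thm:gamma-noveltybound}.
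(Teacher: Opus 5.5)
Your proposal is correct and matches how the paper treats this corollary: it is \Cref{thm:gamma-noveltybound} specialized to $\advlang=\targetlang$. Your re-trace of $\gamma$-novelty (non-novel outputs only on the at most $\explorationidx(n)\le(1-\gamma)n$ exploration rounds), recall (exploration eventually emits all of $\kleene{\alphabet}$) and precision (eventual validity of safe outputs plus sparse exploration) follows the paper's proof of that theorem; the aside that exploration will ``mostly re-emit adversary strings'' depends on the enumeration $\exhaustion{E}$ and the adversary's timing and is not true in general, but your argument never uses it.
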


\subsection{Generation with \texorpdfstring{$\gamma$}{gamma}-novelty and with perfect tail precision}
\label{app:gamma-novel-perfect-tail}

We consider $\gamma$-novel generation under the requirement of perfect tail precision. In contrast to the strict novelty setting with $\gamma = 1$, a $\gamma$-novel generator for $\gamma \in [0,1)$ may use a controlled fraction of non-novel outputs. We use these non-novel outputs to repeat all the strings revealed by the adversary. In \Cref{alg:gen-with-gamma-novelty-perfect-tail}, we interleave safe rounds with repetition rounds.

\begin{algorithm}[t]
\caption{Language generation in the limit with $\gamma$-novelty and perfect tail precision. \textsc{UpdateChain} takes the adversary's input and a sequence of languages and returns an updated sequence of languages and a chain of intersections of consistent languages; \textsc{IdIntersect} takes the timestep and a level index, two intersection chains and an identified intersection and returns the next level index and the next identified intersection; \textsc{AggrSet} takes two identified intersections and two intersection chains and returns an aggressive set, i.e., a superset of the current identified intersection. 
\textbf{Input:} adversarial exhaustion $\{\advlang_n\}_{n=0}^\infty$ with
$\advlang_n \uparrow \advlang \subseteq \targetlang$, novelty parameter
$\gamma\in[0,1)$, pod size sequence $(s_n)_{n\geq 1}$, ordered sequence
$(\langs'_i)_{i\ge1}$ of languages in $\coll$.
\textbf{Ensures:} increasing exhaustion
$\exhaustion{\guesslang}=\{\guesslang_n\}_{n=0}^\infty$.}
\label{alg:gen-with-gamma-novelty-perfect-tail}
\begin{algorithmic}[1]
\STATE $L \leftarrow \left\lceil \frac{1}{1-\gamma}\right\rceil$
\STATE $\guesslang_0 \leftarrow \emptyset$, $n\leftarrow 1$,
$\unavailableset_0 \leftarrow \emptyset$ \COMMENT{unavailable strings},
$\podpool_0 \leftarrow \emptyset$ \COMMENT{pod pool},
$(\langs^{(0)}_i)_{i\ge1} \leftarrow (\langs'_i)_{i\ge1}$ \COMMENT{consistent languages},
$k_0 \leftarrow 1$, $\advlang_{\text{prev}} \gets \emptyset$, $\intersect^{(0)}\gets\emptyset$, $\chain_0\gets\emptyset$ 

\FOR{$m = 1,2,3,\dots$}
    \STATE Observe $\advlang_m$

    \IF{$m \equiv 0 \pmod L$ \AND $\advlang_m\setminus\guesslang_{m-1}\neq\emptyset$}
        \STATE $\guesslang_m \gets
        \guesslang_{m-1}\cup
        \{\min_{\prec}(\advlang_m\setminus\guesslang_{m-1})\}$
        \LineComment{repetition round}
    \ELSE
        \STATE $\bigl((\langs^{(n)}_i)_{i\ge1}, \chain_n\bigr) \gets$
        $\textsc{UpdateChain}\bigl(\advlang_m \setminus \advlang_{\text{prev}},
        (\langs^{(n-1)}_i)_{i\ge1}\bigr)$
        \STATE $(\intersect^{(n)}, k_n) \gets$
        \textsc{IdIntersect}$(n, k_{n-1}, \intersect^{(n-1)}, \chain_{n-1}, \chain_n)$
        \STATE $\widetilde{\intersect}^{(n)} \gets$
        \textsc{AggrSet}$(n,\intersect^{(n-1)},\intersect^{(n)}, \chain_{n-1}, \chain_n)$
        \STATE $\unavailableset_n \gets
        \unavailableset_{n-1} \cup \advlang_m \cup \guesslang_{m-1}$
        \STATE $\Pod_n\gets$ the $s_n$ $\prec$-smallest elements of
        $\widetilde{\intersect}^{(n)}\setminus\unavailableset_n$
        \STATE $\unavailableset_n \gets \unavailableset_n \cup \Pod_n$
        \STATE $\podpool_n \gets \podpool_{n-1} \cup \Pod_n$
        \STATE $\guesslang_m \gets
        \guesslang_{m-1} \cup
        \{\min_{\prec}(\podpool_n\setminus(\advlang_m\cup\guesslang_{m-1}))\}$
        \STATE $\unavailableset_n \gets \unavailableset_n \cup \guesslang_m$
        \STATE $\advlang_{\text{prev}} \gets \advlang_m$
        \STATE $n \gets n+1$ \LineComment{safe round}
    \ENDIF
\ENDFOR
\end{algorithmic}
\end{algorithm}

\begin{theorem}\label{thm:bounds-alpha-nov-tail-prec}
    Let $\coll$ be a countable collection of languages and let $\targetlang\in\coll$ be the target language. Let $\gamma\in[0,1)$ be the required fraction of novel strings. 
Fix an enumeration $\exhaustion{\targetlang}=\{\targetlang_n\}_{n=0}^\infty$ of $\targetlang$.
There exists a $\gamma$-novel generator $\gen$ such that, for any single-step adversarial exhaustion
$\exhaustion{\advlang}=\{\advlang_n\}_{n=0}^\infty$ with $\advlang_n\uparrow \advlang\subseteq\targetlang$
and $\setlowerrecall{\exhaustion{\targetlang}}{\advlang}\ge \alpha$, the generated single-step exhaustion
$\exhaustion{\guesslang}$ satisfies:
\begin{equation}
    \setlowerrecall{\exhaustion{\targetlang}}{\guesslang} \geq \alpha,
    \qquad
    \setlowerprecision{\targetlang}{\exhaustion{\guesslang}} = 1,
    \qquad
    \setlowertailprecision{\targetlang}{\exhaustion{\guesslang}} = 1.
\end{equation}
Moreover, the recall guarantee is tight in an adversarial sense.
\end{theorem}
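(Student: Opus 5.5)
We use \Cref{alg:gen-with-gamma-novelty-perfect-tail} with $L=\lceil 1/(1-\gamma)\rceil$. The proof has four parts: $\gamma$-novelty, perfect tail precision (which gives precision $1$), the recall bound $\alpha$ via repetition rounds, and tightness.

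\textbf{Novelty, validity, precision.} Every round outputs exactly one new string, because safe outputs come from the pod pool outside $\advlang_m\cup\guesslang_{m-1}$ and repetition outputs come from $\advlang_m\setminus\guesslang_{m-1}$. Hence $|\guesslang_m|=m$. Only repetition rounds can be non-novel, and they occur only at multiples of $L$. So $|\guesslang_m\setminus N_m|\le\lfloor m/L\rfloor\le(1-\gamma)m$, which gives $\gamma$-novelty.

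Repetition outputs lie in $\advlang\subseteq\targetlang$, so they are always valid. The safe rounds run the pods routine against an adversary that looks at most $(L{+}1)$-batched, or $2$-batched when $L\ge2$. As in the proof of \Cref{lem:alpha-over-3}, the identified intersection and the aggressive set are eventually valid; this argument uses only the structural facts of \citet[Lem.~2.5]{kleinberg_language_2025}, which hold for batched adversaries. Hence there is an $N$ with $\Delta\guesslang_m\subseteq\targetlang$ for all $m\ge N$. So $t_m=1$ eventually, giving $\setlowertailprecision{\targetlang}{\exhaustion{\guesslang}}=1$, and \Cref{lem:tail-precision-implies-precision} gives precision $1$. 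One edge case needs care: for $\gamma=0$ we get $L=1$, so every round is a repetition round whenever possible. We either handle this by noting that copying the adversary already works (\Cref{thm:gen-valid-without-novelty}), or we take $L\ge2$, which is still admissible.

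\textbf{Recall.} The key claim is $\advlang\subseteq\guesslang$. Fix $\strx\in\advlang$, revealed at time $\nu(\strx)$. At each later multiple of $L$, the repetition round outputs the $\prec$-minimum of $\advlang_m\setminus\guesslang_{m-1}$, unless that set is empty. Only finitely many strings of $\advlang_{\nu(\strx)}$ are $\prec\strx$. Each repetition round after $\nu(\strx)$ either outputs one of these strings, or outputs $\strx$, or $\strx$ is already in $\guesslang$. Since repetition rounds occur infinitely often, $\strx$ is output after finitely many of them. Then \Cref{lem:lower-recall} gives $\setlowerrecall{\exhaustion{\targetlang}}{\guesslang}\ge\liminf_n|\advlang\cap\targetlang_n|/|\targetlang_n|\ge\alpha$.

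The main obstacle is a subtlety: a string smaller than $\strx$ in $\prec$ may be revealed after $\nu(\strx)$ and jump ahead of $\strx$. Infinitely many such strings could starve $\strx$. We plan to fix this by changing the repetition rule to output the earliest-revealed pending string (FIFO), or the $\prec$-minimum among pending strings from $\advlang_{m'}$ for the smallest possible $m'$. Under FIFO, the pending strings ahead of $\strx$ are at most $|\advlang_{\nu(\strx)}|$, and each repetition round removes one of them, so $\strx$ is eventually output. The recall argument then goes through, and the other properties are unchanged by this modification.

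\textbf{Tightness.} We reuse the argument from \Cref{thm:gen-valid-without-novelty}. Take $\advlang\subsetneq\targetlang$ with both $\advlang,\targetlang\in\coll$ and $\setlowerrecall{\exhaustion{\targetlang}}{\advlang}=\alpha$. The generator must reach tail precision $1$ when the true language is $\advlang$. For single-step exhaustions, \Cref{prop:tail-prec-finite-time-conv} then forces finitely many outputs outside $\advlang$. The revealed stream is the same whichever of $\advlang,\targetlang$ is the target, so $|\guesslang\setminus\advlang|<\infty$ in the $\targetlang$ instance as well. Hence the recall is at most $\alpha$.
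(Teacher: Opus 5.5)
Your proof is correct and follows the paper's argument step by step: the same block algorithm with $L=\lceil 1/(1-\gamma)\rceil$ (with \Cref{thm:gen-valid-without-novelty} covering $\gamma=0$), the $\lfloor m/L\rfloor$ count of non-novel outputs, eventual validity of the $2$-batched pods routine combined with \Cref{lem:tail-precision-implies-precision}, $\advlang\subseteq\guesslang$ for recall, and the $\advlang$-versus-$\targetlang$ indistinguishability argument for tightness. Your FIFO repair of the repetition rule is valid but unnecessary, because the starvation you worry about cannot occur: $\prec$ is induced by an enumeration of $\kleene{\alphabet}$, so a pending $\strx$ has only finitely many $\prec$-predecessors in all of $\kleene{\alphabet}$, and while $\strx$ is pending each repetition round outputs a distinct string $\preceq\strx$, so the original $\prec$-minimum rule already works (this is exactly the paper's argument).
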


\begin{proof}
The case $\gamma = 0$ is covered in \Cref{thm:gen-valid-without-novelty}. Now let $\gamma \in (0,1)$ and let $L := \left\lceil \frac{1}{1-\gamma}\right\rceil$. We divide time into blocks of length $L$ and run \Cref{alg:gen-with-gamma-novelty-perfect-tail}. In each block, we run safe rounds using the pods algorithm by \citeauthor{kleinberg_language_2025} (see \cref{alg:pods-generator}) for $L-1$ rounds and take one repetition round at the end of the block. The subroutine for the pods algorithm is updated only on safe rounds, using all adversarial strings revealed since the previous safe round. Since there is at most one repetition round between two consecutive safe rounds, and the adversary is single-step bounded, the safe subroutine sees a $2$-batched adversary. On repetition rounds, the generator outputs the smallest adversary-revealed string that has not yet been generated. Formally, on a timestep $m$, a repetition round outputs the $\prec$-smallest element of $\advlang_m\setminus \guesslang_{m-1}$, if this set is nonempty. If $\advlang_m\setminus \guesslang_{m-1}=\emptyset$, the generator instead takes a safe round. The repetition outputs are always valid because $\advlang\subseteq \targetlang$. Furthermore, the repetition rounds do not interfere with the $2$-batched safe routine in any way, as they never output unused elements from pods that the safe routine might produce later.

\paragraph{$\gamma$-novelty.}
The safe rounds output only novel strings: the output is chosen outside $\advlang_m \cup \guesslang_{m-1}$, i.e., outside the strings already revealed by the adversary or previously generated. Only repetition rounds may be non-novel. Since there is at most one repetition round in each block of length $L$, and since the repetition round occurs at the end of the block, every $\guesslang_m$ contains at most one non-novel output per completed block. If $\advlang_m\setminus \guesslang_{m-1}=\emptyset$ on a repetition round, the generator takes a safe round instead, which can only increase the fraction of novel outputs.

Let $N_n$ denote the cumulative novel set from \Cref{def:gamma-novelty}. Consider the set $\guesslang_n\setminus N_n$ of non-novel strings generated up to timestep $n$. It holds that
\begin{equation}
    |\guesslang_n\setminus N_n|
    \le
    \left\lfloor \frac{|\guesslang_n|}{L}\right\rfloor
    \le
    \frac{|\guesslang_n|}{L}.
\end{equation}
Therefore, we have
\begin{equation}
\frac{|N_n|}{|\guesslang_n|}
=
1-\frac{|\guesslang_n\setminus N_n|}{|\guesslang_n|}
\ge
1-\frac{1}{L}
\ge
\gamma,
\end{equation}
where the last inequality follows from
$L\ge 1/(1-\gamma)$. Hence the generator is $\gamma$-novel.

\paragraph{Tail precision.}
All outputs from repetition rounds belong to $\advlang$, and therefore to $\targetlang$. On safe rounds, we
run the $2$-batched \citeposs{kleinberg_language_2025} algorithm. It guarantees stabilization of tail precision to $1$ in finite time, meaning that all outputs from safe rounds except finitely many lie in $\targetlang$. Hence, after some finite timestep, every generated string, whether produced by a safe round or by a repetition round, lies in $\targetlang$.
Thus
\begin{equation}
    \setlowertailprecision{\targetlang}{\exhaustion{\guesslang}}=1.
\end{equation}

\paragraph{Precision.} By \cref{lem:tail-precision-implies-precision}, a tail precision of $1$ also implies
\begin{equation}
    \setlowerprecision{\targetlang}{\exhaustion{\guesslang}} = 1.
\end{equation}

\paragraph{Recall.}
We show that every string revealed by the adversary is eventually generated.
Let $\str\in \advlang$. There exists a finite timestep $m^\star$ such
that $\str\in \advlang_m$ for all $m\ge m^\star$. From that time onward, unless $\str$ has
already been generated by a safe round, it remains in $\advlang_m\setminus \guesslang_{m-1}$. The repetition rounds continue to occur as long as $\advlang_m \setminus \guesslang_{m-1} \neq \emptyset$. Moreover, since $\prec$ is induced by an enumeration of $\kleene{\alphabet}$, each string has only finitely many $\prec$-predecessors. Hence, there are only finitely many elements that can be output in repetition rounds before outputting $\str$, meaning that $\str$ is generated eventually. Hence, every $\str \in \advlang$ is output eventually, which yields 
\begin{equation}
    \setlowerrecall{\exhaustion{\targetlang}}{\guesslang} = \liminf_{n\to\infty}
\frac{|\targetlang_n\cap \guesslang|}{|\targetlang_n|}
\ge
\liminf_{n\to\infty}
\frac{|\targetlang_n\cap \advlang|}{|\targetlang_n|}
= \setlowerrecall{\exhaustion{\targetlang}}{\advlang}
\ge
\alpha.
\end{equation}
The tightness of the recall bound in an adversarial sense follows from the same argument as in \Cref{thm:gen-valid-without-novelty}.
\end{proof}

The following corollary describes the special case of a full exhaustion $\exhaustion{\advlang}=\{\advlang_n\}_{n=0}^\infty$ with $\advlang_n \uparrow \targetlang$, meaning $\setlowerrecall{\exhaustion{\targetlang}}{\advlang} = 1$.
\begin{corollary}\label{corr:bounds-alpha-nov-perf-tail-prec}

Let $\coll$ be a countable collection of languages and let $\targetlang\in\coll$ be the target language. Let $\gamma\in[0,1)$ be the required fraction of novel strings. 
Fix an enumeration $\exhaustion{\targetlang}=\{\targetlang_n\}_{n=0}^\infty$ of $\targetlang$.
There exists a $\gamma$-novel generator $\gen$ such that, for every single-step full adversarial exhaustion
$\exhaustion{\advlang}=\{\advlang_n\}_{n=0}^\infty$ with $\advlang_n\uparrow \targetlang$, the generated single-step exhaustion
$\exhaustion{\guesslang}$ satisfies:
\begin{equation}
    \setlowerrecall{\exhaustion{\targetlang}}{\guesslang} = 1,
    \qquad
    \setlowerprecision{\targetlang}{\exhaustion{\guesslang}} = 1,
    \qquad
    \setlowertailprecision{\targetlang}{\exhaustion{\guesslang}} = 1.
\end{equation}
\end{corollary}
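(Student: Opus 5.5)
This follows directly from \Cref{thm:bounds-alpha-nov-tail-prec}, applied with $\advlang=\targetlang$. I will use the generator $\gen$ given by \Cref{alg:gen-with-gamma-novelty-perfect-tail}, or, for $\gamma=0$, the copying generator of \Cref{thm:gen-valid-without-novelty}. That theorem already shows that, against any single-step adversarial exhaustion with $\advlang_n\uparrow\advlang\subseteq\targetlang$, this generator is $\gamma$-novel and single-step, with $\setlowertailprecision{\targetlang}{\exhaustion{\guesslang}}=1$ and $\setlowerprecision{\targetlang}{\exhaustion{\guesslang}}=1$. These three properties hold regardless of whether the exhaustion is full, so they carry over unchanged.

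It remains to prove recall one. For a full exhaustion, $\advlang=\targetlang$, so
\begin{equation*}
\setlowerrecall{\exhaustion{\targetlang}}{\advlang}=\liminf_{n\to\infty}\frac{|\targetlang_n\cap\targetlang|}{|\targetlang_n|}=1.
\end{equation*}
This uses \Cref{lem:lower-recall} and the fact that $\targetlang_n\subseteq\targetlang$. \Cref{thm:bounds-alpha-nov-tail-prec} with $\alpha=1$ then gives $\setlowerrecall{\exhaustion{\targetlang}}{\guesslang}\ge 1$. Since recall is always at most one, it equals one.

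There is a more direct way to see this, which I would state as a sanity check. The recall part of the proof of \Cref{thm:bounds-alpha-nov-tail-prec} shows that every string of $\advlang=\targetlang$ is eventually output by a repetition round, unless a safe round outputs it first. Hence $\targetlang\subseteq\guesslang$, and the ratio $|\targetlang_n\cap\guesslang|/|\targetlang_n|$ equals one for every $n$.

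There is no real obstacle here. The only point to check is that the theorem's hypothesis $\setlowerrecall{\exhaustion{\targetlang}}{\advlang}\ge\alpha$ is met with $\alpha=1$. It is, because a full exhaustion has limit $\targetlang$ itself.
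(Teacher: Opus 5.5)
Your proposal is correct and follows the paper's route exactly. The paper obtains this corollary by specializing \Cref{thm:bounds-alpha-nov-tail-prec} (with the copying generator covering $\gamma=0$) to a full exhaustion, where $\setlowerrecall{\exhaustion{\targetlang}}{\advlang}=1$, so taking $\alpha=1$ gives recall one alongside the precision and tail-precision guarantees.
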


\newpage

\end{document}